\newtheorem{assumption}{\bf Assumption}
\newtheorem{theorem}{\bf Theorem}
\newtheorem{lemma}{\bf Lemma}
\newtheorem{remark}{\bf Remark}
\newtheorem{definition}{\bf Definition}
\newtheorem{corollary}{\bf Corollary}
\newtheorem{proposition}{\bf Proposition}
\title{Understanding Generalization of Federated Learning via Stability: Heterogeneity Matters}
\author{%
  Zhenyu Sun \\
  ECE\\
  Northwestern University\\
  \texttt{zhenyusun2026@u.northwestern.edu} \\
  \And
  Xiaochun Niu  \\
  IEMS\\
  Northwestern University\\
  \texttt{xiaochunniu2024@u.northwestern.edu} \\
  \And
  Ermin Wei \\
  ECE \& IEMS\\
  Northwestern University\\
  \texttt{ermin.wei@northwestern.edu} \\
}
\begin{document}

\maketitle

\begin{abstract}
   Generalization performance is a key metric in evaluating machine learning models when applied to real-world applications. Good generalization indicates the model can predict unseen data correctly when trained under a limited number of data. Federated learning (FL), which has emerged as a popular distributed learning framework, allows multiple devices or clients to train a shared model without violating privacy requirements. While the existing literature has studied extensively the generalization performances of centralized machine learning algorithms, similar analysis in the federated settings is either absent or with very restrictive assumptions on the loss functions. In this paper, we aim to analyze the generalization performances of federated learning by means of algorithmic stability, which measures the change of the output model of an algorithm when perturbing one data point. Three widely-used algorithms are studied, including FedAvg, SCAFFOLD, and FedProx, under convex and non-convex loss functions. Our analysis shows that the generalization performances of models trained by these three algorithms are closely related to the heterogeneity of clients' datasets as well as the convergence behaviors of the algorithms. Particularly, in the i.i.d. setting, our results recover the classical results of stochastic gradient descent (SGD).
\end{abstract}

\section{Introduction}

Federated learning (FL) has recently emerged as an important paradigm for distributed learning in large-scaled networks \cite{FLsurvey}. Unlike the traditional centralized learning, where a model is trained under a large dataset stored at the server \cite{CL1,CL2,CL3}, in federated learning the server hands over computation tasks to the clients, which in turn perform learning algorithms on their local data. After training locally, each client reports its updated model back to the server for model aggregation. The server then aggregates all clients' models to generate a new one that serves as the initialization for the next round of clients' local training. This process is repeated with periodic communication. This local-training framework ensures privacy-preserving and communication-efficient characteristics for federated learning in the sense that no data are transmitting to the server \cite{FLsurvey}.

FedAvg \cite{FedAvg}, the first proposed algorithm satisfying federated learning paradigm, implements stochastic gradient descent (SGD) to update local models, which is simple to implement. However, FedAvg suffers from slow speed of convergence when local data are highly heterogeneous because the local training steps drive the local models away from the global optimal model and towards the local optimal solution \cite{FedAvg,ConFedAvg}. This is called client-drift. To mitigate client-drift causing by data heterogeneity, two promising algorithms are proposed. FedProx \cite{FedProx} uses a proximal method such that the trained local model stays relatively close to the global model. Nevertheless, each client has to solve a proximal point optimization problem during each round which could be computationally expensive. Alternatively, SCAFFOLD \cite{SCAFFOLD} tries to correct for client-drift based on variance reduction. It is proved that SCAFFOLD outperforms FedAvg when the heterogeneity level of data is large and enjoys a faster convergence speed \cite{SCAFFOLD}. Besides, there are several other algorithms proposed later focusing on dealing with client-drift while improving convergence performances \cite{FedAC,FCO}. 

Most existing experimental and theoretical results of FL emphasize on convergence to empirical optimal solutions based on training datasets \cite{FedProx,SCAFFOLD,FedBE,ConFedProx} and often ignore their generalization properties. Generalization of FL 
 is important, as it measures the performance of trained models on unseen data by evaluating its testing error. There are only a few existing works studying the generalization properties.  Generalization bounds are provided for FL \cite{AFL,FedGDA-GT,ICLR23}, which ignores the algorithm choices. These works also require some restrictive assumptions, e.g., binary loss \cite{AFL,FedGDA-GT}, Bernstein condition \cite{ICLR23}.  
In \cite{MAML21,PFLgen}, generalization bounds for meta-learning and federated learning are established respectively, when losses are strongly convex and bounded. However, in many practical scenarios strong convexity does not hold and the loss function may be unbounded. We also note that bounds in \cite{MAML21,PFLgen} are based on uniform stability, which uses a supremum over all single point perturbations. These tend to be overly conservative compared to an alternative stability notion, on-average stability, which takes expectation instead of supremum. Moreover, for the above-mentioned works, the connection between data heterogeneity and generalization performances is not explicitly characterized. Therefore, in this paper, we use on-average stability analysis to obtain generalization bounds that clearly illustrate dependence of data heterogeneity as well as algorithm convergence speed of three widely-used algorithms: FedAvg, SCAFFOLD and FedProx. Our bounds are established under general convex and non-convex losses, which can be unbounded.

\subsection{Related work}

\paragraph{Convergence of federated learning algorithms.} Many recent studies are devoted to federated learning problems due to the increasing volume of data among heterogeneous clients and concerns on privacy leakage and communication cost associated with transmitting users' data for central processing \cite{FedAvg,SCAFFOLD}. FedAvg applies SGD for local updates of clients and suffers from slow convergence performances when the local datasets across clients are highly heterogeneous \cite{ConFedAvg}. To deal with data heterogeneity and improve convergence speed, FedProx adopts proximal methods for local training \cite{FedProx}  and has both convergence guarantees and improved numerical results. SCAFFOLD \cite{SCAFFOLD} borrows the idea from variance reduction methods \cite{SAGA} and shows that convergence rates can be highly improved, compared to FedAvg and FedProx. In \cite{FedNova}, the effects of heterogeneous objectives on solution bias and convergence slowdown are systematically investigated, and FedNova is proposed topreserve fast convergence. FedPD \cite{FedPD} views federated learning from the primal-dual perspective. In \cite{FedLin}, FedLin is aimed to deal with data heterogeneity and system heterogeneity of clients simultaneously. More related works are given therein \cite{FedDANE,FedSplit,FedPA,FedHybrid}.

\paragraph{Generalization of centralized and federated learning.} Generalization of centralized learning has gained attraction of researchers since several decades ago. Uniform convergence is commonly considered to bound the generalization error by means of VC dimension or Rademacher complexity \cite{JMLR10,FML,Yin19,Idan19}. However, uniform convergence sometimes renders the bound too loose to be meaningful \cite{Recht21}. The main reason is that uniform convergence only studies the model class but ignores training algorithms that generates the models. Taking training algorithms into consideration, the generalization bounds might be tighten, since we can directly ignore large amount of models which can never be the output of a specific algorithm. Algorithmic stability is a useful notion that specifically helps to investigate generalization errors by considering dependency on particular algorithms \cite{BE02}. Generalization bounds are built for several stochastic gradient-based methods via stability tools \cite{SGDgen,SGDdata18,SGLDgen}. In terms of federated learning, \cite{AFL} provides uniform convergence bound with rate $\mathcal{O}(1/\sqrt{n})$ for agnostic federated learning problems by Rademacher complexity under binary losses, and $n$ is the number of samples collected by all clients. \cite{ICLR23} studies the case when some clients do not participate during the training phase and establishes bounds with faster rate $\mathcal{O}(1/n)$ under Bernstein condition and bounded losses. And it further requires that clients' distributions are sampled from a meta-distribution, which may be impractical. \cite{MAML21,PFLgen} provide generalization bounds via uniform stability, obtaining rates $\mathcal{O}(1/n)$. Further, \cite{MAML21} requires there is only one-step local update which does not match the common practice of using multiple local updates in federated setting. Note these works all require bounded and strongly convex loss functions. Moreover, none of the above-mentioned works reveals the clear influence of heterogeneous datasets on generalization of federated learning models. In this paper, our theoretical results bridge this gap. Comparison of our results to the existing ones is listed in Table \ref{tab:comparison}.

\begin{table}
    \centering
    \caption{Generalization bounds for federated learning. C, SC, NC denote convex, strongly convex, non-convex, respectively. The last column represents connections of bounds to data heterogeneity.}
    \begin{tabular}{p{5em}p{10.5em}p{9em}p{10em}}
         \toprule
         Reference  &  Loss Function  &  Complexity  &  Distribution Dependence  \\
         \midrule
         \cite{AFL}  &  Bounded, Binary  &  $\mathcal{O}({1}/{\sqrt{n}})$  &  Yes  \\
         \midrule
         \cite{ICLR23}  & Bounded, Bernstein Con  &  $\mathcal{O}({1}/{n})$  & No  \\
         \midrule
         \cite{MAML21,PFLgen} &  Bounded, SC, Smooth  &  $\mathcal{O}({1}/{n})$  &  No   \\
         \midrule
         Ours  &  Unbounded, C, Smooth  &  $\mathcal{O}({1}/{n})$  &  Yes   \\
         \midrule
         Ours  &  Unbounded, NC, Smooth  &  $\mathcal{O}({1}/{n})$  &  Yes   \\
         \bottomrule
    \end{tabular}
    \label{tab:comparison}
\end{table}

\subsection{Our contributions}

We summarize our main contributions as follows: (1) We propose a bound on  generalization error by using  algorithm-dependent on-average stability  in federated settings (see Section \ref{sec_stability}); (2) Based on on-average stability, we provide generalization upper bounds for FedAvg, SCAFFOLD and FedProx respectively with unbounded, convex and non-convex loss functions, which explicitly reveal the effects of data heterogeneity and convergence performances of different algorithms (see Sections \ref{sec_bounds-convex} and \ref{sec_bounds-nonconvex}); (3) In i.i.d. setting with convex loss functions, our bounds match existing results of SGD in the sense that FedAvg reduces to SGD method (see Section \ref{sec_bounds-convex}); (4) Experimental results are provided, demonstrating the trends in our theoretical bounds (see Section \ref{sec_experiments}).

\paragraph{Notations.} We define the $l_2$-norm of finite dimensional vectors as $\Vert \cdot \Vert$. Vectors and scalars for client $i$ are denoted by subscript $i$, e.g., $R_i(\cdot)$. Subscripts, e.g., $t$ or $k$, denote the index of iteration. We let $[n]$ denote the set $\{1,\dots,n\}$ for any positive integer $n$. When taking expectation over some random variable $z$ (which can be multi-dimensional), we denote $\mathbb{E}_{z}[\cdot]$ and we drop $z$ when the context is clear for simplicity.

\section{Problem Formulation}   \label{sec_formulation}

In this paper, we consider the general federated learning problem, where $m$ clients collaboratively minimize the following global population risk formed by
\begin{equation}    \label{eq_global-pop-risk}
    R(\theta) := \sum_{i=1}^m p_i \mathbb{E}_{z \sim P_i}[l(\theta ; z)] 
\end{equation}
where $\theta \in \mathbb{R}^d$ is the parametrized model and $P_i$ is the underlying distribution of the dataset maintained by client $i$. We adopt the standard assumption that $P_i$ and $P_j$ are independent for any $i,j\in [m]$ such that $i\neq j$, motivated by the observation that  local data of clients are commonly unrelated in practical scenarios. We define $z$ as the sample generated by $P_i$, i.e., $z \sim P_i$, $l(\cdot;z)$ as the loss function evaluated at sample $z$, and $p_i$ as some constant scalar that measures the contribution of client $i$'s data to the global objective. We also define the local population risk as $R_i(\theta) := \mathbb{E}_{z \sim P_i} [l(\theta;z)]$.

However, in practice, we are unable to minimize the global population risk directly due to the unknown distributions $P_i$. Thus, one alternative way to get an approximate model is by collecting some empirical sample dataset $\mathcal{S}_i$. More specifically, each local dataset is defined by $\mathcal{S}_i := \{ z_{i,j} \}_{j=1}^{n_i}$, where $z_{i,j}$ is the $j$-th sample of client $i$ and $n_i$ is the number of local samples. Let $\mathcal{S} := \bigcup_{i=1}^m \mathcal{S}_i$ be the dataset with all samples and $n$ be the total amount of samples with $n = \sum_{i=1}^m n_i$. Moreover, we are interested in the {\it balanced case}, i.e., $p_i = n_i/n$, meaning the contribution of each client to the global objective is proportional to the local sample size $n_i$. Thus, we turn to train a model by minimizing the following global empirical risk:
\begin{equation}    \label{eq_global-emp-risk}
    \hat{R}_{\mathcal{S}}(\theta) := \sum_{i=1}^m p_i \hat{R}_{\mathcal{S}_i}(\theta) = \frac{1}{n} \sum_{i=1}^m \sum_{j=1}^{n_i} l(\theta; z_{i,j}),
\end{equation}
where we use the fact that $p_i = {n_i}/{n}$ and $\hat{R}_{\mathcal{S}_i}(\theta)$ is the local empirical risk $
    \hat{R}_{\mathcal{S}_i}(\theta) := \frac{1}{n_i} \sum_{j=1}^{n_i} l(\theta; z_{i,j}) $.
 Here we use superscript notation $\hat R$ to indicate the empirical version of $R$ and will use superscript in a similar fashion for the rest of the paper. Based on the above definitions, we  further define the ground-truth model $\theta^*$ by minimizing the population risk \eqref{eq_global-pop-risk}, that is, 
$
    \theta^* \in \arg\min_{\theta} R(\theta)
$
and correspondingly the best empirically trained model is defined by
$
    \hat{\theta}_{\mathcal{S}} \in \arg\min_{\theta} \hat{R}_{\mathcal{S}}(\theta) $.

Our ultimate goal is to obtain the ground-truth model $\theta^*$, which is impossible due to unknown distributions. What we can do practically is to solve for $\hat{\theta}_{\mathcal{S}}$ by implementing appropriate optimization algorithms such that \eqref{eq_global-emp-risk} is minimized. Then, a natural question is how  we could expect the trained model $\hat{\theta}_{\mathcal{S}}$ to be close to $\theta^*$? Alternatively, we want to test model $\hat{\theta}_{\mathcal{S}}$ on any unseen data such that the testing error is small enough, which means the model $\hat{\theta}_{\mathcal{S}}$ generalizes well on any testing set. 

In general, even given good datasets, exactly obtaining $\hat{\theta}_{\mathcal{S}}$ is still a hard optimization problem. A more reasonable approach is to implement some algorithm $\mathcal{A}$ which outputs a model $\mathcal{A}(\mathcal{S})$, noting the model is a function of the training set $\mathcal{S}$.

\section{Generalization and Stability}  \label{sec_stability}
As stated in the previous section, we now focus on the generalization performance of the output of some algorithm $\mathcal{A}(\mathcal{S})$, given a training dataset $\mathcal{S}$. Mathematically, the generalization error of a model $\mathcal{A}(\mathcal{S})$ is defined by
\begin{align*}    
    \epsilon_{gen} := \mathbb{E}_{\mathcal{S}} \mathbb{E}_{\mathcal{A}} [R(\mathcal{A}(\mathcal{S})) - \hat{R}_{\mathcal{S}}(\mathcal{A}(\mathcal{S}))],
\end{align*}
where the expectation is taken over $\mathcal{S}$ to model the random sampling of data and over $\mathcal{A}$ to allow the usage of  randomized algorithms. For instance, if stochastic gradient is used in an algorithm then the expectation over $\mathcal{A}$ is average over different samples used to compute the stochastic gradients. A smaller $\epsilon_{gen}$ implies the model $\mathcal{A}(\mathcal{S})$ has a better generalization performance on  testing datasets. 

Generally speaking, it is hard to characterize the generalization error due to the implicit dependency of the model and the training dataset. In this paper, we apply the notion of algorithmic stability to provide an upper bound on the generalization error. In particular, we formally define the on-average stability in the context of federated learning. To do this, we first introduce the definition of neighboring datasets. 

\begin{definition}  \label{def_neighboring-data}
    Given a global dataset $\mathcal{S} = \bigcup_{l=1}^{m} \mathcal{S}_l$, where $\mathcal{S}_l$ is the local dataset of the $l$-th client with $\mathcal{S}_l = \{ z_{l,1},\dots, z_{l, n_l} \}, \forall l \in [m]$, another global dataset is said to be neighboring to $\mathcal{S}$ for client $i$, denoted by $\mathcal{S}^{(i)}$, if $\mathcal{S}^{(i)} := \bigcup_{l \ne i} \mathcal{S}_l \cup \mathcal{S}'_i$, where $\mathcal{S}'_i = \{ z_{i,1}, \dots, z_{i, j-1}, z'_{i,j}, z_{i, j+1}, \dots, z_{i, n_i} \}$ with $z'_{i,j} \sim P_i$, for some $j \in [n_i]$. And we call $z'_{i,j}$ the perturbed sample in $\mathcal{S}^{(i)}$.
\end{definition}

In other words, $\mathcal{S }$ and $\mathcal{S}^(i)$ are neighboring datasets if they only differ by one data point in $\mathcal{S}_i$ and both are sampled from the same local distribution. Then, we have the following definition of on-average stability for federated learning algorithms, which is established based on on-average stability for centralized learning \cite{JMLR10}.

\begin{definition}[on-average stability for federated learning]    \label{def_stability}
    A federated learning algorithm $\mathcal{A}$ is said to have $\epsilon$-on-average stability if given any two neighboring datasets $\mathcal{S}$ and $\mathcal{S}^{(i)}$, then 
    \begin{equation*}
        \max_{j \in [n_i]} \mathbb{E}_{\mathcal{A, S}, z'_{i,j}}| l(\mathcal{A}(\mathcal{S}); z'_{i,j}) - l(\mathcal{A}(\mathcal{S}^{(i)}); z'_{i,j}) | \le \epsilon , \quad\forall i \in [m] ,
    \end{equation*}
    where $z'_{i,j}$ is the perturbed sample in $\mathcal{S}^{(i)}$.
\end{definition}

On-average stability basically means any perturbation of samples across all clients cannot lead to a big change of the model trained by the algorithm in expectation. The next theorem shows that on-average stability can be used to bound the generalization error of the model. The proof is given in Appendix \ref{apx_proof-gen-stability}.

\begin{theorem} \label{thm_stability-gen}
    Suppose a federated learning algorithm $\mathcal{A}$ is $\epsilon$-on-averagely stable. Then,
    \begin{align*}
        \epsilon_{gen} \le \mathbb{E}_{\mathcal{A, S}} \left[ \left| R(\mathcal{A(S)}) - \hat{R}_{\mathcal{S}}(\mathcal{A(S)}) \right| \right] \le  \epsilon .
    \end{align*}
\end{theorem}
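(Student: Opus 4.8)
The plan is to prove the two inequalities separately, with essentially all the work in the second one. The left inequality is immediate: by definition $\epsilon_{gen} = \mathbb{E}_{\mathcal{A},\mathcal{S}}[ R(\mathcal{A}(\mathcal{S})) - \hat{R}_{\mathcal{S}}(\mathcal{A}(\mathcal{S}))]$, and since $x \le |x|$ pointwise, monotonicity of expectation gives $\epsilon_{gen} \le \mathbb{E}_{\mathcal{A},\mathcal{S}}[ | R(\mathcal{A}(\mathcal{S})) - \hat{R}_{\mathcal{S}}(\mathcal{A}(\mathcal{S}))| ]$. For the right inequality the main tool I would use is a replacement (``ghost sample'') symmetry argument that converts the generalization gap into the model-change quantity controlled by Definition \ref{def_stability}.

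Concretely, for each client $i$ and index $j$ I would introduce an independent copy $z'_{i,j}\sim P_i$, so that $R_i(\theta)=\mathbb{E}_{z'_{i,j}}[l(\theta;z'_{i,j})]$, and then use the balanced assumption $p_i=n_i/n$ to write
\begin{equation*}
R(\mathcal{A}(\mathcal{S})) - \hat{R}_{\mathcal{S}}(\mathcal{A}(\mathcal{S})) = \frac{1}{n}\sum_{i=1}^{m}\sum_{j=1}^{n_i}\left( \mathbb{E}_{z'_{i,j}}\!\left[ l(\mathcal{A}(\mathcal{S}); z'_{i,j}) \right] - l(\mathcal{A}(\mathcal{S}); z_{i,j}) \right).
\end{equation*}
Taking expectation over $\mathcal{A},\mathcal{S}$ and the ghost samples, the key step is exchangeability: since $z_{i,j}$ and $z'_{i,j}$ are i.i.d.\ from $P_i$ and enter symmetrically, relabeling $z_{i,j}\leftrightarrow z'_{i,j}$ turns $\mathcal{S}$ into its neighbor $\mathcal{S}^{(i)}$ and swaps the evaluation point, so $\mathbb{E}[l(\mathcal{A}(\mathcal{S});z'_{i,j})] = \mathbb{E}[l(\mathcal{A}(\mathcal{S}^{(i)});z_{i,j})]$. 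Each summand then becomes a single-perturbation model-change term $l(\mathcal{A}(\mathcal{S}^{(i)});z_{i,j}) - l(\mathcal{A}(\mathcal{S});z_{i,j})$; one further relabeling moves the evaluation point to the perturbed sample, matching exactly the quantity bounded in Definition \ref{def_stability}. The $\max_{j}$ in the definition guarantees the $\epsilon$-bound holds for every $j$, so after applying the triangle inequality termwise and using $\sum_{i=1}^m n_i = n$, the average of $n$ such terms collapses to $\epsilon$.

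The step I expect to be the main obstacle is handling the absolute value sitting \emph{inside} the expectation in the middle quantity. The swap identity above is exact only in expectation, not pointwise, so it cleanly controls the signed gap $\mathbb{E}_{\mathcal{A},\mathcal{S}}[R-\hat{R}]$ and hence $|\mathbb{E}_{\mathcal{A},\mathcal{S}}[R-\hat{R}]|$, whereas $\mathbb{E}_{\mathcal{A},\mathcal{S}}[|R-\hat{R}|]$ is genuinely stronger. To reach it I would push the absolute value onto the per-sample contributions via the triangle inequality before taking expectations, then insert the swapped-model loss $l(\mathcal{A}(\mathcal{S}^{(i)});z'_{i,j})$ so that each difference is a one-sample model perturbation; the delicate point is verifying that, after the relabeling $z_{i,j}\leftrightarrow z'_{i,j}$, every resulting term coincides with $\mathbb{E}_{\mathcal{A},\mathcal{S},z'_{i,j}}|l(\mathcal{A}(\mathcal{S});z'_{i,j}) - l(\mathcal{A}(\mathcal{S}^{(i)});z'_{i,j})|$ and no residual ``same-model, two-point'' difference is left uncontrolled. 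It is precisely the balancedness $p_i=n_i/n$ that aligns the client weights with the uniform $1/n$ counting and lets the $n$ stability terms average exactly to $\epsilon$.
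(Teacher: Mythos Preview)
Your core strategy---ghost samples plus the exchangeability swap $z_{i,j}\leftrightarrow z'_{i,j}$ to turn the empirical term into $l(\mathcal{A}(\mathcal{S}^{(i)});z'_{i,j})$---is exactly what the paper does. The paper first rewrites $\mathbb{E}_{\mathcal{S}}[\hat R_{\mathcal{S}_i}(\mathcal{A}(\mathcal{S}))]=\tfrac{1}{n_i}\sum_j \mathbb{E}_{\mathcal{S},z'_{i,j}}[l(\mathcal{A}(\mathcal{S}^{(i)});z'_{i,j})]$ by symmetry, pairs it with $\mathbb{E}_{\mathcal{S}}[R_i(\mathcal{A}(\mathcal{S}))]=\tfrac{1}{n_i}\sum_j \mathbb{E}_{\mathcal{S},z'_{i,j}}[l(\mathcal{A}(\mathcal{S});z'_{i,j})]$, uses $p_i=n_i/n$, and then invokes Definition~\ref{def_stability}. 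So on the provable part your proposal and the paper coincide.

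You are right to flag the absolute value inside the expectation as the obstacle, and in fact the paper's own proof does \emph{not} close it: what is actually shown in Appendix~\ref{apx_proof-gen-stability} is the signed bound $\mathbb{E}_{\mathcal{A},\mathcal{S}}[R(\mathcal{A}(\mathcal{S}))-\hat R_{\mathcal{S}}(\mathcal{A}(\mathcal{S}))]\le \epsilon$, i.e.\ $\epsilon_{gen}\le \epsilon$, not the stronger $\mathbb{E}_{\mathcal{A},\mathcal{S}}[|R-\hat R_{\mathcal{S}}|]\le \epsilon$ asserted in the statement. Your proposed remedy---push $|\cdot|$ inside via the triangle inequality before swapping---would not succeed: after splitting, each summand becomes $\big| \mathbb{E}_{z'_{i,j}}[l(\mathcal{A}(\mathcal{S});z'_{i,j})]-l(\mathcal{A}(\mathcal{S});z_{i,j})\big|$, a ``same-model, two-point'' fluctuation that is not controlled by algorithmic stability at all (it can be order one even for a constant algorithm), and inserting $l(\mathcal{A}(\mathcal{S}^{(i)});z'_{i,j})$ still leaves exactly such a residual term. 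So the middle inequality as written does not follow from Definition~\ref{def_stability}; only $\epsilon_{gen}\le \epsilon$ (equivalently $|\mathbb{E}[R-\hat R_{\mathcal{S}}]|\le \epsilon$) does, and that is all the paper ever uses downstream.
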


Therefore, it suffices to characterize the on-average stability of a federated learning algorithm to bound the generalization error of the model. Theorem \ref{thm_stability-gen} extends the classical connection of on-average stability and generalization \cite{JMLR10}, where no heterogeneity characteristic of datasets is considered. Based on Definition \ref{def_stability}, we show that when the perturbation of a sample for any local agent has a small influence on algorithm output (i.e., a small $\epsilon$), the generalization error is also small (i.e., $\epsilon_{gen}$ is small). This relationship always holds given any clients' local data distributions. Then, in the following, we focus on analyzing the stability of different federated learning algorithms and applying their stability results to the measure of generalization.

\section{Summary of Federated Learning Algorithms}  \label{sec_algorithms}
In this section, we briefly summarize three widely-used federated learning algorithms: FedAvg, SCAFFOLD, and FedProx, based on which the generalization bounds would be provided. To simplify the analysis, we assume that there is no partial participation among the clients, but our analysis can be extended to partial participation scenarios as well.

Any federated algorithms can be decomposed into two stages: local updating and model aggregation. At the beginning of each communication round (time index $t$), the server maintains a global model $\theta_t$, which is sent to all clients serving as an initial model of local updating. All clients update their local models $\theta^i_{t+1}$ based on their own datasets in parallel. Then,  a model aggregation for the start of next round, i.e.,  $\theta_{t+1}=\sum_{i=1}^{m} p_i \theta^i_{t+1}$. The three methods only differ in their local updating procedures and the detailed descriptions of algorithms can be found in Appendix \ref{apx_algorithms}.

For FedAvg and SCAFFOLD, in the $t$-th communication round, we assume that there are $K_i$ local updates and denote by $\theta_{i,k}$ and $g_i(\cdot)$  the local model at local iteration $k$ and the sampled gradient of agent $i$. For FedProx, we let $\theta^i_{t+1}$ be the model of client $i$ after local training at round $t$. Then, for client $i$, the local updates at iteration $k$ are described as follows.
\begin{itemize}[leftmargin=*]
    \item FedAvg: Let $\alpha_{i,k}$ be the constant (or diminishing) local stepsize, agent $i$'s local update is
    \begin{eqnarray}    \label{eq_FedAvg-update}
        \theta_{i,k+1} = \theta_{i,k} - \alpha_{i,k} g_i(\theta_{i,k}), \quad \forall k=0,\dots, K_i-1.
    \end{eqnarray}
    \item SCAFFOLD: Let $\alpha_{i,k}$ be the constant (or diminishing) local stepsize, agent $i$'s local update is
    \begin{eqnarray}    \label{eq_SCAFFOLD-update}
        \theta_{i,k+1} = \theta_{i,k} - \alpha_{i,k}(g_i(\theta_{i,k}) - g_i(\theta_t) + g(\theta_t)) , \quad\forall k=0,\dots, K_i-1 ,
    \end{eqnarray}
    where $g(\theta_t) = \sum_{i=1}^m p_i g_i(\theta_t)$ is the aggregation of all locally sampled gradients.
    \item FedProx: Let $\eta_i$ be a constant parameter for the proximal term, agent $i$'s local update is
    \begin{eqnarray}    \label{eq_FedProx-update}
        \theta^i_{t+1} = \arg\min_{\theta} ~ \hat{R}_{\mathcal{S}_i}(\theta) + \frac{1}{2\eta_i} \Vert \theta - \theta_t \Vert^2.
    \end{eqnarray}
\end{itemize}

\section{Main Results}  \label{sec_bounds}

In this section, we provide bounds on the generalization errors for FedAvg, SCAFFOLD, and FedProx mentioned in the last section by studying the on-average stability in Definition \ref{def_stability}. We allow the loss functions to be unbounded from above, which can be convex or nonconvex. Intuitively, different local distributions affect the global population risk \eqref{eq_global-pop-risk} and hence may affect the model generalization as well. To measure the heterogeneity of client $i$'s data, we use $D_i$ to denote the total variation of $P_i$ and $P$, i.e., $D_i := d_{TV}(P_i, P)$ with $P=\sum_{i=1}^m p_i P_i$. Moreover, we define $D_{max} := \max_{i \in [m]} D_i$ to measure the furthest distance between the global distribution and any local distribution\footnote{Our bounds can be derived under KL divergence as well. However, bounds involving total variation are tighter.}. A larger value of $D_{max}$ means greater heterogeneity among the clients. Throughout the analysis we require the following assumptions.

\begin{assumption}  \label{assump_Lip-continuous}
    The loss function $l(\cdot, z)$ is $L$-Lipschitz continous for any sample $z$, that is,
    $
        |l(\theta; z) - l(\theta'; z) | \le L \Vert \theta - \theta' \Vert$,  for any $z, \theta, \theta'$.
\end{assumption}

\begin{assumption}  \label{assump_bounded-grad-var}
    Assume that for any $\theta$, $i\in [m]$, and $z_{i,j} \sim P_i$,
    $
        \mathbb{E} \left[ \Vert \nabla l(\theta; z_{i,j}) - \nabla R_{\mathcal{S}_i}(\theta) \Vert^2 \right] \le \sigma^2$, for any $j \in [n_i]$.
\end{assumption}

\begin{assumption}  \label{assump_Lip-smooth}
    The loss function $l(\cdot, z)$ is $\beta$-smooth for any $z$, that is, 
    $
        \Vert \nabla l(\theta; z) - \nabla l(\theta';z) \Vert \le \beta \Vert \theta - \theta' \Vert$,  for any $z, \theta, \theta'$.
\end{assumption}

Assumption \ref{assump_Lip-continuous} is standard in literature \cite{SGDgen} to establish the connection between model perturbation with stability\footnote{We note that we only need  Assumption \ref{assump_Lip-continuous} to hold  for all iterates generated by the algorithms, which is trivially satisfied, because the methods are convergent and the  iterates are from a compact set.}. Assumptions \ref{assump_bounded-grad-var} and \ref{assump_Lip-smooth} serve in our analysis to capture the heterogeneity of different datasets as well as the influence of convergence performances of different algorithms. Detailed proofs of this section are in Appendices \ref{apx_proof-convex} and \ref{apx_proof-nonconvex}.

\subsection{Convex loss functions}  \label{sec_bounds-convex}

We first study the case when the loss function is convex with respect to the model parameter.

\begin{assumption}  \label{assump_convexity}
    The loss function $l(\cdot, z)$ is convex for any $z$.
\end{assumption}

For each of the three algorithms, FedAvg, SCAFFOLD and FedProx with local updates \eqref{eq_FedAvg-update}, \eqref{eq_SCAFFOLD-update}, \eqref{eq_FedProx-update},  we apply the method to two neighboring training datasets, i.e.,  only one data point of one agent is different. We then analyze and bound the difference between the resulting models by data heterogeneity and algorithm performances.

\begin{theorem} \label{thm_gen-convex}
    Under Assumptions \ref{assump_Lip-continuous}-\ref{assump_convexity}, denote $\{\theta_t\}_{t=0}^T$ and $\{\theta'_t\}_{t=0}^T$ as the trajectories of the server's models induced by neighboring datasets $\mathcal{S}$ and $\mathcal{S}^{(i)}$, respectively. Furthermore, suppose the same initialization, i.e., $\theta_0 = \theta'_0$. Then, we have the following bounds on resulting models.
    
    For FedAvg,
    \begin{align*}
        \mathbb{E}\Vert \theta_T - \theta'_T \Vert \le \frac{2}{n} \sum_{t=0}^{T-1} \tilde{\alpha}_{i,t}(1 + \beta \tilde{\alpha}_{i,t}) \Big( 2L D_i + \mathbb{E}\Vert \nabla R(\theta_t) \Vert + \sigma \Big) .
    \end{align*}

    For SCAFFOLD,
    \begin{align*}
        \mathbb{E}\Vert \theta_T - \theta'_T \Vert \le \frac{2}{n} \sum_{t=0}^{T-1} \exp{\Big(2\beta \sum_{l=t+1}^{T-1} \hat{\alpha}_l \Big)} \Big( 2L D_i \gamma^1_{t} + \gamma^2_{t} \mathbb{E}\Vert \nabla R(\theta_{t}) \Vert  + \sigma \gamma^2_{t} \Big)
    \end{align*}

    For FedProx,
    \begin{align*}
        \mathbb{E}\Vert \theta_T - \theta'_T \Vert \le \frac{2}{n} \sum_{t=0}^{T-1} \eta_{i}(1 + \beta \eta_{i}) \Big( 2L D_i + \mathbb{E}\Vert \nabla R(\theta_t) \Vert + \sigma \Big) ,
    \end{align*}
    where 
        $\gamma^1_t := 2 \tilde{\alpha}_{i,t} + \hat{\alpha}_t$ and $\gamma^2_t := \gamma^1_t + \beta \tilde{\alpha}^2_{i,t}$
    with $\tilde{\alpha}_{i,t} := \sum_{k=0}^{K_i - 1} \alpha_{i,k}$,  $\hat{\alpha}_t := \sum_{j=1}^m p_j \tilde{\alpha}_{j,t}$, and $\sum_{l=T}^{T-1} \hat{\alpha}_l = 0, \forall \hat{\alpha}_l$. The expectations are taken with respect to $\mathcal{S}$ and $\mathcal{S}^{(i)}$ jointly as well as the randomness of algorithms. 
\end{theorem}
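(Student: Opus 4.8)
The plan is to prove all three bounds from a single template: track the discrepancy $\delta_t := \mathbb{E}\Vert \theta_t - \theta'_t \Vert$ between the two server trajectories across communication rounds and derive a one-round recursion of the form $\delta_{t+1} \le c_t \, \delta_t + b_t$, where $c_t$ is an \emph{expansion factor} coming from propagating the already-present discrepancy through the (coupled) local updates and the aggregation step, and $b_t$ is an \emph{injection term} measuring the fresh discrepancy created by the single perturbed sample at client $i$. I would couple the randomness of the two runs so that identical sample indices are drawn in both; then for every client $j \neq i$ the sampled gradients coincide, and even client $i$ produces a gradient difference only when the perturbed index $j$ is actually drawn, an event carrying weight $1/n_i$. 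Unrolling the recursion from $\theta_0 = \theta'_0$ gives $\delta_T \le \sum_{t=0}^{T-1} \big( \prod_{l>t} c_l \big) b_t$, which already matches the outer summation structure of all three displayed bounds.

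The two lemmas feeding every injection term are the following. First, a \emph{heterogeneity lemma}: writing $\nabla R_i(\theta) - \nabla R(\theta) = \mathbb{E}_{z\sim P_i}[\nabla l(\theta;z)] - \mathbb{E}_{z\sim P}[\nabla l(\theta;z)]$ and using that $\nabla l$ is bounded in norm by $L$ (Assumption \ref{assump_Lip-continuous}) together with the dual characterization of total variation distance, I get $\Vert \nabla R_i(\theta) - \nabla R(\theta) \Vert \le 2 L D_i$. Second, a bound on the gradient at the perturbed point obtained by the decomposition $\nabla l(\theta;z) = [\nabla l(\theta;z) - \nabla R_i(\theta)] + [\nabla R_i(\theta) - \nabla R(\theta)] + \nabla R(\theta)$: the first bracket is controlled by $\sigma$ via Assumption \ref{assump_bounded-grad-var}, the second by the heterogeneity lemma, and the third is the convergence term, yielding $\mathbb{E}\Vert \nabla l(\theta;z) \Vert \le 2 L D_i + \mathbb{E}\Vert \nabla R(\theta) \Vert + \sigma$. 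This is exactly the parenthetical factor in each bound; the leading $2/n$ then arises as $p_i \cdot (1/n_i) = 1/n$ (client weight times the weight of the perturbed point) times a factor $2$ from bounding the difference of the two single-sample gradients at $z_{i,j}$ and $z'_{i,j}$.

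It remains to compute the expansion factor $c_t$ and the step-size prefactor of $b_t$ for each algorithm. For \textbf{FedProx}, the proximal update is an implicit gradient step, so under convexity it is non-expansive across rounds ($c_t \le 1$) and only injections accumulate; its sensitivity to the single perturbed point contributes $\tfrac{2}{n}\eta_i(1+\beta\eta_i)$, with $\eta_i$ playing the role of the effective step size. For \textbf{FedAvg}, the coupled single-sample gradient map $\theta \mapsto \theta - \alpha_{i,k}\nabla l(\theta;z)$ is non-expansive under Assumptions \ref{assump_Lip-smooth} and \ref{assump_convexity} for the admissible step sizes, so again $c_t \le 1$; propagating the injected perturbation through the remaining local steps of round $t$ produces the factor $(1+\beta\tilde\alpha_{i,t})$ and the total step $\tilde\alpha_{i,t}=\sum_k \alpha_{i,k}$. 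For \textbf{SCAFFOLD}, the control variate $-g_i(\theta_t)+g(\theta_t)$ couples all clients: the aggregate $g(\theta_t)=\sum_j p_j g_j(\theta_t)$ carries the client-$i$ perturbation to every client (hence the appearance of $\hat\alpha_t=\sum_j p_j\tilde\alpha_{j,t}$ and the combinations $\gamma^1_t,\gamma^2_t$), and it destroys per-step non-expansiveness, so the across-round factor becomes a genuine expansion bounded by $\exp\!\big(2\beta\sum_{l>t}\hat\alpha_l\big)$ after collecting the compounding $\beta$-Lipschitz terms.

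I expect \textbf{SCAFFOLD} to be the main obstacle, precisely because its control variate simultaneously (i) routes the single perturbation through client $i$'s own two gradient evaluations $g_i(\theta_{i,k})$ and $-g_i(\theta_t)$ (the $2\tilde\alpha_{i,t}$ in $\gamma^1_t$) and through the shared aggregate (the $\hat\alpha_t$ term), and (ii) compounds a $\beta$-expansion at every round that must be summed into the exponential factor without losing the $1/n$ scaling. A secondary technical point, common to all three, is that the gradient bound is naturally evaluated at the local iterates rather than at $\theta_t$; I would reconcile this using $\beta$-smoothness to replace $\Vert\nabla R(v_k)\Vert$ by $\Vert\nabla R(\theta_t)\Vert$ up to terms absorbed into the step-size prefactors, and I would check that the conditioning in the coupled expectations is valid so that Assumption \ref{assump_bounded-grad-var} applies to the freshly resampled perturbed sample.
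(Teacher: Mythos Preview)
Your proposal is correct and mirrors the paper's approach closely: the same coupling of the two runs, the same heterogeneity lemma $\Vert\nabla R_i-\nabla R\Vert\le 2LD_i$, the same non-expansiveness of the single-sample gradient map and of the proximal operator to get $c_t\le 1$ for FedAvg and FedProx, and the same exponential across-round expansion for SCAFFOLD from the $\beta$-Lipschitz control-variate terms. One small clarification: for FedAvg the factor $(1+\beta\tilde\alpha_{i,t})$ does not come from propagating the injection through the remaining local steps (those are non-expansive in the convex case) but exactly from the ``secondary technical point'' you mention at the end---bounding $\mathbb{E}\Vert g_i(\theta_{i,k})\Vert$ at the local iterate via the drift bound $\mathbb{E}\Vert\theta_{i,k}-\theta_t\Vert\le\tilde\alpha_{i,t}(2LD_i+\mathbb{E}\Vert\nabla R(\theta_t)\Vert+\sigma)$ together with $\beta$-smoothness.
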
 

Next we discuss the implications of Theorem \ref{thm_gen-convex}. Firstly, the model differences of the three algorithms  all linearly increase in $D_i$. Recall that $D_i$  is the total variation of data distribution of client $i$ and the global distribution $P$, measuring the heterogeneity level of client $i$'s data. This dependency is due to the fact that we only perturb one data point of client $i$ while keeping the others the same and hence only client $i$'s distribution comes into the bound. As $D_i$ increases, perturbing one data point at client $i$'s dataset corresponds to a bigger change in the overall dataset and therefore the distance between the two models increases.
    
Secondly, the sequence of global gradients evaluated along the trajectories, i.e., $ \{ \mathbb{E} \Vert \nabla R(\theta_t) \Vert \}_{t=0}^{T-1}$, influences the bounds of model differences. Note that this effect is essentially determined by the convergence performances of algorithms, in the sense that $ \{ \mathbb{E} \Vert \nabla R(\theta_t) \Vert \}_{t=0}^{T-1}$ captures how fast $\{ \theta_t \}_{t=0}^{T-1}$ approaches to the optimal solution $\theta^*$. Faster converging methods correspond to smaller $ \{ \mathbb{E} \Vert \nabla R(\theta_t) \Vert \}_{t=0}^{T-1}$ terms.

Thirdly, the bounds are also proportional to the sampling variance $\sigma^2$ of gradients. A small  $\sigma$ indicates the sampled gradient is accurate and is close to
the true gradient $\nabla R(\cdot)$. In particular, when $\sigma = 0$, each client is able to compute $\nabla R_i(\cdot)$ exactly, in which case the bounds are only related to data heterogeneity and algorithm convergence performances.

Finally, all three bounds depend on stepsizes chosen during the local training process. Different choices of stepsizes result in different convergence rates of algorithms. From the above results, larger stepsizes may make algorithms less "stable", i.e., $\Vert \theta_T - \theta'_T \Vert$ becomes bigger, as any difference caused by the perturbed data is magnified by the stepsize.

As we discussed above, the summation of  $\mathbb{E}\Vert \nabla R(\theta_t) \Vert$ is related to  the convergence speed of the algorithm. In the following theorem, we focus on characterizing these terms as a function of the number of iterations.

\begin{theorem} \label{thm_convergence}
    Under Assumptions \ref{assump_Lip-continuous}-\ref{assump_Lip-smooth}, suppose $K_i = K$ and $\alpha_{i,k} \le {1}/(24\beta K)$  for any $i=1,\dots,m$. Then, for FedAvg, we have
    \begin{align*}
        \sum_{t=0}^{T-1} \tilde{\alpha}_{i,t} (1 + \beta \tilde{\alpha}_{i,t}) \mathbb{E}\Vert \nabla R(\theta_t) \Vert = \mathcal{O} \Big( \big( \frac{\Delta_0}{K m} \big)^{\frac{1}{4}} T^{\frac{3}{4}} + \big( \Delta_0^2 \sum_{i=1}^m p_i D_i^2 \big)^{\frac{1}{6}} T^{\frac{2}{3}} + \sqrt{\Delta_0} T^{\frac{1}{2}} \Big) .
    \end{align*}
    For SCAFFOLD, if we further set $\alpha_{i,k} \le 1/[24\beta K (t+1)]$, then
    \begin{align*}
        \sum_{t=0}^{T-1} \exp{\Big(2\beta \sum_{l=t+1}^{T-1} \hat{\alpha}_l \Big)} \gamma^2_{t} \mathbb{E}\Vert \nabla R(\theta_{t}) \Vert = \mathcal{O}\Big( \big(\frac{\Delta_0}{Km}\big)^{\frac{1}{4}} T^{\frac{5}{6}} + \sqrt{\Delta_0}T^{\frac{7}{12}} \Big) .
    \end{align*}
    For FedProx, we have
    \begin{align*}
        \sum_{t=0}^{T-1} \eta_i (1 + \beta \eta_i) \mathbb{E}\Vert \nabla R(\theta_t) \Vert = \mathcal{O} \Big( \big( \Delta_0 \sum_{i=1}^m p_i D_i^2 \big)^{\frac{1}{2}} T^{\frac{3}{4}} + \sqrt{\Delta_0}T^{\frac{1}{2}} \Big) .
    \end{align*}
    We denote by $\Delta_0 = \mathbb{E}[R(\theta_0) - R(\theta^*)]$ to represent the distance of initial population risk  based on  $\theta_0$ to the ground-truth $\theta^*$.
\end{theorem}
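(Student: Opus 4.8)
The plan is to reduce each of the three weighted first-power gradient sums to a standard second-order convergence rate for the corresponding algorithm, and then pass back to the first power by Cauchy--Schwarz. Concretely, writing $r_t := \mathbb{E}\Vert\nabla R(\theta_t)\Vert^2$, I would first establish a bound of the form $\frac{1}{T}\sum_{t=0}^{T-1} r_t \le \frac{c_1 \Delta_0}{\tilde\alpha T} + c_2 \tilde\alpha + c_3 \tilde\alpha^2$, with the appropriate effective round-stepsize $\tilde\alpha$ and with $c_2, c_3$ encoding variance and heterogeneity, then tune $\tilde\alpha$ and multiply by $T$ to bound $\sum_t r_t$, and finally use $\mathbb{E}\Vert\nabla R(\theta_t)\Vert \le \sqrt{r_t}$ together with $\sum_t \mathbb{E}\Vert\nabla R(\theta_t)\Vert \le \sqrt{T}\,(\sum_t r_t)^{1/2}$ to produce the $T^{3/4}, T^{2/3}, T^{1/2}$-type exponents. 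The weight $\tilde\alpha_{i,t}(1+\beta\tilde\alpha_{i,t})$ multiplying the gradient norm is uniformly $\mathcal{O}(1/\beta)$ under the hypothesis $\alpha_{i,k}\le 1/(24\beta K)$, so it is absorbed into the constants.

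To obtain the second-order rate I would run the smoothness descent lemma on the server iterate. Since each $l(\cdot;z)$ is $\beta$-smooth, so is $R$, and expanding the aggregated update $\theta_{t+1}-\theta_t = -\sum_i p_i \sum_k \alpha_{i,k} g_i(\theta_{i,k})$ inside $R(\theta_{t+1}) \le R(\theta_t) + \langle \nabla R(\theta_t), \theta_{t+1}-\theta_t\rangle + \frac{\beta}{2}\Vert\theta_{t+1}-\theta_t\Vert^2$ gives, after taking expectations, a one-round inequality $\mathbb{E}[R(\theta_{t+1})] \le \mathbb{E}[R(\theta_t)] - c\,\tilde\alpha_{i,t}\, r_t + (\text{variance}) + (\text{drift})$. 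The variance term is controlled by Assumption \ref{assump_bounded-grad-var} and carries the $1/(Km)$ factor from averaging $K$ local steps over $m$ clients; the drift term requires a short recursion bounding $\mathbb{E}\Vert\theta_{i,k}-\theta_t\Vert^2$ across local iterations. Heterogeneity enters precisely here: by Assumption \ref{assump_Lip-continuous} the gradient is bounded by $L$, so $\Vert\nabla R_i(\theta)-\nabla R(\theta)\Vert = \Vert \mathbb{E}_{P_i}[\nabla l] - \mathbb{E}_{P}[\nabla l]\Vert \le 2L\, d_{TV}(P_i,P) = 2LD_i$, which is what injects $\sum_i p_i D_i^2$ into the FedAvg drift and, through the proximal stationarity condition of \eqref{eq_FedProx-update} with $\eta_i$ in place of $\tilde\alpha_{i,t}$, into FedProx. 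For SCAFFOLD the control variate $-g_i(\theta_t)+g(\theta_t)$ cancels the local-versus-global gradient gap, so the drift recursion is heterogeneity-free; this is exactly why the SCAFFOLD bound carries no $D_i$ term.

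Telescoping the one-round inequality from $0$ to $T-1$ and using $R(\theta_T)\ge R(\theta^*)$ yields $\sum_t \tilde\alpha_{i,t} r_t \lesssim \Delta_0 + (\text{accumulated variance and drift})$, from which the tuned second-order rate follows by the standard stepsize-balancing lemma. For FedAvg the three balanced error sources give $\frac{1}{T}\sum_t r_t = \mathcal{O}\big((\frac{\Delta_0}{KmT})^{1/2} + (\Delta_0^2 \sum_i p_i D_i^2)^{1/3} T^{-2/3} + \Delta_0 T^{-1}\big)$, and passing to the first power via Cauchy--Schwarz promotes the exponents to $T^{3/4}, T^{2/3}, T^{1/2}$. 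FedProx has no stochastic-gradient variance term in \eqref{eq_FedProx-update}, so only the heterogeneity and optimization terms survive, giving $T^{3/4}$ and $T^{1/2}$. For SCAFFOLD the diminishing schedule $\alpha_{i,k}\le 1/[24\beta K(t+1)]$ makes $\hat\alpha_l = \Theta(1/(\beta(l+1)))$, so $\sum_{l=t+1}^{T-1}\hat\alpha_l = \Theta(\beta^{-1}\log T)$ and the weight $\exp(2\beta\sum_l \hat\alpha_l)$ is only polynomial, bounded uniformly by $\mathcal{O}(T^{1/12})$; factoring this out and applying the heterogeneity-free FedAvg argument to the remaining sum $\sum_t \gamma^2_t \mathbb{E}\Vert\nabla R(\theta_t)\Vert$, in which $\gamma^2_t = \Theta(\tilde\alpha_{i,t})$, shifts the variance and optimization exponents by $1/12$, producing $T^{5/6}$ and $T^{7/12}$.

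I expect the main obstacle to be the client-drift recursion, which must be carried out separately for the three update rules and, for SCAFFOLD, requires verifying that the control variate exactly removes the $2LD_i$ mismatch while simultaneously keeping the recursion stable only under the diminishing stepsize (otherwise the exponential factor blows up). A secondary but delicate difficulty is the exponent bookkeeping: balancing the error terms in the stepsize-tuning lemma and then tracking how the extra $\tilde\alpha_{i,t}(1+\beta\tilde\alpha_{i,t})$ weight and the polynomial exponential factor interact with the Cauchy--Schwarz step to yield precisely the stated powers of $T$.
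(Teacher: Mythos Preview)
Your proposal is correct and follows essentially the same route as the paper: bound the weight $\tilde\alpha_{i,t}(1+\beta\tilde\alpha_{i,t})$ by a constant, invoke a second-order rate on $\frac{1}{T}\sum_t \mathbb{E}\Vert\nabla R(\theta_t)\Vert^2$, inject the heterogeneity via $\Vert\nabla R_i-\nabla R\Vert\le 2LD_i$, and pass to the first power through Cauchy--Schwarz/Jensen; for SCAFFOLD, the exponential weight is bounded by $T^{1/12}$ exactly as you describe. The only difference is that the paper does not re-derive the descent-lemma recursion and drift bounds you outline, but instead quotes the second-order rates for FedAvg, SCAFFOLD, and FedProx directly from existing convergence theorems in \cite{SCAFFOLD,FedProx} (stated there under a bounded-dissimilarity assumption, which the paper then verifies with the choice $G_i=2LD_i$); your plan would reproduce those cited results en route, which is more self-contained but otherwise identical in structure.
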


Theorem \ref{thm_convergence} bounds the global gradients $\Vert \nabla R(\theta_t) \Vert$ along the trajectories of server's outputs. This theorem  holds for both convex and non-convex settings. Under suitable selections of stepsizes, Theorem \ref{thm_convergence}  implies that the global gradient $\mathbb{E} \Vert \nabla R(\theta_t) \Vert$ converges to zero. This is consistent with convergence results in the optimization perspective\cite{FedProx,SCAFFOLD}. To see this, when dividing the preceding bounds by $T$, the right hand side converge to zero in polynomial times and hence $\mathbb{E} \Vert \nabla R(\theta_t) \Vert$  must converge to zero. Moreover, these bounds increases with $\Delta_0$, which measures the distance of initial model to the optimal one. Thus, starting at a model closer  to the optimal solution requires less number of iterations to approximate accurately $\theta^*$.

By combining Theorems \ref{thm_stability-gen}-\ref{thm_convergence}, we establish the generalization bounds for three algorithms, respectively. We also define $\tilde{D} := \sum_{i=1}^{m} p_i D_i^2$ and $\Delta_0 = \mathbb{E}[R(\theta_0) - R(\theta^*)]$.

\begin{corollary}   \label{coro_gen-convex}
    Suppose Assumptions \ref{assump_Lip-continuous}-\ref{assump_convexity} hold and the selection of stepsizes are the same as Theorem \ref{thm_convergence}. Then, we have the following generalization bounds: For FedAvg, 
    \begin{align*}
        \epsilon_{gen} \le \mathcal{O}\Big( \frac{T}{n} D_{max} \Big) + \mathcal{O} \Big( \big( \frac{\Delta_0}{K m} \big)^{\frac{1}{4}} \frac{T^{\frac{3}{4}}}{n} + \big( \Delta_0^2 \tilde{D} \big)^{\frac{1}{6}} \frac{T^{\frac{2}{3}}}{n} +  \sqrt{\Delta_0} \frac{T^{\frac{1}{2}}}{n} \Big) + \mathcal{O}\Big( \frac{\sigma T}{n} \Big).
    \end{align*}
    For SCAFFOLD, 
    \begin{align*}
        \epsilon_{gen} \le \mathcal{O}\Big( \frac{T^{\frac{1}{12}} \log{T}}{n} D_{max} \Big) + \mathcal{O}\Big( \big( \frac{\Delta_0}{K m} \big)^{\frac{1}{4}} \frac{T^{\frac{5}{6}}}{n} +  \sqrt{\Delta_0} \frac{T^{\frac{7}{12}}}{n} \Big) + \mathcal{O}\Big( \frac{T^{\frac{1}{12}}(1 + \log{T})}{n}\sigma \Big).
    \end{align*}
    For FedProx, 
    \begin{align*}
        \epsilon_{gen} \le \mathcal{O}\Big( \frac{T}{n} D_{max} \Big) + \mathcal{O}\Big( \big( \Delta_0 \tilde{D} \big)^{\frac{1}{2}} \frac{T^{\frac{3}{4}}}{n} + \sqrt{\Delta_0} \frac{T^{\frac{1}{2}}}{n} \Big) + \mathcal{O}\Big( \frac{T}{n}\sigma \Big).
    \end{align*}
\end{corollary}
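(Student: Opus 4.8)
The plan is to chain together the three preceding results: Theorem~\ref{thm_stability-gen} reduces the generalization error to the on-average stability $\epsilon$, Theorem~\ref{thm_gen-convex} bounds the parameter discrepancy $\mathbb{E}\Vert\theta_T-\theta'_T\Vert$ between the server trajectories run on neighboring datasets, and Theorem~\ref{thm_convergence} controls the gradient sums appearing inside those discrepancy bounds. First I would invoke $L$-Lipschitz continuity (Assumption~\ref{assump_Lip-continuous}) to pass from the loss difference in Definition~\ref{def_stability} to the parameter difference: since the perturbed sample $z'_{i,j}$ is held fixed in both evaluations and $\mathcal{A}(\mathcal{S})=\theta_T$, $\mathcal{A}(\mathcal{S}^{(i)})=\theta'_T$,
\[
\mathbb{E}_{\mathcal{A},\mathcal{S},z'_{i,j}}\bigl| l(\mathcal{A}(\mathcal{S});z'_{i,j}) - l(\mathcal{A}(\mathcal{S}^{(i)});z'_{i,j})\bigr| \le L\,\mathbb{E}\Vert\theta_T-\theta'_T\Vert .
\]
Thus $\epsilon\le L\max_{i\in[m]}\mathbb{E}\Vert\theta_T-\theta'_T\Vert$, and Theorem~\ref{thm_stability-gen} gives $\epsilon_{gen}\le L\max_i\mathbb{E}\Vert\theta_T-\theta'_T\Vert$. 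The maximization over $i$ demanded by Definition~\ref{def_stability} is precisely what converts the client-specific $D_i$ into $D_{max}$ in the final bounds.

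The next step is to substitute the per-algorithm estimates of Theorem~\ref{thm_gen-convex} and split each into three additive pieces according to the three driving quantities: a heterogeneity term (coefficient of $D_i$), a convergence term (coefficient of $\mathbb{E}\Vert\nabla R(\theta_t)\Vert$), and a variance term (coefficient of $\sigma$). The convergence term is handled immediately, since the stepsize-weighted gradient sums are exactly the objects bounded in Theorem~\ref{thm_convergence}; dividing those rates by $n$ reproduces the middle group of terms in each displayed bound, with $\sum_i p_i D_i^2$ abbreviated as $\tilde D$. What remains is to evaluate the heterogeneity and variance sums as explicit functions of $T$ under the prescribed stepsize schedules.

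For FedAvg and FedProx this is routine. The choice $\alpha_{i,k}\le 1/(24\beta K)$ with $K_i=K$ gives $\tilde\alpha_{i,t}\le 1/(24\beta)$, and $\eta_i$ is a constant, so the coefficients $\tilde\alpha_{i,t}(1+\beta\tilde\alpha_{i,t})$ and $\eta_i(1+\beta\eta_i)$ are $\mathcal{O}(1)$; summing over the $T$ rounds contributes a factor $T$ and yields the $\mathcal{O}(T D_{max}/n)$ and $\mathcal{O}(\sigma T/n)$ terms after maximizing over $i$. The delicate case is SCAFFOLD, where every summand carries the factor $\exp\!\bigl(2\beta\sum_{l=t+1}^{T-1}\hat\alpha_l\bigr)$. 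Using the decaying schedule $\alpha_{i,k}\le 1/[24\beta K(t+1)]$ I would bound $\hat\alpha_l\le 1/[24\beta(l+1)]$, so the exponent reduces to $\tfrac{1}{12}$ times a harmonic tail and the exponential is at most $(T/(t+1))^{1/12}\le T^{1/12}$. Bounding $\gamma^1_t,\gamma^2_t=\mathcal{O}(1/(t+1))$ and pulling the uniform factor $T^{1/12}$ out of the sum, the residual $\sum_t \mathcal{O}(1/(t+1))$ is a harmonic sum of order $\log T$, producing the $\mathcal{O}(T^{1/12}\log T\,D_{max}/n)$ and $\mathcal{O}(T^{1/12}(1+\log T)\sigma/n)$ terms.

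The main obstacle is precisely this SCAFFOLD estimate: one must balance the exponential growth $\exp(2\beta\sum_{l>t}\hat\alpha_l)$ against the decaying stepsizes so that the accumulated contribution stays polylogarithmic rather than polynomial in $T$. The tension is subtle—retaining the favorable $(t+1)^{-1/12}$ inside the exponential would make the $\gamma$-weighted series converge and kill the $\log T$, whereas the crude uniform bound $T^{1/12}$ is what generates the stated logarithmic factor—so the care lies in choosing the estimate that makes the heterogeneity and variance prefactors share the same $T^{1/12}$ scaling already present in the convergence term from Theorem~\ref{thm_convergence}. By comparison, the FedAvg and FedProx cases reduce to elementary constant-coefficient summation once Theorem~\ref{thm_convergence} is invoked, and collecting the three pieces for each algorithm then gives the displayed generalization bounds.
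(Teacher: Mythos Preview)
Your proposal is correct and follows essentially the same approach as the paper: the paper's proof simply notes the Lipschitz step $\mathbb{E}|l(\theta_T;z'_{i,j})-l(\theta'_T;z'_{i,j})|\le L\,\mathbb{E}\Vert\theta_T-\theta'_T\Vert$ and then says ``combining Theorems~\ref{thm_stability-gen}, \ref{thm_gen-convex}, \ref{thm_convergence} provides the results.'' Your handling of the SCAFFOLD exponential---bounding $2\beta\sum_{l>t}\hat\alpha_l\le\tfrac{1}{12}\log T$ and pulling out $T^{1/12}$ uniformly---matches exactly the computation the paper uses in its proof of Theorem~\ref{thm_convergence}, and your harmonic-sum bookkeeping for the $D_{max}$ and $\sigma$ prefactors is the natural fleshing-out of what the paper leaves implicit.
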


As indicated in Theorem \ref{thm_gen-convex},  the generalization bound for each algorithm can be separated into three terms corresponding to the three $\mathcal{O}(\cdot)$ terms: heterogeneity level (first), convergence performance (second), sampling variance (third). Note $D_{max}$ in the first term measures data heterogeneity among all agents. A smaller $D_{max}$ indicates clients have more similar datasets, which has a positive effect on the generalization of trained models.  Moreover, generalization bounds above scale inversely with $n$, which is the total sample size. This implies increasing the number of samples gives a better generalization performance. Note that fixing $T$, the rate is with the order of $\mathcal{O}(1/n)$, which is the same as results in the centralized setting \cite{SGDgen,SGDdata18,Yu18}.


Furthermore, when assuming all clients maintain i.i.d. data and applying the Lipschitz continuity condition to bound the gradient, i.e., $\Vert \nabla R(\cdot) \Vert \le L$, we have the following bounds under suitable choices of stepsizes.

\begin{corollary}   \label{coro_gen-iid}
    We suppose Assumptions \ref{assump_Lip-continuous}-\ref{assump_convexity} hold and all clients have i.i.d. datasets, that is, $P_i = P_j$, for any $i,j \in [m]$.
    Then for FedAvg and FedProx, if stepsizes are chosen to be constant, we have $\epsilon_{gen} \le \mathcal{O} ( {L(L + \sigma)}T/{n})$. For SCAFFOLD, if stepsizes are chosen with the order of $\mathcal{O}(c/(2\beta t))$, we have $\epsilon_{gen} \le \mathcal{O}( {L(L + \sigma)} T^{c} \log{T}/{n} )$.
\end{corollary}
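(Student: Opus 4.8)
The plan is to specialize the general convex generalization bounds of Corollary~\ref{coro_gen-convex} to the i.i.d.\ regime, exploiting two simplifications: the heterogeneity terms collapse, and the convergence terms can be replaced by a crude gradient bound. First I would observe that when $P_i = P_j$ for all $i,j$, each local distribution equals the mixture $P$, so $D_i = d_{TV}(P_i, P) = 0$ for every $i$, whence $D_{max} = 0$ and $\tilde D = \sum_i p_i D_i^2 = 0$. Substituting $D_i = 0$ into the three model-difference bounds of Theorem~\ref{thm_gen-convex} eliminates every term carrying a factor $L D_i$, leaving only the gradient-norm and variance contributions.

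Next, rather than routing the gradient-norm sums through the convergence rates of Theorem~\ref{thm_convergence}, I would bound $\mathbb{E}\Vert\nabla R(\theta_t)\Vert$ uniformly by $L$. This is immediate from Assumption~\ref{assump_Lip-continuous}: since $\Vert\nabla l(\theta;z)\Vert \le L$ for all $z$, the triangle inequality together with $\sum_i p_i = 1$ gives $\Vert\nabla R(\theta)\Vert \le \sum_i p_i \mathbb{E}\Vert\nabla l(\theta;z)\Vert \le L$ for every $\theta$. For FedAvg and FedProx with constant stepsizes, $\tilde\alpha_{i,t}$ and $\eta_i$ are constant in $t$, so the remaining sum $\sum_{t=0}^{T-1}\tilde\alpha_{i,t}(1+\beta\tilde\alpha_{i,t})(L+\sigma)$ (respectively, the analogue with $\eta_i$) reduces to $T$ identical $\mathcal{O}(L+\sigma)$ terms, yielding $\mathbb{E}\Vert\theta_T - \theta_T'\Vert = \mathcal{O}((L+\sigma)T/n)$. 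Converting this model difference into stability via $\epsilon \le L\,\mathbb{E}\Vert\theta_T-\theta_T'\Vert$ (Lipschitz continuity again) and invoking Theorem~\ref{thm_stability-gen} gives $\epsilon_{gen} \le \mathcal{O}(L(L+\sigma)T/n)$, which is exactly the SGD-type rate.

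The SCAFFOLD case is the main technical step, owing to the exponential prefactor $\exp(2\beta\sum_{l=t+1}^{T-1}\hat\alpha_l)$ coupled with the diminishing stepsizes. With $\hat\alpha_l$ of order $c/(2\beta(l+1))$, the exponent satisfies $2\beta\sum_{l=t+1}^{T-1}\hat\alpha_l = c\sum_{l=t+1}^{T-1}(l+1)^{-1} = c\log(T/(t+1)) + \mathcal{O}(c)$, so the prefactor collapses from an exponential into the polynomial weight $(T/(t+1))^c$. Since $\gamma^2_t = \mathcal{O}(1/(t+1))$ under the same stepsize scaling, after setting $D_i=0$ and $\Vert\nabla R\Vert\le L$ the model difference is controlled by $(L+\sigma)T^c\sum_{t=0}^{T-1}(t+1)^{-(1+c)}/n$. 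Bounding the harmonic-type sum crudely by $\sum_{t=0}^{T-1}(t+1)^{-1} = \mathcal{O}(\log T)$ (valid for every $c\ge 0$, since $(t+1)^{-(1+c)}\le(t+1)^{-1}$) produces the factor $\log T$ and hence $\epsilon_{gen} \le \mathcal{O}(L(L+\sigma)T^c\log T/n)$. The one place to be careful is matching the stepsize normalization so that $\hat\alpha_t$ and $\gamma^2_t$ indeed scale as $1/(t+1)$; once that bookkeeping is fixed, the log-to-power conversion in the exponent and the harmonic-sum estimate are the only nontrivial computations, and the remaining steps mirror the FedAvg/FedProx arguments.
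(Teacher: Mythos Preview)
Your proposal is correct and follows essentially the same approach as the paper: start from the model-difference bounds of Theorem~\ref{thm_gen-convex}, set $D_i=0$ in the i.i.d.\ case, replace $\mathbb{E}\Vert\nabla R(\theta_t)\Vert$ by the crude Lipschitz bound $L$, and then pass to $\epsilon_{gen}$ via Theorem~\ref{thm_stability-gen}. For SCAFFOLD the paper uses the slightly cruder estimate of bounding the exponent uniformly, $2\beta\sum_{l=t+1}^{T-1}\hat\alpha_l \le c\log T$, so that the prefactor is $T^c$ for every $t$, and then sums $\gamma_t^2 = \mathcal{O}(1/(t+1))$ to pick up the $\log T$; your sharper estimate $(T/(t+1))^c$ combined with the harmonic-sum bound lands in the same place and is equally valid.
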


From Corollary \ref{coro_gen-iid}, we observe that the heterogeneity term disappears because $D_{max} = 0$ in i.i.d. settings. If $\sigma$ is relatively small compared to $L$, for FedAvg, our result is aligned with the bound for SGD \cite{SGDgen}. The reason is that for i.i.d. case $R_i(\cdot) = R(\cdot)$ and thus the server's model essentially performs  SGD (in expectation). For SCAFFOLD, the update reduces to SAGA \cite{SAGA}. Therefore, the generalization bound for SAGA is implied by Corollary \ref{coro_gen-iid}.

If we set $D_{max}=0$ and choose comparable stepsizes, the bounds of Corollary \ref{coro_gen-convex} are tighter than those of Corollary \ref{coro_gen-iid}. The main reason is that using Lipschitz constant $L$ to bound the gradient is usually too loose and the algorithm performances are highly ignored, which, however, should be carefully considered in analysis. In particular, considering FedAvg with $m=1$ and $K=1$, which is then equivalent to classical SGD method, our result is better than the result provided in \cite{SGDgen} when stepsizes are constants, where the order of $T$ reduces from $\mathcal{O}(T)$ to $\mathcal{O}(T^{5/6})$.

\subsection{Non-convex losses}  \label{sec_bounds-nonconvex}

In many practical scenarios, the loss functions are non-convex (e.g., neural networks). Therefore, we provide generalization bounds for non-convex losses in this subsection.

\begin{theorem} \label{thm_gen-nonconvex}
    Under Assumptions \ref{assump_Lip-continuous}-\ref{assump_Lip-smooth}, suppose $K_i = K$ and $\alpha_{i,k} \le \frac{1}{24\beta K (t+1)}$ for FedAvg and SCAFFOLD. Then for FedAvg we have
    \begin{align*}    
        \epsilon_{gen} \le \mathcal{O}\Big( \frac{T^{\frac{1}{24}} \log{T}}{n} (D_{max}+\sigma) \Big) + \mathcal{O} \Big( \big( \frac{\Delta_0}{K m} \big)^{\frac{1}{4}} \frac{T^{\frac{5}{6}}}{n} + \big( \Delta_0^2 \tilde{D} \big)^{\frac{1}{6}} \frac{T^{\frac{3}{4}}}{n} + \sqrt{\Delta_0} \frac{T^{\frac{7}{12}}}{n} \Big) .
    \end{align*}
    For SCAFFOLD,
    \begin{align*}    
        \epsilon_{gen} \le \mathcal{O}\Big( \frac{T^{\frac{1}{8}} \log{T}}{n}  D_{max} \Big) + \mathcal{O} \Big( \big( \frac{\Delta_0}{K m} \Big)^{\frac{1}{4}} \frac{T^{\frac{7}{8}}}{n} + \sqrt{\Delta_0} \frac{T^{\frac{5}{8}}}{n} \big) + \mathcal{O} \Big( \frac{T^{\frac{1}{8}} (\log{T} + 1)}{n} \sigma \Big).
    \end{align*}
    For FedProx, if the eigenvalues of $\nabla^2 R_i(\theta)$ are lower bounded and $\eta_i$ is chosen small enough and diminishing with order $\mathcal{O}(c/t)$, then
    \begin{eqnarray}    \label{eq_FedProx-nonconvex}
        \epsilon_{gen} \le \tilde{\mathcal{O}}\Big( \frac{T^c}{n} D_{max} \Big) + \mathcal{O}\Big( \big( \Delta_0 \tilde{D} \big)^{\frac{1}{2}} \frac{T^{\frac{3}{4}+c}}{n} + \sqrt{\Delta_0} \frac{T^{\frac{1}{2}+c}}{n} \Big) + \tilde{\mathcal{O}}\Big( \frac{T^c}{n}\sigma \Big) .
    \end{eqnarray}
\end{theorem}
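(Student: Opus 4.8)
The plan is to follow the same two-stage template that produced the convex bounds (Corollary \ref{coro_gen-convex}), replacing the convex stability estimate of Theorem \ref{thm_gen-convex} by a non-convex analog and then reusing the convergence rates of Theorem \ref{thm_convergence}, which are already stated to hold in the non-convex setting. First I would invoke Theorem \ref{thm_stability-gen} to reduce $\epsilon_{gen}$ to the on-average stability $\epsilon$, and then apply the Lipschitz continuity of $l$ (Assumption \ref{assump_Lip-continuous}) to get $\epsilon \le L \max_{i} \mathbb{E}\Vert \theta_T - \theta'_T \Vert$, where $\{\theta_t\}$ and $\{\theta'_t\}$ are the server trajectories on neighboring datasets $\mathcal{S}$ and $\mathcal{S}^{(i)}$. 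It therefore suffices to bound the expected model difference $\mathbb{E}\Vert \theta_T - \theta'_T \Vert$ for each algorithm, exactly as in the convex case.

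The core is a recursion for the per-round growth of $\Vert \theta_t - \theta'_t \Vert$. The crucial difference from the convex analysis is that a gradient step $\theta \mapsto \theta - \alpha \nabla l(\theta)$ is no longer non-expansive; smoothness (Assumption \ref{assump_Lip-smooth}) only gives the one-step expansion estimate $\Vert(\theta - \alpha\nabla l(\theta)) - (\theta' - \alpha\nabla l(\theta'))\Vert \le (1+\alpha\beta)\Vert \theta - \theta'\Vert$. Iterating over the $K$ local steps and the $T$ rounds accumulates an expansion factor of the form $\exp(c\beta \sum_l \tilde{\alpha}_l)$ for FedAvg and SCAFFOLD, mirroring the exponential already present in the SCAFFOLD line of Theorem \ref{thm_gen-convex}. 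The role of the diminishing stepsize $\alpha_{i,k}\le 1/(24\beta K (t+1))$ is precisely to tame this exponential: since then $\tilde{\alpha}_{i,t}\le 1/(24\beta(t+1))$, the tail sum $\sum_{l=t+1}^{T-1}\tilde{\alpha}_l$ is $\mathcal{O}(\log T/\beta)$, so $\exp(c\beta\sum_l\tilde{\alpha}_l)$ collapses to a small polynomial $T^{\mathcal{O}(1)}$ (namely $T^{1/24}$ or $T^{1/8}$ depending on the leading constant $c$), which is exactly the $T$-power multiplying $D_{max}$ and $\sigma$ in the statement.

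Inside each round I would split the driving term into the three contributions already isolated in Theorem \ref{thm_gen-convex}: the heterogeneity term, obtained by bounding $\Vert \nabla R_i(\theta) - \nabla R(\theta)\Vert \le 2L D_i$ via the total-variation estimate $\Vert \mathbb{E}_{P_i}[\nabla l] - \mathbb{E}_{P}[\nabla l]\Vert \le 2L\, d_{TV}(P_i,P)$ (using $\Vert\nabla l\Vert\le L$ from Assumption \ref{assump_Lip-continuous}); the true-gradient term $\mathbb{E}\Vert \nabla R(\theta_t)\Vert$; and the sampling-noise term controlled by $\sigma$ through Assumption \ref{assump_bounded-grad-var}. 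Summing the recursion over $t$ leaves a weighted sum $\sum_t \exp(c\beta\sum_l\tilde{\alpha}_l)\,\gamma_t\,\mathbb{E}\Vert\nabla R(\theta_t)\Vert$ for the middle group, which I would bound using Theorem \ref{thm_convergence} (adapted to the diminishing schedule and exponential weighting used here, so that the FedAvg rates inherit the SCAFFOLD-type powers). This turns the middle contribution into the $\Delta_0$-dependent rates. Finally, taking the maximum over the perturbed client $i$ replaces $D_i$ by $D_{max}$, and collecting the three groups yields the stated bounds for FedAvg and SCAFFOLD.

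FedProx requires a separate expansion analysis, since update \eqref{eq_FedProx-update} is an implicit proximal step rather than an explicit gradient step. Here I would use the extra hypothesis that the eigenvalues of $\nabla^2 R_i$ are lower bounded, say $\nabla^2 R_i \succeq -\mu I$: choosing $\eta_i$ small enough that $1/\eta_i > \mu$ makes the proximal subproblem strongly convex, hence well-posed and single-valued, and identifies the proximal map with the resolvent $(\mathrm{I}+\eta_i\nabla \hat{R}_{\mathcal{S}_i})^{-1}$, whose Lipschitz constant is at most $1/(1-\eta_i\mu)$. With the diminishing choice $\eta_i=\mathcal{O}(c/t)$ the product of these per-round factors satisfies $\prod_t 1/(1-\eta_i\mu)\le \exp(\mu\sum_t\eta_i)=T^{\mathcal{O}(c)}$ up to logarithmic corrections, producing the $\tilde{\mathcal{O}}$ terms in \eqref{eq_FedProx-nonconvex}; the heterogeneity, gradient, and noise terms are then handled exactly as above. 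The main obstacle throughout is controlling this non-convex expansion: one must check that each stepsize schedule keeps the accumulated expansion factor polynomial rather than genuinely exponential in $T$, and for FedProx additionally verify that the Hessian lower bound renders the implicit update a contraction in the relevant regime.
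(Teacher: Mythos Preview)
Your proposal is correct and matches the paper's approach essentially step for step: the paper also replaces non-expansiveness by the $(1+\alpha\beta)$ expansion from smoothness, accumulates this into $\exp\big(\beta\sum_l\tilde\alpha_l\big)$ which the diminishing schedule collapses to a small power of $T$, isolates the same three contributions (heterogeneity via the total-variation bound, the $\mathbb{E}\Vert\nabla R(\theta_t)\Vert$ term handled through Theorem~\ref{thm_convergence}, and the $\sigma$ term), and for FedProx derives the per-round Lipschitz constant $1/(1-\eta_i\mu)$ of the proximal map from the Hessian lower bound and then controls the product via the $\mathcal{O}(c/t)$ schedule. The only detail you leave implicit that the paper spells out is the per-step case split on whether the perturbed sample is selected (probability $1/n_i$), which is where the $1/n$ factor enters the recursion; otherwise your outline is the paper's proof.
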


In Theorem \ref{thm_gen-nonconvex}, the bounds are similar to those of convex cases, i.e., data heterogeneity, algorithm convergence and sampled variance jointly affect the generalization error of the models. In addition, we remark that in practice $T$ is usually characterized by a function of $n$ and $m$. Then in this sense, the generalization bounds can be further simplified in terms of the total sample size $n$ and the number of clients $m$. For example, considering \eqref{eq_FedProx-nonconvex} with one full pass local training, i.e., $T = \mathcal{O}(n/m)$, we obtain $\epsilon_{gen} \le \mathcal{O}(m^{-c}n^{-(1-c)} + m^{-3/4-c}n^{-(1/4-c)})$, meaning the generalization error diminishes as the number of clients participating in the learning process increases.

However, we remark that only upper bounds on generalization errors are provided, and some constants are ignored in our $\mathcal{O}(\cdot)$ notation. In reality, these ignored constants (e.g. stepsizes) could largely affect algorithms' generalization performances. Thus, our bounds might not be tight enough to explain accurately the performances of algorithms. In addition, for different algorithms, the selection of stepsizes is usually a tricky task, meaning optimal stepsizes for different algorithms are chosen during the training process. This implies, in general, it is hard to compare generalization errors among different algorithms by directly analyzing our bounds. Instead, the main insight shown by our results is that explicit dependency of data heterogeneity to generalization is clearly characterized through total variation among local distributions. This is a first step towards this direction, as existing  literature \cite{MAML21,PFLgen,ICLR23} fails to characterize the connection of data heterogeneity to generalization bounds.

\section{Experiments} \label{sec_experiments}

In this section, we numerically evaluate the generalization errors of models trained by FedAvg, SCAFFOLD and FedProx under non-convex loss functions, given different heterogeneity levels of clients' datasets.

{\bf Experimental Setups.} We investigate classification problems using the MNIST dataset \cite{dataset}. Each client maintains a three-layer neural network comprising two convolutional layers and a fully connected layer. We focus on a federated learning system involving $10$ clients. The training is based on Personalized Federated Platform \cite{platform}.

Next, we elaborate on how we construct different clients' datasets with different heterogeneity levels. We introduce various levels of heterogeneity among the clients' local data distributions to examine the impact of heterogeneity on the generalization performance. In the case of extreme heterogeneity, labeled ``fully non-i.i.d.'' scenario, each client has two specific labels out of the ten available MNIST. In the ``i.i.d.'' scenario, the local datasets are uniformly mixed with all ten labels. We also consider the intermediate scenarios labeled ``$\rho$ non-i.i.d.'', where a fraction $\rho$ of data follows the ``fully non-i.i.d.'' assignment, while the remaining fraction $1-\rho$ adheres to the ``i.i.d.'' assignment. We call $\rho$ the heterogeneity level of local data distributions and run the experiments for  $\rho=0, 0.2,0.5,0.8, 1$ cases ($5$ in total), where  $\rho=0$ and $\rho=1$ are the ``i.i.d.'' and ``fully non-i.i.d.'' cases, respectively.

In different settings, we start the algorithms from the same initial value with the same training loss. As the training goes on, the training losses decrease. We compare trained models under different levels of training losses. To quantify the generalization errors, we use the absolute difference between the training and testing losses, i.e., $| R(\mathcal{A}(S)) - \hat{R}_{\mathcal{S}}(\mathcal{A}(\mathcal{S})) |$. We terminate the algorithms when either the training loss reaches a desirable level or the number of training steps achieves $T=1000$.
\begin{figure}[h]
    \centering
    \subfigure[FedAvg]{\includegraphics[width=0.325\textwidth]{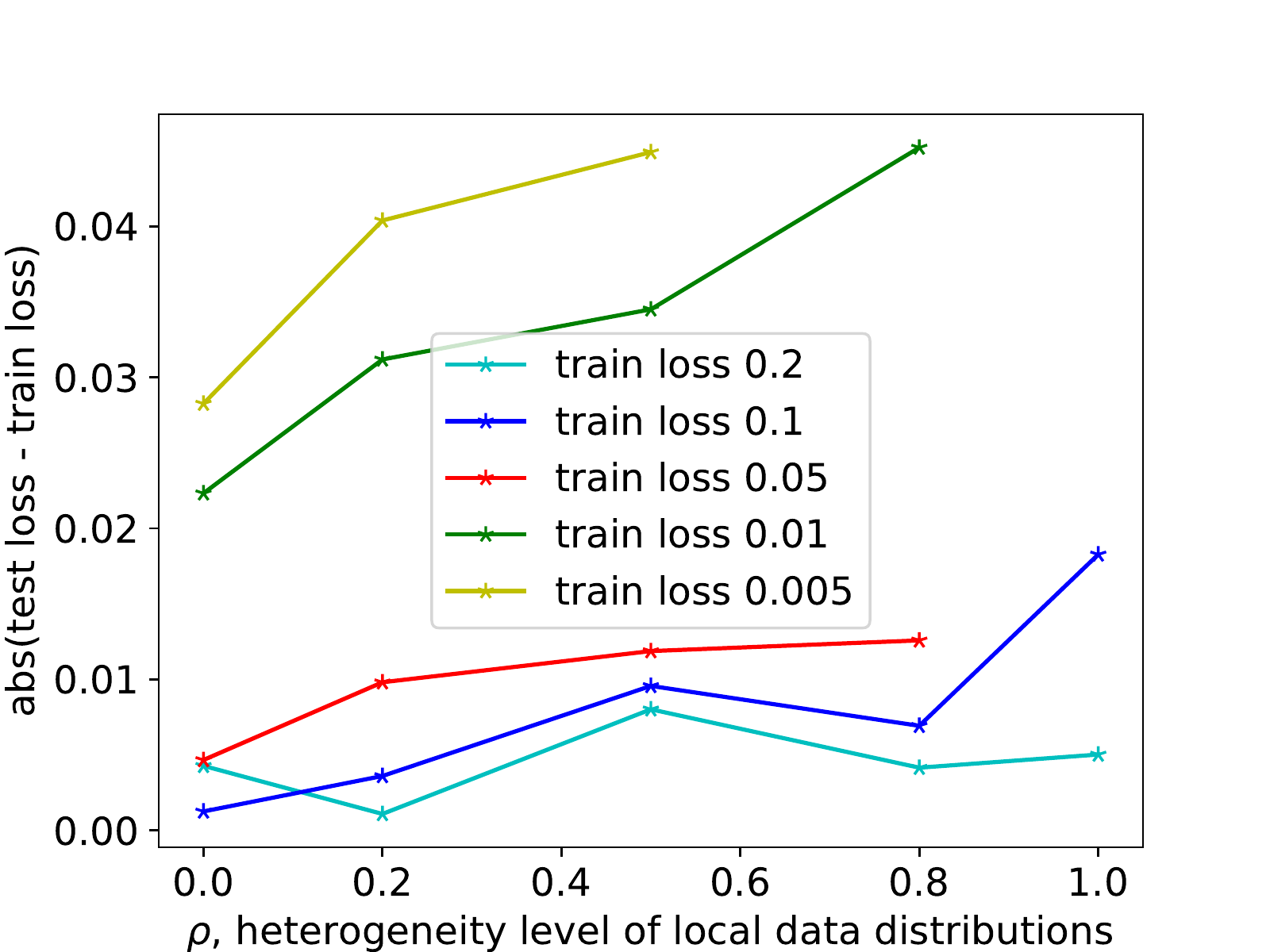}}
    \subfigure[SCAFFOLD]{\includegraphics[width=0.325\textwidth]{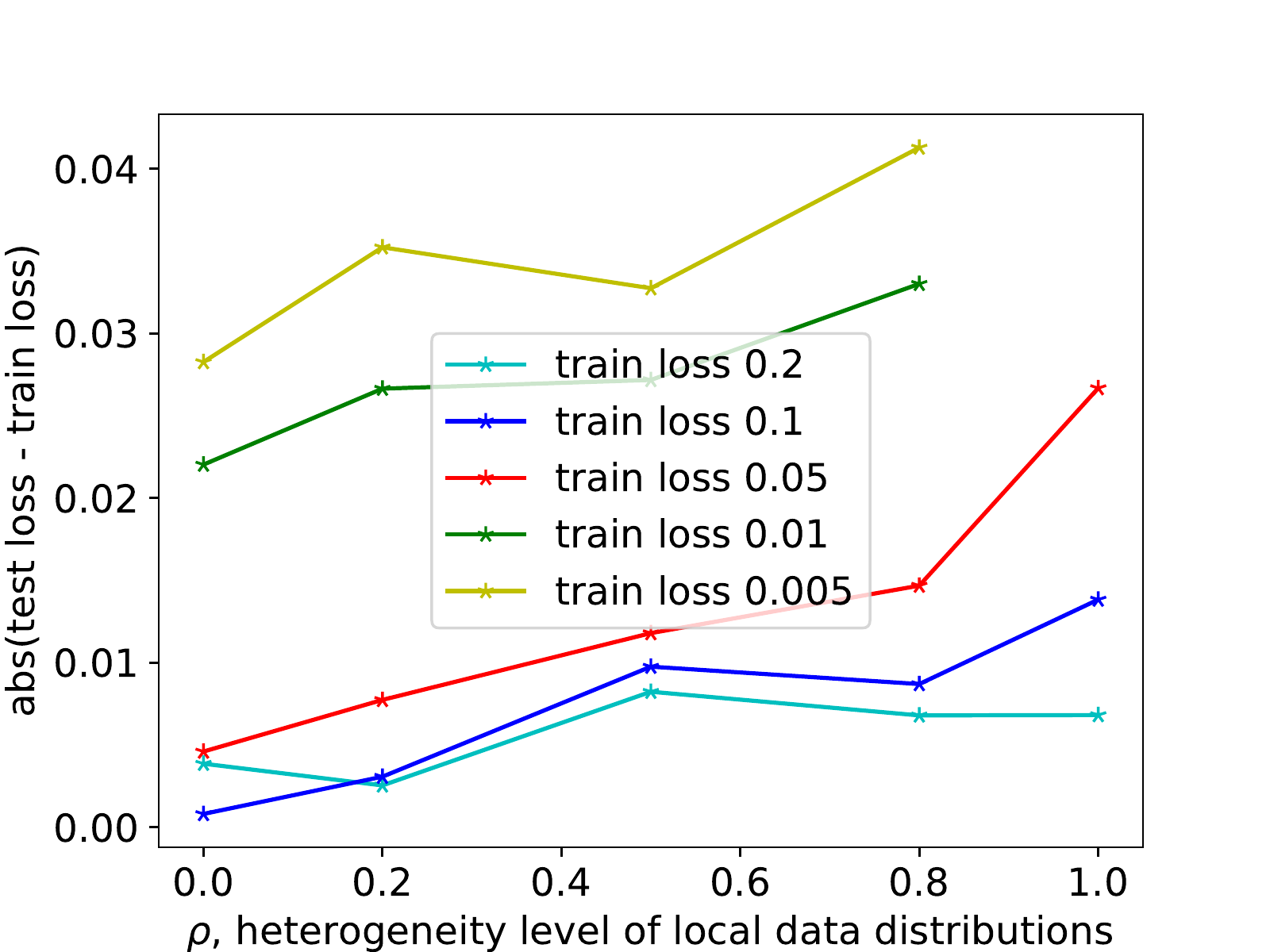}}
    \subfigure[FedProx]{\includegraphics[width=0.325\textwidth]{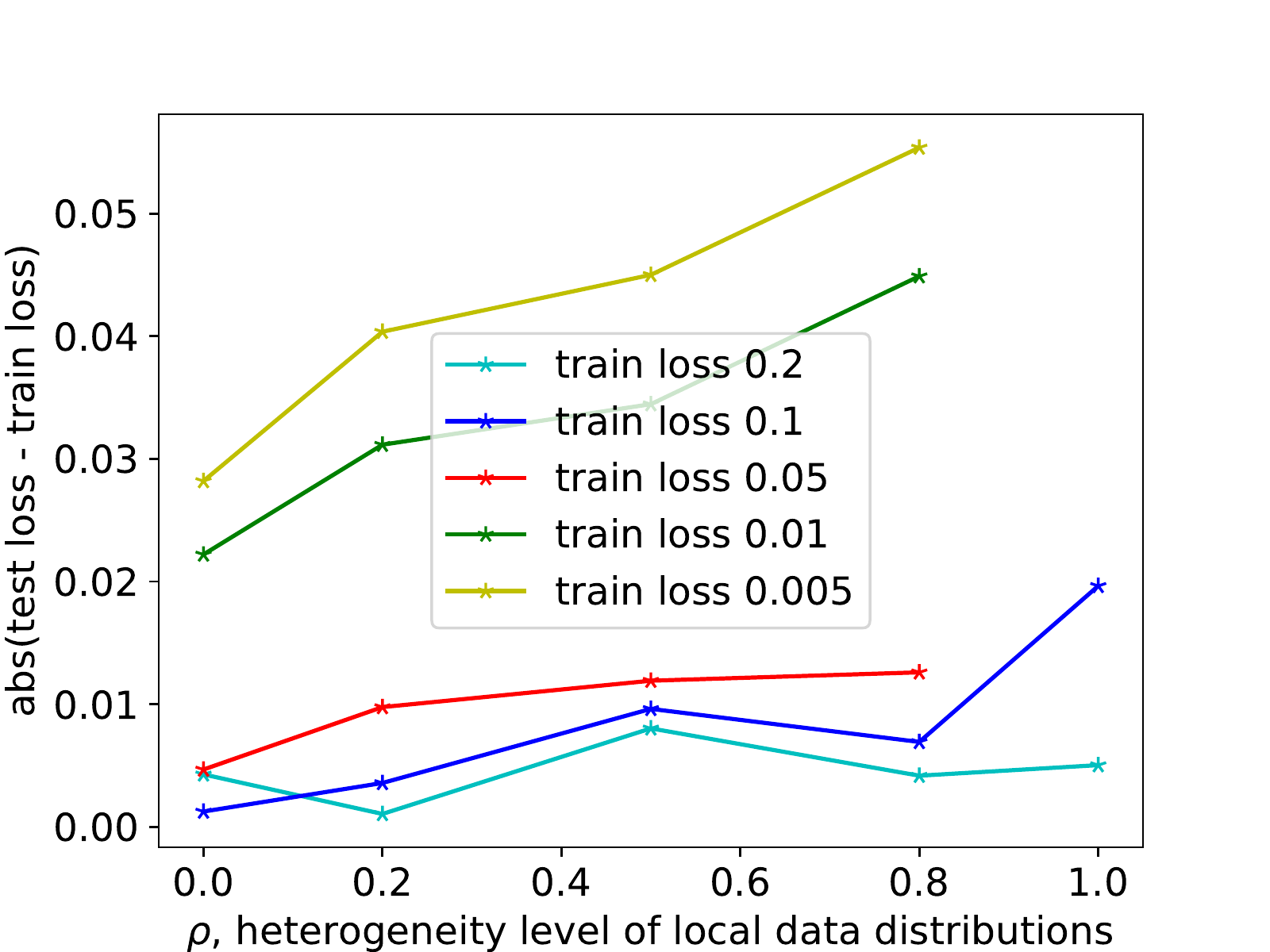}}
    \caption{Generalization errors of FedAvg,  SCAFFOLD, and FedProx}
    \label{fig:gen}
\end{figure}

{\bf Numerical Results.} The generalization errors of FedAvg, SCAFFOLD, and FedProx are shown in Fig. \ref{fig:gen}. The x-axis shows the heterogeneity level of local data distributions ($\rho$) and the y-axis shows the generalization errors of the algorithms. We note that the algorithms in some heterogeneous cases ($\rho=0.8$ or $\rho=1$) did not achieve some levels of training losses (e.g. $0.005$) before they terminated. So there are less than $5$ points in the corresponding training loss curves. The figure shows that the generalization error increases as data heterogeneity increases, which is aligned with our theoretical results.  Moreover, vertically, the generalization error also increases as the training loss level decreases. Noting that a smaller training loss generally needs more iterations in the training process. Hence these numerical results are also consistent with our bounds, which implies the generalization errors increase as $T$ gets bigger. 

\section{Conclusion}    \label{sec_conclusion}

In this paper, we provide generalization upper bounds for FedAvg, SCAFFOLD and FedProx by means of on-average stability under both convex and non-convex loss functions. Our bounds explicitly capture the effect of data heterogeneity and algorithm convergence properties on generalization performances of different algorithms. In particular, under the i.i.d. case, FedAvg reduces to the SGD method and our results are shown to be consistent to those of SGD methods.

\clearpage

\appendix

\section{Federated Learning Algorithms} \label{apx_algorithms}
In this section, we summarize FedAvg, SCAFFOLD and FedProx in detail in Algorithms \ref{alg_FedAvg},\ref{alg_SCAFFOLD},\ref{alg_FedProx}, respectively.

\begin{algorithm}
    \caption{FedAvg} \label{alg_FedAvg}
    \begin{algorithmic}[1]
        \Require $\theta^0$ as initialization of the server
        \For {$t=0,1,\dots,T-1$}      
            \State $\theta^{t+1}_{i,0} = \theta^{t},$~$\forall i=1,\dots,m$\qquad
            \For {$k=0,1,\dots,K_i-1$} (in parallel for all agents)
                \State $\theta_{i, k+1}^{t+1} = \theta_{i, k}^{t+1} - \alpha_{i,k} \nabla g_i(\theta_{i,k}^{t+1}) $
            \EndFor
            \State $\theta^{t+1} =  \sum_{i=1}^m p_i\theta^{t+1}_{i,K_i} $
        \EndFor
        \Ensure $\theta^T$ given by the server
    \end{algorithmic}
\end{algorithm}

\begin{algorithm}
    \caption{SCAFFOLD} \label{alg_SCAFFOLD}
    \begin{algorithmic}[1]
        \Require $\theta^0$ as initialization of the server
        \For {$t=0,1,\dots,T-1$}      
            \State Server broadcasts $\theta^t$
            \State Agents compute $\nabla g_i(\theta^t)$ and send it to the server
            \State Server computes $g(\theta^t)=\frac{1}{m}\sum_{i=1}^m g_i(\theta^t)$ and broadcasts it
            \State Each agent $i$ for $i=1,\dots,m$ sets
 $\theta^{t+1}_{i,0} = \theta^{t}$
            \For {$k=0,1,\dots,K_i-1$} (in parallel for all agents)
                \State $\theta_{i, k+1}^{t+1} = \theta_{i, k}^{t+1} - \alpha_{i,k} \big(\nabla g_i(\theta_{i,k}^{t+1}) - g_i(\theta^t) + g(\theta^t) \big) $
            \EndFor
            \State $\theta^{t+1} =  \sum_{i=1}^m p_i\theta^{t+1}_{i,K_i} $
        \EndFor
        \Ensure $\theta^T$ given by the server
    \end{algorithmic}
\end{algorithm}

\begin{algorithm}
    \caption{FedProx} \label{alg_FedProx}
    \begin{algorithmic}[1]
        \Require $\theta^0$ as initialization of the server
        \For {$t=0,1,\dots,T-1$}      
                \State $\theta_i^{t+1} = \arg \min_{\theta} \hat{R}_{\mathcal{S}_i}(\theta) + \frac{1}{2\eta_i} \Vert \theta - \theta^t \Vert^2 $ (in parallel for all agents)
            \State $\theta^{t+1} = \sum_{i=1}^m p_i\theta^{t+1}_i $
        \EndFor
        \Ensure $\theta^T$ given by the server
    \end{algorithmic}
\end{algorithm}

\section{Proof of Theorem \ref{thm_stability-gen}}  \label{apx_proof-gen-stability}
In this section, we provide the proof of Theorem \ref{thm_stability-gen}.

Given $\mathcal{S}$ and $\mathcal{S}^{(i)}$ which are neighboring datasets defined in Definition \ref{def_neighboring-data},
\begin{eqnarray}
        \mathbb{E}_{\mathcal{S}} \left[ \hat{R}_{\mathcal{S}_i}(\mathcal{A(S)}) \right] &=& \mathbb{E}_{\mathcal{S}} \left[\frac{1}{n_i} \sum_{j=1}^{n_i} l(\mathcal{A(S)}; z_{i,j}) \right]    \nonumber   \\
        &=& \frac{1}{n_i} \sum_{j=1}^{n_i} \mathbb{E}_{\mathcal{S}} \left[ l(\mathcal{A(S)}; z_{i,j}) \right]   \nonumber   \\
        &=& \frac{1}{n_i} \sum_{j=1}^{n_i} \mathbb{E}_{\mathcal{S}, z'_{i,j}} \left[ l(\mathcal{A}(\mathcal{S}^{(i)}); z'_{i,j}) \right] .    \nonumber
    \end{eqnarray}
    Moreover, we have
    \begin{equation}
        \mathbb{E}_{\mathcal{S}} \left[ R_i(\mathcal{A(S)}) \right] = \frac{1}{n_i} \sum_{j=1}^{n_i} \mathbb{E}_{\mathcal{S}, z'_{i,j}} \left[ l(\mathcal{A(S)}; z'_{i,j}) \right] ,    \nonumber 
    \end{equation}
    since $z'_{i,j}$ and $\mathcal{S}$ are independent for any $j$.
    Thus,
    \begin{eqnarray}
        \mathbb{E}_{\mathcal{A,S}} \left[  R(\mathcal{A(S)}) - \hat{R}(\mathcal{A(S)})  \right] &\le& \mathbb{E}_{\mathcal{A,S}} \left[ \sum_{i=1}^m \frac{n_i}{n} \left( R_i(\mathcal{A(S)}) - \hat{R}_{\mathcal{S}_i}(\mathcal{A(S)}) \right) \right]    \nonumber   \\
        &=& \sum_{i=1}^m \frac{n_i}{n} \mathbb{E}_{\mathcal{A}} \left[ \frac{1}{n_i} \sum_{j=1}^{n_i} \mathbb{E}_{\mathcal{S}, z'_{i,j}} \left( l(\mathcal{A(S)}; z'_{i,j}) - l(\mathcal{A}(\mathcal{S}^{(i)}); z'_{i,j}) \right) \right]  \nonumber   \\
        &\le& \epsilon ,   \nonumber
    \end{eqnarray}
    where the last inequality follows Definition \ref{def_stability}.
    This completes the proof.

\section{Generalization Bounds for Convex Losses} 
 \label{apx_proof-convex}

In this section, we drop index $t$ when context is clear for simplicity. We first provide the bound involving data heterogeneity by means of total variation between local distribution and global one.
\begin{lemma}   \label{lmm_TV-bound}
    Under Assumption \ref{assump_Lip-continuous} and given $i \in [m]$, for any $\theta$ we have
    $$
        \Vert \nabla R_i(\theta) - \nabla R(\theta) \Vert \le 2L D_i,
    $$
    where $D_i = d_{TV}(P_i, P)$ with $P = \sum_{i=1}^m p_i P_i$.
\end{lemma}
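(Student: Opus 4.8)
The plan is to reduce the claim to the elementary fact that an $L$-Lipschitz loss has gradients bounded in norm by $L$, and then to bound a vector-valued integral against the signed measure $P_i - P$ by its total variation mass. First I would record the two ingredients that make the two sides comparable: since $R(\theta) = \sum_j p_j \mathbb{E}_{z \sim P_j}[l(\theta;z)] = \mathbb{E}_{z \sim P}[l(\theta;z)]$ with $P = \sum_j p_j P_j$, both $R_i$ and $R$ are expectations of the \emph{same} loss $l(\theta;\cdot)$ under the two distributions $P_i$ and $P$. Interchanging gradient and expectation (justified by the bounded-gradient property below via dominated convergence) gives $\nabla R_i(\theta) = \mathbb{E}_{z \sim P_i}[\nabla l(\theta;z)]$ and $\nabla R(\theta) = \mathbb{E}_{z \sim P}[\nabla l(\theta;z)]$, so that
\begin{equation*}
    \nabla R_i(\theta) - \nabla R(\theta) = \int \nabla l(\theta;z)\, d(P_i - P)(z).
\end{equation*}
Separately, Assumption \ref{assump_Lip-continuous} implies $\Vert \nabla l(\theta;z) \Vert \le L$ for all $\theta, z$, which I would verify by the standard argument of taking the directional derivative of the Lipschitz bound (or dividing $|l(\theta;z) - l(\theta';z)| \le L\Vert \theta - \theta' \Vert$ by $\Vert \theta - \theta' \Vert$ and letting $\theta' \to \theta$).

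Next I would bound the vector integral. The cleanest route is the Jordan--Hahn decomposition of the signed measure $P_i - P = \nu^+ - \nu^-$, where $\nu^+$ and $\nu^-$ are the (nonnegative) positive and negative parts. Under the standard convention these satisfy $\nu^+(\Omega) = \nu^-(\Omega) = d_{TV}(P_i, P) = D_i$. Then by the triangle inequality and Jensen,
\begin{equation*}
    \Vert \nabla R_i(\theta) - \nabla R(\theta) \Vert \le \left\Vert \int \nabla l(\theta;z)\, d\nu^+ \right\Vert + \left\Vert \int \nabla l(\theta;z)\, d\nu^- \right\Vert \le L\,\nu^+(\Omega) + L\,\nu^-(\Omega) = 2 L D_i,
\end{equation*}
which is exactly the claimed bound. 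An equivalent approach avoids the decomposition by using the dual-norm characterization $\Vert v \Vert = \sup_{\Vert u \Vert \le 1} \langle u, v\rangle$: for each fixed unit vector $u$ the scalar function $z \mapsto \langle u, \nabla l(\theta;z)\rangle$ is bounded in magnitude by $L$, so the classical scalar inequality $|\int f\, d(P_i-P)| \le 2\,\Vert f\Vert_\infty\, d_{TV}(P_i,P)$ applies and taking the supremum over $u$ yields the same constant $2LD_i$.

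I expect the only genuinely delicate point to be the factor of $2$, i.e. being consistent about the convention for total variation (the sup-over-events definition $d_{TV}(P_i,P) = \sup_A |P_i(A) - P(A)|$, equal to $\tfrac12 \int |dP_i - dP|$), since this is what fixes $\nu^+(\Omega) = \nu^-(\Omega) = D_i$ and hence the constant in the bound. Everything else—the interchange of gradient and expectation and the gradient norm bound—is routine given Assumption \ref{assump_Lip-continuous} and the integrability afforded by $\Vert \nabla l(\theta;z)\Vert \le L$. I would therefore present the Jordan-decomposition version as the main proof, noting that it holds for \emph{any} fixed $\theta$ and does not require convexity, so the lemma is available throughout both the convex and non-convex analyses.
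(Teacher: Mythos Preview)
Your proposal is correct and follows essentially the same approach as the paper: express $\nabla R_i(\theta)-\nabla R(\theta)$ as an integral of $\nabla l(\theta;z)$ against the signed measure $P_i-P$, bound $\Vert\nabla l(\theta;z)\Vert\le L$ via Assumption~\ref{assump_Lip-continuous}, and then use the total variation identity $\int |dP_i-dP|=2D_i$. Your treatment via the Jordan--Hahn decomposition (and the care about the factor of $2$) is more precise than the paper's informal manipulation of $|dP_i(z)-dP(z)|$, but the argument is the same.
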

\begin{proof}
    Let $\mathcal{Z}_i$ and $\mathcal{Z}$ be the supports of $P_i$ and $P$, respectively.
    \begin{eqnarray}
        \Vert \nabla R_i(\theta) - \nabla R(\theta) \Vert &=& \Vert \nabla_{\theta} \int_{\mathcal{Z}_i} l(\theta;z) d P_i(z) - \nabla_{\theta} \int_{\mathcal{Z}} l(\theta;z) d P(z)  \Vert  \nonumber   \\
        &=& \Vert \int_{\mathcal{Z}_i \cup \mathcal{Z}} \big(\nabla_{\theta}l(\theta;z) d P_i(z) - \nabla_{\theta}l(\theta;z) d P(z) \big) \Vert \nonumber   \\
        &\le& \int_{\mathcal{Z}_i \cup \mathcal{Z}} \Vert \nabla_{\theta}l(\theta;z) d P_i(z) - \nabla_{\theta}l(\theta;z) d P(z) \Vert \nonumber   \\
        &=& \int_{\mathcal{Z}_i \cup \mathcal{Z}} \Vert \nabla l(\theta;z) \Vert \Vert dP_i(z) - dP(z) \Vert    \nonumber   \\
        &\le& \int_{\mathcal{Z}_i \cup \mathcal{Z}} L | dP_i(z) - dP(z) |   \nonumber   \\
        &=& 2L d_{TV} (P_i, P)  \nonumber
    \end{eqnarray}
    by noting the definition of total variation of two distributions $P$ and $Q$ is 
    $$
        d_{TV}(P,Q) = \frac{1}{2}\int |dP - dQ|.
    $$
\end{proof}

When the loss function is convex, the gradient descent operator has the non-expansiveness property stated by the following lemma.
\begin{lemma}   \label{lmm_nonexp}
    Suppose $f(x)$ is a $\beta$-Lipschitz smooth, convex function with respect to $x$. Consider gradient descent operator $G_{\alpha}(x) := x - \alpha \nabla f(x)$. Then, for $\alpha \le 1/\beta$,
    $$
        \Vert G_{\alpha}(x) - G_{\alpha}(y) \Vert \le \Vert x - y \Vert.
    $$
\end{lemma}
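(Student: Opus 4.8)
The plan is to work with the squared distance and reduce everything to the co-coercivity of the gradient. First I would write $G_\alpha(x) - G_\alpha(y) = (x-y) - \alpha(\nabla f(x) - \nabla f(y))$ and expand
\begin{align*}
    \Vert G_\alpha(x) - G_\alpha(y) \Vert^2 = \Vert x - y \Vert^2 - 2\alpha \langle \nabla f(x) - \nabla f(y), x - y \rangle + \alpha^2 \Vert \nabla f(x) - \nabla f(y) \Vert^2 .
\end{align*}
The entire argument then hinges on controlling the cross term $\langle \nabla f(x) - \nabla f(y), x - y \rangle$ from below by $\frac{1}{\beta}\Vert \nabla f(x) - \nabla f(y)\Vert^2$, which is exactly the co-coercivity (Baillon--Haddad) inequality for convex $\beta$-smooth functions.

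The main obstacle is establishing this co-coercivity bound, since the rest is algebra. I would prove it by the standard auxiliary-function trick: define $h_x(z) := f(z) - \langle \nabla f(x), z \rangle$, which is convex, $\beta$-smooth, and minimized at $z = x$ because $\nabla h_x(x) = 0$. Applying the descent lemma to $h_x$ at the point $z = y$ with step $1/\beta$ and using that $x$ is a global minimizer gives $h_x(y) - h_x(x) \ge \frac{1}{2\beta}\Vert \nabla f(y) - \nabla f(x)\Vert^2$. Writing the symmetric inequality with $h_y(z) := f(z) - \langle \nabla f(y), z \rangle$ and adding the two, the function values $f(x), f(y)$ cancel and the linear terms combine into precisely $\langle \nabla f(x) - \nabla f(y), x - y \rangle$, yielding
\begin{align*}
    \langle \nabla f(x) - \nabla f(y), x - y \rangle \ge \frac{1}{\beta} \Vert \nabla f(x) - \nabla f(y) \Vert^2 .
\end{align*}

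Substituting this lower bound into the expansion and collecting the terms multiplying $\Vert \nabla f(x) - \nabla f(y)\Vert^2$ gives
\begin{align*}
    \Vert G_\alpha(x) - G_\alpha(y) \Vert^2 \le \Vert x - y \Vert^2 - \alpha \Big( \frac{2}{\beta} - \alpha \Big) \Vert \nabla f(x) - \nabla f(y) \Vert^2 .
\end{align*}
Finally, I would invoke the stepsize restriction: for $0 < \alpha \le 1/\beta$ we have $\frac{2}{\beta} - \alpha \ge \frac{1}{\beta} > 0$, so the second term is nonpositive and hence $\Vert G_\alpha(x) - G_\alpha(y)\Vert^2 \le \Vert x - y \Vert^2$. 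Taking square roots completes the proof. Note this is cleaner than going through the general four-inequality lemma (\emph{cf.} the strongly convex variant) because convexity alone already suffices for the nonexpansiveness claimed here; the only subtlety to flag is that I should state the co-coercivity step explicitly rather than treat it as folklore, since it is the one nontrivial ingredient.
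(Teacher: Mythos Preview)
Your proof is correct and follows essentially the same route as the paper's: expand the squared distance, apply the co-coercivity inequality $\langle \nabla f(x) - \nabla f(y), x - y \rangle \ge \tfrac{1}{\beta}\Vert \nabla f(x) - \nabla f(y)\Vert^2$, and conclude using $\alpha \le 1/\beta$. The only difference is that the paper quotes co-coercivity as a known fact whereas you supply the Baillon--Haddad derivation, which is a welcome addition rather than a departure.
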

\begin{proof}
    Since $f$ is $\beta$-smooth and convex, we know that
    $$
        \langle \nabla f(x) - \nabla f(y), x - y \rangle \ge \frac{1}{\beta} \Vert \nabla f(x) - \nabla f(y) \Vert^2 .
    $$
    Using this fact,
    \begin{eqnarray}
        \Vert G_{\alpha}(x) - G_{\alpha}(y) \Vert^2 &=& \Vert x - y -\alpha(\nabla f(x) - \nabla f(y)) \Vert^2  \nonumber   \\
        &=& \Vert x - y \Vert^2 + \alpha^2 \Vert \nabla f(x) - \nabla f(y) \Vert^2 - \alpha \langle \nabla f(x) - \nabla f(y), x - y \rangle  \nonumber   \\
        &\le& \Vert x-y \Vert^2 + \alpha (\alpha - \beta^{-1}) \Vert \nabla f(x) - \nabla f(y) \Vert^2  \nonumber   \\
        &\le& \Vert x-y \Vert^2 \nonumber
    \end{eqnarray}
    when $\alpha \le 1/\beta$.
\end{proof}

The proximal operator is also non-expansive, which is shown by the following lemma.
\begin{lemma}   \label{lmm_prox-convex}
    Suppose $f$ is convex. Define the proximal operator by 
    $$
        \mathrm{prox}_{f}(x) := \arg\min_{y} f(y) + \frac{1}{2}\Vert y - x \Vert^2.
    $$
    Then, for any $x_1$, $x_2$, we have
    $$
        \Vert \mathrm{prox}_{f}(x_1) - \mathrm{prox}_{f}(x_2) \Vert \le \Vert x_1 - x_2 \Vert.
    $$
\end{lemma}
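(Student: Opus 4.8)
The plan is to exploit the first-order optimality conditions of the two proximal subproblems together with the monotonicity of the subdifferential of a convex function. First I would set $y_1 := \mathrm{prox}_f(x_1)$ and $y_2 := \mathrm{prox}_f(x_2)$. Since each $y_k$ minimizes the (strongly convex) objective $f(y) + \frac{1}{2}\Vert y - x_k \Vert^2$, the optimality condition reads $0 \in \partial f(y_k) + (y_k - x_k)$, equivalently $x_k - y_k \in \partial f(y_k)$ for $k = 1, 2$. This converts the geometric statement about the map into an algebraic statement about subgradients.

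Next I would invoke monotonicity of the subdifferential: for convex $f$ and any subgradients $g_1 \in \partial f(y_1)$, $g_2 \in \partial f(y_2)$, one has $\langle g_1 - g_2, y_1 - y_2 \rangle \ge 0$. Substituting $g_k = x_k - y_k$ gives $\langle (x_1 - y_1) - (x_2 - y_2), y_1 - y_2 \rangle \ge 0$, which rearranges to the firm non-expansiveness inequality $\Vert y_1 - y_2 \Vert^2 \le \langle x_1 - x_2, y_1 - y_2 \rangle$.

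Finally I would apply Cauchy--Schwarz to the right-hand side, obtaining $\Vert y_1 - y_2 \Vert^2 \le \Vert x_1 - x_2 \Vert \, \Vert y_1 - y_2 \Vert$. When $y_1 = y_2$ the claim is immediate; otherwise dividing through by $\Vert y_1 - y_2 \Vert$ yields $\Vert y_1 - y_2 \Vert \le \Vert x_1 - x_2 \Vert$, which is exactly the asserted non-expansiveness of $\mathrm{prox}_f$.

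I do not expect a genuine obstacle here, as this is the classical firm-nonexpansiveness property of the proximal map. The only point requiring mild care is that the lemma assumes merely convexity of $f$, not smoothness, so the argument must pass through subgradients and subdifferential monotonicity rather than gradients; with smoothness the optimality condition would simply be $\nabla f(y_k) + (y_k - x_k) = 0$ and the same chain applies verbatim. If one prefers to avoid subdifferential calculus, an equivalent route writes the two defining minimization inequalities $f(y_2) + \frac{1}{2}\Vert y_2 - x_1 \Vert^2 \ge f(y_1) + \frac{1}{2}\Vert y_1 - x_1 \Vert^2$ and its symmetric counterpart (swapping the roles of the indices), adds them, and simplifies; but the monotonicity argument is cleaner and shorter, so that is the one I would present.
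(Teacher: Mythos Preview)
Your proposal is correct and follows essentially the same route as the paper: first-order optimality conditions, monotonicity of the (sub)gradient of $f$ to obtain $\Vert y_1 - y_2 \Vert^2 \le \langle x_1 - x_2, y_1 - y_2 \rangle$, then Cauchy--Schwarz. The only minor difference is that the paper writes the optimality condition with $\nabla f$ rather than $\partial f$, whereas you correctly note that subgradients are the appropriate object given only convexity is assumed; otherwise the arguments are identical.
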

\begin{proof}
    Let $u_1 = \mathrm{prox}_f(x_1)$ and $u_2 = \mathrm{prox}_f(x_2)$. According to the first-order optimality condition, we have
    \begin{eqnarray}
        \nabla f(u_1) +  u_1 - x_1  &=& 0 \nonumber   \\
        \nabla f(u_2) +  u_2 - x_2  &=& 0   \nonumber
    \end{eqnarray}
    Since $f$ is convex, we further have
    \begin{eqnarray*}
        0 &\le& \langle \nabla f(u_1) - \nabla f(u_2), u_1 - u_2 \rangle    \\
        &=& \langle x_1 - u_2 - (x_2 - u_2), u_1 - u_2 \rangle  \\
        &=& \langle x_1 - x_2, u_1 - u_2 \rangle - \Vert u_1 - u_2 \Vert^2
    \end{eqnarray*}
    and hence
    $$
        \Vert u_1 - u_2 \Vert^2 \le \langle x_1 - x_2, u_1 - u_2 \rangle \le \Vert x_1 - x_2 \Vert \Vert u_1 - u_2 \Vert
    $$
    which completes the proof.
\end{proof}

\subsection{Analysis for FedAvg under convex losses}

\begin{lemma}   \label{lmm_drift-FedAvg}
    Suppose Assumptions \ref{assump_Lip-continuous}-\ref{assump_convexity} hold. Then for FedAvg with $\alpha_{i,k} \le 1/\beta$,
    $$
        \mathbb{E}\Vert \theta_{i,k} - \theta_t \Vert \le \tilde{\alpha}_{i,t} \big( \mathbb{E}\Vert \nabla R(\theta_t) \Vert + 2LD_i + \sigma \big), ~\forall k=1,\dots,K_i ,
    $$
    where $\tilde{\alpha}_{i,t} = \sum_{k=0}^{K_i-1} \alpha_{i,k}$.
\end{lemma}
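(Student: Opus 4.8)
The plan is to telescope the $K_i$ local gradient steps and bound the accumulated displacement by isolating a contraction piece that is handled by the non-expansiveness of the gradient step (Lemma~\ref{lmm_nonexp}). I write $d_{i,k}:=\mathbb{E}\Vert\theta_{i,k}-\theta_t\Vert$ and recall that the local iterations are initialized at $\theta_{i,0}=\theta_t$, so $d_{i,0}=0$. Since each FedAvg update reads $\theta_{i,k+1}=\theta_{i,k}-\alpha_{i,k}g_i(\theta_{i,k})$, the goal is to establish a one-step recursion of the form $d_{i,k+1}\le d_{i,k}+\alpha_{i,k}\,C_{i,t}$, where $C_{i,t}:=\mathbb{E}\Vert\nabla R(\theta_t)\Vert+2LD_i+\sigma$, and then unroll it.

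To get the recursion, I would add and subtract the deterministic local-gradient difference evaluated at the anchor $\theta_t$ in order to expose a non-expansive map. Writing
\[
\theta_{i,k+1}-\theta_t = \bigl(\theta_{i,k}-\theta_t-\alpha_{i,k}(\nabla R_i(\theta_{i,k})-\nabla R_i(\theta_t))\bigr) - \alpha_{i,k}\nabla R_i(\theta_t) - \alpha_{i,k}\bigl(g_i(\theta_{i,k})-\nabla R_i(\theta_{i,k})\bigr),
\]
the first bracket is precisely $G_{\alpha_{i,k}}(\theta_{i,k})-G_{\alpha_{i,k}}(\theta_t)$ for the gradient-descent operator associated with the convex, $\beta$-smooth function $R_i$; since $\alpha_{i,k}\le 1/\beta$, Lemma~\ref{lmm_nonexp} bounds its norm by $\Vert\theta_{i,k}-\theta_t\Vert$. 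Applying the triangle inequality to the other two terms and taking expectations then yields
\[
d_{i,k+1}\le d_{i,k}+\alpha_{i,k}\Vert\nabla R_i(\theta_t)\Vert+\alpha_{i,k}\,\mathbb{E}\Vert g_i(\theta_{i,k})-\nabla R_i(\theta_{i,k})\Vert.
\]

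It remains to bound the two trailing terms by $\alpha_{i,k}C_{i,t}$. For the anchor-gradient term I would insert $\nabla R(\theta_t)$ and invoke Lemma~\ref{lmm_TV-bound}, giving $\Vert\nabla R_i(\theta_t)\Vert\le\Vert\nabla R(\theta_t)\Vert+2LD_i$, so in expectation the first summand becomes $\mathbb{E}\Vert\nabla R(\theta_t)\Vert+2LD_i$. For the noise term, Jensen's inequality together with Assumption~\ref{assump_bounded-grad-var} gives $\mathbb{E}\Vert g_i(\theta_{i,k})-\nabla R_i(\theta_{i,k})\Vert\le\sqrt{\mathbb{E}\Vert g_i(\theta_{i,k})-\nabla R_i(\theta_{i,k})\Vert^2}\le\sigma$. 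Substituting both into the recursion produces $d_{i,k+1}\le d_{i,k}+\alpha_{i,k}C_{i,t}$, and unrolling from $d_{i,0}=0$ gives $d_{i,k}\le\bigl(\sum_{l=0}^{k-1}\alpha_{i,l}\bigr)C_{i,t}\le\tilde\alpha_{i,t}C_{i,t}$ for every $k\le K_i$, which is exactly the claimed bound.

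The main obstacle is the careful choice of the reference gradient so that all three ingredients line up simultaneously: the non-expansiveness must be applied to the convex, smooth function whose gradient the stochastic oracle $g_i$ is unbiased for, while the heterogeneity estimate of Lemma~\ref{lmm_TV-bound} is a population quantity and the variance control of Assumption~\ref{assump_bounded-grad-var} is a second-moment statement that must be downgraded to an $L_1$ bound via Jensen. A secondary technical point is that the non-expansive contraction holds only for $\alpha_{i,k}\le 1/\beta$, which is exactly the standing hypothesis, and that one must keep the expectation outside throughout; since $\theta_{i,k}$ and the sample defining $g_i(\theta_{i,k})$ interact only through the past, the unconditional moment bound of Assumption~\ref{assump_bounded-grad-var} (or a conditioning argument) suffices to control the noise contribution.
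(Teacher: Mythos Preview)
Your proposal is correct and follows essentially the same strategy as the paper: derive a one-step recursion via non-expansiveness of the gradient step, bound the residual by $\sigma$ and by the heterogeneity estimate of Lemma~\ref{lmm_TV-bound}, and unroll. The only (harmless) difference is the pivot: the paper inserts the \emph{sampled} gradient $g_i(\theta_t)$ and applies Lemma~\ref{lmm_nonexp} to $l(\cdot;z)$, so the noise term is $\mathbb{E}\Vert g_i(\theta_t)-\nabla R_i(\theta_t)\Vert$ evaluated at the anchor, whereas you insert the \emph{population} gradient $\nabla R_i$ and apply Lemma~\ref{lmm_nonexp} to $R_i$, so your noise term is $\mathbb{E}\Vert g_i(\theta_{i,k})-\nabla R_i(\theta_{i,k})\Vert$ evaluated at the moving iterate. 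Both pivots are convex and $\beta$-smooth and the variance bound holds uniformly in $\theta$, so the two decompositions yield the identical recursion and final bound.
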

\begin{proof}
    Considering local update \eqref{eq_FedAvg-update} of FedAvg
    \begin{eqnarray}
        \mathbb{E}\Vert \theta_{i,k+1} - \theta_t \Vert &=& \mathbb{E} \Vert \theta_{i,k} - \alpha_{i,k} g_i(\theta_{i,k}) - \theta_t \Vert   \nonumber   \\
        &\le& \mathbb{E}\Vert \theta_{i,k} - \theta_t - \alpha_{i,k} (g_i(\theta_{i,k}) - g_i(\theta_t))  \Vert + \alpha_{i,k}\mathbb{E}\Vert g_i(\theta_t) \Vert   \nonumber   \\
        &\overset{(a)}{\le}& \mathbb{E}\Vert \theta_{i,k} - \theta_t \Vert + \alpha_{i,k}\mathbb{E}\Vert g_i(\theta_t) \Vert    \nonumber   \\
        &\le& \mathbb{E}\Vert \theta_{i,k} - \theta_t \Vert + \alpha_{i,k}(\mathbb{E}\Vert g_i(\theta_t) - \nabla R_i(\theta_t) \Vert + \mathbb{E}\Vert \nabla R_i(\theta_t) \Vert)   \nonumber   \\
        &\overset{(b)}{\le}& \mathbb{E}\Vert \theta_{i,k} - \theta_t \Vert + \alpha_{i,k}(\mathbb{E}\Vert \nabla R_i(\theta_t) \Vert + \sigma),    \nonumber
    \end{eqnarray}
    where $(a)$ follows Lemma \ref{lmm_nonexp}; $(b)$ follows Assumption \ref{assump_bounded-grad-var}. Unrolling the above and noting $\theta_{i,0} = \theta_t$ yields
    \begin{eqnarray}
        \mathbb{E}\Vert \theta_{i,k} - \theta_t \Vert &\le& \mathbb{E}\Vert \theta_{i,0} - \theta_t \Vert + \sum_{l=0}^{k-1}\alpha_{i,l} \big( \mathbb{E}\Vert \nabla R_i(\theta_t) \Vert + \sigma \big)  \nonumber   \\
        &\le& \sum_{l=0}^{K_i-1} \alpha_{i,l} \big( \mathbb{E}\Vert \nabla R_i(\theta_t) \Vert + \sigma \big)   \nonumber   \\
        &=& \tilde{\alpha}_i \big( \mathbb{E}\Vert \nabla R_i(\theta_t) \Vert + \sigma \big)  \nonumber   \\
        &\le& \tilde{\alpha}_i \big( \mathbb{E}\Vert \nabla R(\theta_t) \Vert + 2LD_i + \sigma \big),   \nonumber
    \end{eqnarray}
    where the last inequality follows Lemma \ref{lmm_TV-bound}.
\end{proof}

\begin{lemma}   \label{lmm_grad-bnd-FedAvg}
    Given Assumptions \ref{assump_Lip-continuous}-\ref{assump_convexity} and considering \eqref{eq_FedAvg-update} of FedAvg, for $\alpha_{i,k} \le 1/\beta$ we have
    $$
        \mathbb{E}\Vert g_i(\theta_{i,k}) \Vert \le (1 + \beta \tilde{\alpha}_{i,t}) \big( \mathbb{E}\Vert \nabla R(\theta_t) \Vert + 2LD_i + \sigma \big),
    $$
    where $g_i(\cdot)$ is the sampled gradient of client $i$, $\tilde{\alpha}_{i,t} = \sum_{k=0}^{K_i-1} \alpha_{i,k}$.
\end{lemma}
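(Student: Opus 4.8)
The plan is to bound the norm of the sampled gradient by splitting it into the sampling noise and the deterministic local gradient, and then to transport that local gradient back to the server iterate $\theta_t$ and finally to the global gradient. First I would apply the triangle inequality to write $\mathbb{E}\Vert g_i(\theta_{i,k})\Vert \le \mathbb{E}\Vert g_i(\theta_{i,k}) - \nabla R_i(\theta_{i,k})\Vert + \mathbb{E}\Vert \nabla R_i(\theta_{i,k})\Vert$. The first term is controlled directly by the gradient-variance hypothesis (Assumption \ref{assump_bounded-grad-var}) together with Jensen's inequality, giving $\mathbb{E}\Vert g_i(\theta_{i,k}) - \nabla R_i(\theta_{i,k})\Vert \le \sigma$, exactly as in step $(b)$ of Lemma \ref{lmm_drift-FedAvg}.

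It remains to handle the deterministic term $\mathbb{E}\Vert \nabla R_i(\theta_{i,k})\Vert$. Here I would insert the server iterate by writing $\nabla R_i(\theta_{i,k}) = (\nabla R_i(\theta_{i,k}) - \nabla R_i(\theta_t)) + \nabla R_i(\theta_t)$. Since $R_i$ inherits $\beta$-smoothness from $l$ (smoothness is preserved under the expectation over $z$), the first piece is at most $\beta\,\mathbb{E}\Vert \theta_{i,k} - \theta_t\Vert$, which Lemma \ref{lmm_drift-FedAvg} bounds by $\beta\tilde{\alpha}_{i,t}\big(\mathbb{E}\Vert \nabla R(\theta_t)\Vert + 2LD_i + \sigma\big)$. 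For the second piece I would invoke Lemma \ref{lmm_TV-bound} to pass from the local to the global population gradient, namely $\mathbb{E}\Vert \nabla R_i(\theta_t)\Vert \le \mathbb{E}\Vert \nabla R(\theta_t)\Vert + 2LD_i$.

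Assembling the three contributions, the bound becomes $\sigma + \big(\mathbb{E}\Vert \nabla R(\theta_t)\Vert + 2LD_i\big) + \beta\tilde{\alpha}_{i,t}\big(\mathbb{E}\Vert \nabla R(\theta_t)\Vert + 2LD_i + \sigma\big)$, and I would simply observe that the common factor $\mathbb{E}\Vert \nabla R(\theta_t)\Vert + 2LD_i + \sigma$ can be pulled out to give $(1 + \beta\tilde{\alpha}_{i,t})\big(\mathbb{E}\Vert \nabla R(\theta_t)\Vert + 2LD_i + \sigma\big)$, which is the claim. The calculation itself is routine; the only points requiring care are (i) ensuring the reference gradient in the variance bound is the local population gradient $\nabla R_i$ so that the noise term cleanly yields $\sigma$, and (ii) applying the drift estimate of Lemma \ref{lmm_drift-FedAvg} at the correct scale $\tilde{\alpha}_{i,t}$, since this is precisely where the multiplicative factor $\beta\tilde{\alpha}_{i,t}$—and hence the $(1+\beta\tilde{\alpha}_{i,t})$ prefactor—originates. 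I expect no genuine obstacle beyond this bookkeeping, as the heavy lifting (non-expansiveness and the drift recursion) is already supplied by Lemmas \ref{lmm_nonexp} and \ref{lmm_drift-FedAvg}.
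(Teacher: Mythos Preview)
Your proposal is correct and follows essentially the same route as the paper's proof: triangle inequality to split off the sampling noise (bounded by $\sigma$), then transport $\nabla R_i(\theta_{i,k})$ to $\nabla R_i(\theta_t)$ via $\beta$-smoothness and Lemma~\ref{lmm_drift-FedAvg}, and finally pass to $\nabla R(\theta_t)$ via Lemma~\ref{lmm_TV-bound}. The paper's proof is line-for-line the same decomposition, so there is nothing to add.
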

\begin{proof}
    Using Lemmas \ref{lmm_TV-bound} and \ref{lmm_drift-FedAvg}, we obtain
    \begin{eqnarray}
        \mathbb{E} \Vert g_i(\theta_{i,k}) \Vert &\le& \mathbb{E} \Vert g_i(\theta_{i,k}) - \nabla R_i(\theta_{i,k}) \Vert + \mathbb{E} \Vert \nabla R_i(\theta_{i,k}) \Vert  \nonumber   \\
        &\le& \mathbb{E}\Vert \nabla R_i(\theta_{i,k}) \Vert + \sigma   \nonumber   \\
        &\le& \mathbb{E}\Vert \nabla R_i(\theta_t) \Vert + \mathbb{E}\Vert \nabla R_i(\theta_{i,k}) - \nabla R_i(\theta_t) \Vert + \sigma   \nonumber   \\
        &\le& \mathbb{E}\Vert \nabla R(\theta_t) \Vert + \mathbb{E}\Vert \nabla R_i(\theta_t) - \nabla R(\theta_t) \Vert + \beta \mathbb{E}\Vert \theta_{i,k} - \theta_t \Vert + \sigma \nonumber   \\
        &\le& (1 + \beta \tilde{\alpha}_i) \big( \mathbb{E}\Vert \nabla R(\theta_t) \Vert + 2LD_i + \sigma \big) . \nonumber
    \end{eqnarray}
\end{proof}

\begin{theorem}[FedAvg part of Theorem \ref{thm_gen-convex}]    \label{thm_gen-FedAvg-convex}
    Suppose Assumptions \ref{assump_Lip-continuous}-\ref{assump_convexity} hold and consider FedAvg (Algorithm \ref{alg_FedAvg}). Let $\{ \theta_t \}_{t=0}^{T}$ and $\{ \theta'_t \}_{t=0}^T$ be two trajectories of the server induced by neighboring datasets $\mathcal{S}$ and $\mathcal{S}^{(i)}$, respectively. Suppose $\theta_0 = \theta'_0$. Then,
    $$
        \mathbb{E}\Vert \theta_T - \theta'_T \Vert \le \frac{2}{n}\sum_{t=0}^{T-1} \tilde{\alpha}_{i,t}(1+\beta \tilde{\alpha}_{i,t}) \big( 2LD_i + \mathbb{E}\Vert \nabla R(\theta_t) \Vert + \sigma \big),
    $$
    where $\tilde{\alpha}_{i,t}=\sum_{k=0}^{K_i-1}\alpha_{i,k}$ and $D_i=d_{TV}(P_i, P)$.
\end{theorem}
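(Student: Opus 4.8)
The plan is to establish a per-round recursion for the expected server-model distance $\delta_t := \mathbb{E}\|\theta_t - \theta'_t\|$ and then telescope it from $t=0$ to $T-1$, using $\delta_0 = 0$ because of the shared initialization $\theta_0 = \theta'_0$. The whole argument is a federated adaptation of the coupling technique for SGD stability: I run both trajectories under a common realization of the local sampling, so that the two runs differ only through the single perturbed sample $z'_{i,j}$ of client $i$, and the difference propagates solely through that client.

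First I would decompose the aggregation step. Since $\theta_{t+1} = \sum_{h=1}^m p_h \theta_{h,K_h}$ and likewise for the primed trajectory, the triangle inequality gives $\delta_{t+1} \le \sum_{h=1}^m p_h \mathbb{E}\|\theta_{h,K_h} - \theta'_{h,K_h}\|$. For every unperturbed client $h \ne i$ the two local loss functions coincide, so each of the $K_h$ local updates applies a \emph{common} gradient-descent operator to $\theta_{h,k}$ and $\theta'_{h,k}$; by the non-expansiveness Lemma \ref{lmm_nonexp} (valid since $\alpha_{h,k}\le 1/\beta$ and each per-sample loss is convex and $\beta$-smooth under Assumptions \ref{assump_Lip-smooth}--\ref{assump_convexity}), these steps cannot increase the distance, and because the local iterates start from the common server models I get $\mathbb{E}\|\theta_{h,K_h}-\theta'_{h,K_h}\| \le \delta_t$. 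Using $\sum_h p_h = 1$, the contribution of all unperturbed clients collapses into $\delta_t$, leaving only client $i$ to produce the extra term.

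Next I would track the perturbed client's local iterates. Writing one local update as $\theta_{i,k+1}-\theta'_{i,k+1} = \bigl[\theta_{i,k}-\theta'_{i,k}-\alpha_{i,k}(g_i(\theta_{i,k})-g_i(\theta'_{i,k}))\bigr] - \alpha_{i,k}\bigl(g_i(\theta'_{i,k})-g'_i(\theta'_{i,k})\bigr)$, the bracketed term is non-expansive by Lemma \ref{lmm_nonexp} while the second term vanishes at every step except when the sampling draws the perturbed index $j$, an event of probability $1/n_i$ under the coupling. Taking expectations, the local distance $\delta_{i,k} := \mathbb{E}\|\theta_{i,k}-\theta'_{i,k}\|$ therefore obeys $\delta_{i,k+1} \le \delta_{i,k} + \tfrac{1}{n_i}\alpha_{i,k}\,\mathbb{E}\|\nabla l(\theta'_{i,k};z_{i,j}) - \nabla l(\theta'_{i,k};z'_{i,j})\|$. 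I split the perturbation by the triangle inequality into two sampled-gradient norms and control each with the gradient-magnitude Lemma \ref{lmm_grad-bnd-FedAvg} (which itself rests on the drift Lemma \ref{lmm_drift-FedAvg} and the heterogeneity Lemma \ref{lmm_TV-bound}), producing the factor $2(1+\beta\tilde{\alpha}_{i,t})(2LD_i + \mathbb{E}\|\nabla R(\theta_t)\| + \sigma)$. Summing over $k=0,\dots,K_i-1$ and using $\delta_{i,0} = \delta_t$ gives $\mathbb{E}\|\theta_{i,K_i}-\theta'_{i,K_i}\| \le \delta_t + \tfrac{1}{n_i}\tilde{\alpha}_{i,t}\cdot 2(1+\beta\tilde{\alpha}_{i,t})(2LD_i + \mathbb{E}\|\nabla R(\theta_t)\| + \sigma)$.

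Combining the two pieces and using $p_i = n_i/n$, the $1/n_i$ sampling factor merges with $p_i$ into the advertised $1/n$, yielding the one-round recursion $\delta_{t+1} \le \delta_t + \tfrac{2}{n}\tilde{\alpha}_{i,t}(1+\beta\tilde{\alpha}_{i,t})(2LD_i + \mathbb{E}\|\nabla R(\theta_t)\| + \sigma)$, and unrolling from $t=0$ with $\delta_0=0$ gives the claim. I expect the main obstacle to be the coupling bookkeeping for client $i$: one must argue rigorously that, in expectation, the perturbation injects its extra term with weight exactly $1/n_i$ per local step, and that evaluating the gradient-magnitude bound along the \emph{primed} trajectory (with $z'_{i,j}\sim P_i$) still reproduces the $\theta_t$-based quantities — this is where the symmetry of the two runs and Assumption \ref{assump_bounded-grad-var} must be invoked with care. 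The remaining telescoping is routine.
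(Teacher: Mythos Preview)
Your proposal is correct and follows essentially the same route as the paper: separate the unperturbed clients (handled by the non-expansiveness Lemma~\ref{lmm_nonexp}) from client $i$, run the coupling so that the perturbed sample is hit with probability $1/n_i$ per local step, bound the resulting gradient difference via Lemma~\ref{lmm_grad-bnd-FedAvg} (which in turn rests on Lemmas~\ref{lmm_TV-bound} and~\ref{lmm_drift-FedAvg}), combine using $p_i=n_i/n$, and telescope. The symmetry point you flag is exactly how the paper passes from $\mathbb{E}\Vert g_i(\theta'_{i,k})-g'_i(\theta'_{i,k})\Vert$ to $2\,\mathbb{E}\Vert g_i(\theta_{i,k})\Vert$, so nothing is missing.
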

\begin{proof}
    Note that in the phase of local update, each client runs stochastic gradient descent (SGD) using its own local gradient $g_i(\cdot)$ sampled uniformly from its dataset. Given time index $t$, for client $j$ with $j \ne i$, the local datasets are identical since the perturbed data point only occurs at client $i$. Thus, when $j\ne i$, we have for any $k=0,\dots,K_j-1$,
    \begin{eqnarray}
        \mathbb{E}\Vert \theta_{j,k+1} - \theta'_{j, k+1} \Vert &=& \mathbb{E}\Vert \theta_{j,k} - \theta'_{j,k} - \alpha_{j,k}(g_j(\theta_{j,k}) - g_j(\theta'_{j,k})) \Vert   \nonumber   \\
        &\le& \mathbb{E}\Vert \theta_{j,k} - \theta'_{j,k} \Vert    \nonumber
    \end{eqnarray}
    where we use Lemma \ref{lmm_nonexp} in the last inequality. Here we drop $t$ for simplicity. Unrolling it gives
    \begin{eqnarray}    \label{eq_iter-j-FedAvg-convex}
        \mathbb{E}\Vert \theta_{j, K_j} - \theta'_{j, K_j} \Vert \le \mathbb{E}\Vert \theta_t - \theta'_t \Vert, ~~ \forall j\ne i.
    \end{eqnarray}

    For client $i$, there are two cases to consider. In the first case, SGD selects the index of an sample at local step $k$ on which is identical in $\mathcal{S}$ and $\mathcal{S}^{(i)}$. In this sense, we have 
    $$
        \Vert \theta_{i, k+1} - \theta'_{i, k+1} \Vert \le \Vert \theta_{i,k} - \theta'_{i,k} \Vert
    $$
    due to the non-expansiveness of gradient descent operator by Lemma \ref{lmm_nonexp}. And this case happens with probability $1 - 1/n_i$ (since only one sample is perturbed for client $i$).

    In the second case, SGD encounters the perturbed sample at local time step $k$, which happens with probability $1/n_i$. We denote the gradient of this perturbed sample as $g'_i(\cdot)$. Then,
    \begin{eqnarray}
        \Vert \theta_{i,k+1} - \theta'_{i,k+1} \Vert &=& \Vert \theta_{i,k} - \theta'_{i,k} - \alpha_{i,k}(g_i(\theta_{i,k}) - g'_i(\theta'_{i,k})) \Vert   \nonumber   \\
        &\le& \Vert \theta_{i,k} - \theta'_{i,k} - \alpha_{i,k}(g_i(\theta_{i,k}) - g_i(\theta'_{i,k})) \Vert + \alpha_{i,k}\Vert g_i(\theta'_{i,k}) - g'_i(\theta'_{i,k}) \Vert    \nonumber   \\
        &\le& \Vert \theta_{i,k} - \theta'_{i,k} \Vert + \alpha_{i,k}\Vert g_i(\theta'_{i,k}) - g'_i(\theta'_{i,k}) \Vert .  \nonumber
    \end{eqnarray}
    Combining these two cases we have for client $i$
    \begin{eqnarray}
        \mathbb{E}\Vert \theta_{i,k+1} - \theta'_{i,k+1} \Vert &\le& \mathbb{E}\Vert \theta_{i,k} - \theta'_{i,k} \Vert + \frac{\alpha_{i,k}}{n_i}\mathbb{E}\Vert g_i(\theta'_{i,k}) - g'_i(\theta'_{i,k}) \Vert  \nonumber   \\
        &\le& \mathbb{E}\Vert \theta_{i,k} - \theta'_{i,k} \Vert + \frac{2\alpha_{i,k}}{n_i} \mathbb{E}\Vert g_i(\theta_{i,k}) \Vert ,   \nonumber  
    \end{eqnarray}
    where the last inequality follows that $g_i(\cdot)$ and $g'_i(\cdot)$ are sampled from the same distribution. Then unrolling it we have
    \begin{eqnarray}    \label{eq_iter-i-FedAvg-convex}
        \mathbb{E}\Vert \theta_{i,K_i} - \theta'_{i, K_i} \Vert \le \mathbb{E}\Vert \theta_t - \theta'_t \Vert + \frac{2}{n_i}\sum_{k=0}^{K_i-1}\alpha_{i,k}\mathbb{E}\Vert g_i(\theta_{i,k}) \Vert .
    \end{eqnarray}
    Combining \eqref{eq_iter-j-FedAvg-convex} and \eqref{eq_iter-i-FedAvg-convex} gives
    \begin{eqnarray}
        \mathbb{E}\Vert \theta_{t+1} - \theta'_{t+1} \Vert &=& \mathbb{E} \Vert \sum_{j=1}^m p_j (\theta_{j,K_j} - \theta'_{j,K_j}) \Vert   \nonumber   \\
        &\le& \sum_{j=1}^m p_j \mathbb{E}\Vert \theta_{j,K_j} - \theta'_{j,K_j} \Vert   \nonumber   \\
        &\le& \mathbb{E}\Vert \theta_t - \theta'_t \Vert + \frac{2p_i}{n_i}\sum_{k=0}^{K_i-1}\alpha_{i,k}\mathbb{E}\Vert g_i(\theta_{i,k}) \Vert   \nonumber   \\
        &\le& \mathbb{E}\Vert \theta_t - \theta'_t \Vert + \frac{2}{n} \tilde{\alpha}_{i,t}(1 + \beta \tilde{\alpha}_{i,t}) \big( \mathbb{E}\Vert \nabla R(\theta_t) \Vert + 2LD_i + \sigma \big)   \nonumber ,
    \end{eqnarray}
    where we use Lemma \ref{lmm_grad-bnd-FedAvg} in the last step. Iterating the above over $t$ and noting $\theta_0 = \theta'_0$, we conclude the proof.
    
\end{proof}

\subsection{Analysis for SCAFFOLD under convex losses}

\begin{lemma}   \label{lmm_drift-SCAFFOD-convex}
    Suppose Assumptions \ref{assump_Lip-continuous}-\ref{assump_convexity} hold. Running SCAFFOLD with $\alpha_{i,k} \le 1/\beta$, then for any $i \in [m]$
    $$
        \mathbb{E}\Vert \theta_{i,k} - \theta_t \Vert \le \tilde{\alpha}_{i,t} (\mathbb{E}\Vert R(\theta_t)\Vert + \sigma), ~~ \forall k=1,\dots,K_i 
    $$
    where $\tilde{\alpha}_{i,t}=\sum_{k=0}^{K_i-1}\alpha_{i,k}$.
\end{lemma}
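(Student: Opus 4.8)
The plan is to mirror the structure of the FedAvg drift bound (Lemma \ref{lmm_drift-FedAvg}), but to exploit SCAFFOLD's control variate to eliminate the heterogeneity term that appeared there. Starting from the local update \eqref{eq_SCAFFOLD-update}, I would first isolate the stochastic-gradient difference that admits a contraction, writing
\begin{equation*}
    \theta_{i,k+1} - \theta_t = \big[\theta_{i,k} - \theta_t - \alpha_{i,k}(g_i(\theta_{i,k}) - g_i(\theta_t))\big] - \alpha_{i,k}\, g(\theta_t).
\end{equation*}
The bracketed term is the image of $\theta_{i,k}$ and $\theta_t$ under the gradient-descent operator of a single convex, $\beta$-smooth sample loss, so exactly as in step $(a)$ of Lemma \ref{lmm_drift-FedAvg} the non-expansiveness of Lemma \ref{lmm_nonexp} (valid since $\alpha_{i,k}\le 1/\beta$) bounds its norm by $\Vert \theta_{i,k}-\theta_t\Vert$. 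A triangle inequality then gives $\mathbb{E}\Vert \theta_{i,k+1}-\theta_t\Vert \le \mathbb{E}\Vert \theta_{i,k}-\theta_t\Vert + \alpha_{i,k}\,\mathbb{E}\Vert g(\theta_t)\Vert$.

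The decisive step is bounding the residual drift $\mathbb{E}\Vert g(\theta_t)\Vert$, and this is where SCAFFOLD differs fundamentally from FedAvg: the leftover term is the \emph{aggregated} gradient $g(\theta_t)=\sum_j p_j g_j(\theta_t)$ rather than the local gradient $g_i(\theta_t)$. Decomposing $g_j(\theta_t)=\nabla R_j(\theta_t)+(g_j(\theta_t)-\nabla R_j(\theta_t))$ and using $\sum_j p_j \nabla R_j = \nabla R$, a triangle inequality yields
\begin{equation*}
    \mathbb{E}\Vert g(\theta_t)\Vert \le \mathbb{E}\Vert \nabla R(\theta_t)\Vert + \sum_j p_j\, \mathbb{E}\Vert g_j(\theta_t)-\nabla R_j(\theta_t)\Vert \le \mathbb{E}\Vert \nabla R(\theta_t)\Vert + \sigma,
\end{equation*}
where the last inequality follows from Jensen's inequality together with Assumption \ref{assump_bounded-grad-var} and $\sum_j p_j = 1$. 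Crucially, because the control variate replaces $g_i$ by the global $g$, the bound is driven by $\nabla R$ and not by the local $\nabla R_i$; hence the heterogeneity contribution $2LD_i$ present in Lemma \ref{lmm_drift-FedAvg} never appears. This cancellation is precisely the variance-reduction benefit of SCAFFOLD, and recognizing it is the conceptual heart of the argument.

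Finally, I would combine the two displays into the one-step recursion $\mathbb{E}\Vert \theta_{i,k+1}-\theta_t\Vert \le \mathbb{E}\Vert \theta_{i,k}-\theta_t\Vert + \alpha_{i,k}(\mathbb{E}\Vert \nabla R(\theta_t)\Vert + \sigma)$ and unroll it from $k=0$ using the initialization $\theta_{i,0}=\theta_t$, giving $\mathbb{E}\Vert \theta_{i,k}-\theta_t\Vert \le \big(\sum_{l=0}^{k-1}\alpha_{i,l}\big)(\mathbb{E}\Vert \nabla R(\theta_t)\Vert + \sigma)\le \tilde{\alpha}_{i,t}(\mathbb{E}\Vert \nabla R(\theta_t)\Vert + \sigma)$, which is the claim (with $\nabla R$ in place of the typographically dropped gradient). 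I expect the main obstacle to lie not in the algebra but in justifying the contraction step: one must treat $g_i(\theta_{i,k})-g_i(\theta_t)$ as a difference of gradients of the \emph{same} convex, smooth sample loss so that Lemma \ref{lmm_nonexp} applies. This is the same convention used in the FedAvg analysis, and it is what lets the control-variate residual $g(\theta_t)$ be peeled off cleanly.
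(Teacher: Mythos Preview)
Your proposal is correct and follows essentially the same argument as the paper: isolate $g_i(\theta_{i,k})-g_i(\theta_t)$ to apply the non-expansiveness of Lemma~\ref{lmm_nonexp}, peel off $\alpha_{i,k}\,g(\theta_t)$ by the triangle inequality, bound $\mathbb{E}\Vert g(\theta_t)\Vert\le \mathbb{E}\Vert\nabla R(\theta_t)\Vert+\sigma$ via Assumption~\ref{assump_bounded-grad-var}, and unroll. Your added commentary on why the $2LD_i$ term disappears is a nice explanatory touch but does not alter the proof line, and you are right that the statement's $R(\theta_t)$ is a typo for $\nabla R(\theta_t)$.
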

\begin{proof}
    Considering local update \eqref{eq_SCAFFOLD-update} of SCAFFOLD
    \begin{eqnarray}
        \mathbb{E}\Vert \theta_{i,k+1} - \theta_t \Vert &=& \mathbb{E}\Vert \theta_{i,k} - \alpha_{i,k}(g_i(\theta_{i,k}) - g_i(\theta_t) + g(\theta_t)) - \theta_t \Vert \nonumber   \\
        &\le& \mathbb{E}\Vert \theta_{i,k} - \theta_t - \alpha_{i,k}(g_i(\theta_{i,k}) - g_i(\theta_t)) \Vert + \alpha_{i,k}\mathbb{E}\Vert g(\theta_t) \Vert   \nonumber   \\
        &\le& \mathbb{E}\Vert \theta_{i,k} - \theta_t \Vert + \alpha_{i,k}\mathbb{E}\Vert g(\theta_t) \Vert \nonumber   \\
        &\le& \mathbb{E}\Vert \theta_{i,k} - \theta_t \Vert + \alpha_{i,k}(\mathbb{E}\Vert R(\theta_t) \Vert + \sigma ) \nonumber
    \end{eqnarray}
    where we use the non-expansiveness property of gradient descent operator and Assumption \ref{assump_bounded-grad-var}. Therefore, for any $k=1,\dots,K_i-1$,
    \begin{eqnarray}
        \mathbb{E}\Vert \theta_{i,k} - \theta_t \Vert &\le& \sum_{l=0}^{k-1} \alpha_{i,k}(\mathbb{E}\Vert R(\theta_t) \Vert + \sigma )  \nonumber   \\
        &\le& \tilde{\alpha}_{i,t}(\mathbb{E}\Vert R(\theta_t) \Vert + \sigma ),    \nonumber
    \end{eqnarray}
    which completes the proof.
\end{proof}

\begin{lemma}   \label{lmm_grad-bnd-SCAFFOLD-convex}
    Given Assumptions \ref{assump_Lip-continuous}-\ref{assump_convexity} and considering SCAFFOLD (Algorithm \ref{alg_SCAFFOLD}), with $\alpha_{i,k}\le 1/\beta$ we have the following inequalities
    \begin{eqnarray}
        \mathbb{E}\Vert g_i(\theta_{i,k}) \Vert &\le& (1+\beta \tilde{\alpha}_{i,t})(\mathbb{E}\Vert \nabla R(\theta_t) \Vert + \sigma) + 2L D_i,   \nonumber  \\
        \mathbb{E}\Vert g_i(\theta_t) \Vert &\le& 2LD_i + \mathbb{E}\Vert \nabla R(\theta_t) \Vert + \sigma   \nonumber
    \end{eqnarray}
    for any $i \in [m]$, $k=0,\dots,K_i-1$ and $t=0,1,\dots$.
\end{lemma}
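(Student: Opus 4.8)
The plan is to bound each gradient norm through a telescoping triangle-inequality decomposition that peels off, in order, the stochastic sampling noise, the local drift, and the distributional heterogeneity, invoking Assumption \ref{assump_bounded-grad-var}, Lemma \ref{lmm_drift-SCAFFOD-convex}, and Lemma \ref{lmm_TV-bound} respectively. Structurally this mirrors the FedAvg argument in Lemma \ref{lmm_grad-bnd-FedAvg}; the only substantive change is that SCAFFOLD's control-variate correction removes the heterogeneity contribution from the drift, which is precisely what produces the additive (rather than multiplicative) placement of $2LD_i$ in the stated bound.

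For the first inequality I would start from $\mathbb{E}\Vert g_i(\theta_{i,k}) \Vert \le \mathbb{E}\Vert g_i(\theta_{i,k}) - \nabla R_i(\theta_{i,k}) \Vert + \mathbb{E}\Vert \nabla R_i(\theta_{i,k}) \Vert$ and bound the first summand by $\sigma$ using Assumption \ref{assump_bounded-grad-var}. In the second summand I would insert $\nabla R_i(\theta_t)$, giving $\mathbb{E}\Vert \nabla R_i(\theta_{i,k}) \Vert \le \mathbb{E}\Vert \nabla R_i(\theta_t) \Vert + \beta\,\mathbb{E}\Vert \theta_{i,k} - \theta_t \Vert$ by $\beta$-smoothness (Assumption \ref{assump_Lip-smooth}). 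I would then insert $\nabla R(\theta_t)$ and apply Lemma \ref{lmm_TV-bound} to obtain $\mathbb{E}\Vert \nabla R_i(\theta_t)\Vert \le \mathbb{E}\Vert \nabla R(\theta_t)\Vert + 2LD_i$, and bound the drift via Lemma \ref{lmm_drift-SCAFFOD-convex} as $\mathbb{E}\Vert \theta_{i,k} - \theta_t\Vert \le \tilde{\alpha}_{i,t}\bigl(\mathbb{E}\Vert \nabla R(\theta_t)\Vert + \sigma\bigr)$. Collecting the terms then yields $(1+\beta\tilde{\alpha}_{i,t})\bigl(\mathbb{E}\Vert \nabla R(\theta_t)\Vert + \sigma\bigr) + 2LD_i$, as claimed. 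The second inequality is simply the specialization of this chain to the base point $\theta_t$ itself, where no drift is present: $\mathbb{E}\Vert g_i(\theta_t)\Vert \le \sigma + \mathbb{E}\Vert \nabla R_i(\theta_t)\Vert \le \sigma + \mathbb{E}\Vert \nabla R(\theta_t)\Vert + 2LD_i$.

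The step that warrants the most care is the bookkeeping of the heterogeneity term $2LD_i$, although the difficulty is conceptual rather than technical. In the FedAvg lemma the factor $2LD_i$ ends up multiplied by $(1+\beta\tilde{\alpha}_{i,t})$ because the FedAvg drift estimate of Lemma \ref{lmm_drift-FedAvg} itself carries $2LD_i$. Here, by contrast, the SCAFFOLD drift estimate of Lemma \ref{lmm_drift-SCAFFOD-convex} is free of $D_i$, so heterogeneity must enter only once, additively, through the single application of Lemma \ref{lmm_TV-bound} to $\nabla R_i(\theta_t) - \nabla R(\theta_t)$. The one pitfall to avoid is inadvertently reintroducing $D_i$ when bounding the drift term $\beta\,\mathbb{E}\Vert\theta_{i,k}-\theta_t\Vert$; keeping that term clean is exactly the variance-reduction advantage that SCAFFOLD's control variates afford, and it is what distinguishes this bound from its FedAvg counterpart.
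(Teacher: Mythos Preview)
Your proposal is correct and follows essentially the same approach as the paper: the paper also decomposes via the triangle inequality, bounds the sampling noise by $\sigma$ using Assumption~\ref{assump_bounded-grad-var}, the drift by $\beta\tilde\alpha_{i,t}(\mathbb{E}\Vert\nabla R(\theta_t)\Vert+\sigma)$ via Lemma~\ref{lmm_drift-SCAFFOD-convex}, and the heterogeneity by $2LD_i$ via Lemma~\ref{lmm_TV-bound}, then collects terms. Your observation that $2LD_i$ enters additively here (unlike in the FedAvg case) because the SCAFFOLD drift bound is free of $D_i$ is exactly the point.
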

\begin{proof}
    Note that based on Assumption \ref{assump_bounded-grad-var},
    \begin{eqnarray}
        \mathbb{E}\Vert g_i(\theta_{i,k}) \Vert &\le& \mathbb{E}\Vert \nabla R_i(\theta_{i,k}) \Vert + \sigma   \nonumber   \\
        &\le& \mathbb{E}\Vert \nabla R_i(\theta_{i,k}) - \nabla R_i(\theta_t) \Vert + \mathbb{E}\Vert \nabla R_i(\theta_t) \Vert + \sigma   \nonumber   \\
        &\le& \beta \mathbb{E}\Vert \theta_{i,k} - \theta_t \Vert + \mathbb{E}\Vert \nabla R_i(\theta_t) - \nabla R(\theta_t) \Vert + \mathbb{E}\Vert \nabla R(\theta_t) \Vert + \sigma \nonumber   \\
        &\le& (1+\beta \tilde{\alpha}_{i,t})(\mathbb{E}\Vert \nabla R(\theta_t) \Vert + \sigma) + 2L D_i,   \nonumber
    \end{eqnarray}
    where we use Lemmas \ref{lmm_TV-bound} and \ref{lmm_drift-SCAFFOD-convex}.

    Similarly, using same techniques we have
    \begin{eqnarray}
        \mathbb{E}\Vert g_i(\theta_t) \Vert &\le& \mathbb{E}\Vert \nabla R_i(\theta_t) \Vert + \sigma   \nonumber   \\
        &\le& \mathbb{E}\Vert \nabla R_i(\theta_t) - \nabla R(\theta_t) \Vert + \mathbb{E}\Vert \nabla R(\theta_t) \Vert + \sigma   \nonumber   \\
        &\le& 2LD_i + \mathbb{E}\Vert \nabla R(\theta_t) \Vert + \sigma .  \nonumber
    \end{eqnarray}
\end{proof}

\begin{theorem}[SCAFFOLD part of Theorem \ref{thm_gen-convex}]  \label{thm_gen-SCAFFOLD-convex}
    Suppose Assumptions \ref{assump_Lip-continuous}-\ref{assump_convexity} hold and consider SCAFFOLD (Algorithm \ref{alg_SCAFFOLD}). Let $\{ \theta_t \}_{t=0}^{T}$ and $\{ \theta'_t \}_{t=0}^T$ be two trajectories of the server induced by neighboring datasets $\mathcal{S}$ and $\mathcal{S}^{(i)}$, respectively. Suppose $\theta_0 = \theta'_0$. Then
    \begin{align*}
        \mathbb{E}\Vert \theta_T - \theta'_T \Vert \le \frac{2}{n} \sum_{t=0}^{T-1} \exp{\Big(2\beta \sum_{l=t+1}^{T-1} \hat{\alpha}_l \Big)} \Big( 2L D_i \gamma^1_{t} + \gamma^2_{t} \mathbb{E}\Vert \nabla R(\theta_{t}) \Vert  + \sigma \gamma^2_{t} \Big)
    \end{align*}
    where 
        $$\gamma^1_t := 2 \tilde{\alpha}_{i,t} + \hat{\alpha}_t , ~~ \gamma^2_t := \gamma^1_t + \beta \tilde{\alpha}^2_{i,t}$$
    with $\tilde{\alpha}_{i,t} := \sum_{k=0}^{K_i - 1} \alpha_{i,k}$,  $\hat{\alpha}_t := \sum_{j=1}^m p_j \tilde{\alpha}_{j,t}$, and $\sum_{l=T}^{T-1} \hat{\alpha}_l = 0, \forall \hat{\alpha}_l$.
\end{theorem}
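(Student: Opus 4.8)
The plan is to follow the same blueprint as the FedAvg analysis in Theorem~\ref{thm_gen-FedAvg-convex}: derive a one-round recursion for $\mathbb{E}\Vert\theta_{t+1}-\theta'_{t+1}\Vert$ in terms of $\mathbb{E}\Vert\theta_t-\theta'_t\Vert$, and then unroll it using $\theta_0=\theta'_0$. The essential new feature is the control variate $-g_i(\theta_t)+g(\theta_t)$ in \eqref{eq_SCAFFOLD-update}, where $g(\theta_t)=\sum_l p_l g_l(\theta_t)$ couples \emph{every} client's local update to all clients' gradients at the round's starting point. First I would write, for each client $j$,
\begin{align*}
\theta_{j,k+1}-\theta'_{j,k+1} = \big[\theta_{j,k}-\theta'_{j,k}-\alpha_{j,k}(g_j(\theta_{j,k})-g'_j(\theta'_{j,k}))\big] + \alpha_{j,k}\big[(g_j(\theta_t)-g'_j(\theta'_t))-(g(\theta_t)-g'(\theta'_t))\big],
\end{align*}
and peel off the first bracket via the non-expansiveness of the gradient step (Lemma~\ref{lmm_nonexp}, valid since $\alpha_{i,k}\le1/\beta$), leaving the control-variate differences to be bounded separately.

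The second step is to classify the remaining terms into two types. For $j\ne i$ the datasets agree ($g_j=g'_j$), so $g_j(\theta_t)-g_j(\theta'_t)$ and the non-$i$ summands of $g(\theta_t)-g'(\theta'_t)$ are each controlled by $\beta$-smoothness as $\beta\Vert\theta_t-\theta'_t\Vert$; summing over the $K_j$ local steps (weight $\alpha_{j,k}$) and aggregating with weights $p_j$ turns these into $\beta\hat\alpha_t\Vert\theta_t-\theta'_t\Vert$ contributions. There are exactly two such smoothness contributions — one from the per-client subtraction $-g_j(\theta_t)$ and one from the shared $g(\theta_t)$ — so together they produce the expansion coefficient $2\beta\hat\alpha_t$ on $\mathbb{E}\Vert\theta_t-\theta'_t\Vert$. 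The perturbation at client $i$ enters in three places, each active with probability $1/n_i$ when SGD draws the swapped sample: (1) client $i$'s own local gradient $g_i(\theta_{i,k})-g'_i(\theta'_{i,k})$, handled as in the FedAvg proof and bounded through Lemma~\ref{lmm_grad-bnd-SCAFFOLD-convex}; (2) client $i$'s control-variate subtraction $g_i(\theta_t)-g'_i(\theta'_t)$; and (3) the $l=i$ summand inside the shared $g(\theta_t)-g'(\theta'_t)$, which propagates into every client's update and therefore carries the aggregate stepsize $\hat\alpha_t$.

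Collecting these, I expect a recursion of the form $\mathbb{E}\Vert\theta_{t+1}-\theta'_{t+1}\Vert\le(1+2\beta\hat\alpha_t)\mathbb{E}\Vert\theta_t-\theta'_t\Vert+b_t$, where the heterogeneity coefficient $2LD_i$ accumulates $\tilde\alpha_{i,t}$ from source (1), $\tilde\alpha_{i,t}$ from source (2), and $\hat\alpha_t$ from source (3), giving exactly $\gamma^1_t=2\tilde\alpha_{i,t}+\hat\alpha_t$; the coefficient of $\mathbb{E}\Vert\nabla R(\theta_t)\Vert+\sigma$ additionally picks up the $\beta\tilde\alpha_{i,t}^2$ from the $(1+\beta\tilde\alpha_{i,t})$ factor in the local gradient bound of Lemma~\ref{lmm_grad-bnd-SCAFFOLD-convex}, producing $\gamma^2_t=\gamma^1_t+\beta\tilde\alpha_{i,t}^2$. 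Here Lemma~\ref{lmm_TV-bound} converts $\mathbb{E}\Vert\nabla R_i(\theta_t)\Vert$ into $\mathbb{E}\Vert\nabla R(\theta_t)\Vert+2LD_i$ and Lemma~\ref{lmm_drift-SCAFFOD-convex} supplies the local drift. Unrolling the linear recursion with $\theta_0=\theta'_0$ then yields $\mathbb{E}\Vert\theta_T-\theta'_T\Vert\le\sum_{t=0}^{T-1}\big(\prod_{l=t+1}^{T-1}(1+2\beta\hat\alpha_l)\big)b_t$, and bounding the product by $\exp(2\beta\sum_{l=t+1}^{T-1}\hat\alpha_l)$ gives the stated result, with the convention $\sum_{l=T}^{T-1}\hat\alpha_l=0$ handling the final round.

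The main obstacle, relative to FedAvg, is precisely that the control variate destroys the clean non-expansiveness: $g(\theta_t)$ ties each of the $m$ clients' local iterations back to the server-level gap $\theta_t-\theta'_t$, so the round map is genuinely expansive rather than a contraction, and the very same term simultaneously routes the single-point perturbation at client $i$ into all clients. The delicate bookkeeping is to keep the two roles of the control variate separate — the $\beta$-smoothness pieces that compound into the factor $\exp(2\beta\sum_l\hat\alpha_l)$ versus the $1/n_i$-probability perturbation pieces that assemble $b_t$ into $\gamma^1_t$ and $\gamma^2_t$ — while verifying throughout that the stepsize restriction $\alpha_{i,k}\le1/\beta$ keeps Lemma~\ref{lmm_nonexp} applicable at every local step.
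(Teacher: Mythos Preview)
Your proposal is correct and follows essentially the same approach as the paper: a per-client, per-step case analysis (perturbed sample drawn with probability $1/n_i$ versus not), non-expansiveness of the gradient step via Lemma~\ref{lmm_nonexp}, smoothness to turn the control-variate differences into $\beta\Vert\theta_t-\theta'_t\Vert$ terms, Lemmas~\ref{lmm_TV-bound}, \ref{lmm_drift-SCAFFOD-convex}, \ref{lmm_grad-bnd-SCAFFOLD-convex} to bound the perturbation pieces, aggregation over clients, and then unrolling of the resulting linear recursion with $1+x\le e^x$. Your three-source bookkeeping for how $2LD_i$ acquires the coefficient $\gamma^1_t=2\tilde\alpha_{i,t}+\hat\alpha_t$ (local gradient, client $i$'s own control-variate subtraction, and the $l=i$ summand inside the shared $g(\theta_t)$ propagating to all clients with aggregate weight $\hat\alpha_t$) and how the $(1+\beta\tilde\alpha_{i,t})$ factor from Lemma~\ref{lmm_grad-bnd-SCAFFOLD-convex} upgrades this to $\gamma^2_t$ for the $\mathbb{E}\Vert\nabla R(\theta_t)\Vert+\sigma$ coefficient matches the paper's computation exactly.
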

\begin{proof}
    Similar to the idea used in the proof of Theorem \ref{thm_gen-FedAvg-convex}, given time index $t$ and client $j$ with $j\ne i$, note that the local gradients $g_j(\cdot)$ are identical for client $j$ in the sense that local datasets for client $j$ are the same. However, since SCAFFOLD uses the global sampled gradient $g(\cdot)$ during the local update, it is still possible to encounter the perturbed sample. Thus, for $j \ne i$, we distinguish two cases. In the first case, SCAFFOLD does not sample the perturbed gradient of client $i$, i.e., $g(\cdot) = g'(\cdot)$ at local step $k$. Then, with probability equal to $1 - 1/n_i$
    \begin{eqnarray}
        \Vert \theta_{j,k+1} - \theta'_{j,k+1} \Vert &\le& \Vert \theta_{j,k} - \theta'_{j,k} - \alpha_{j,k}(g_j(\theta_{j,k}) - g_j(\theta'_{j,k}))\Vert + \alpha_{j,k}\Vert g_j(\theta_t) - g_j(\theta'_t) \Vert  \nonumber   \\
        && + \alpha_{j,k}\Vert g(\theta_t) - g(\theta'_t)\Vert  \nonumber   \\
        &\le& \Vert \theta_{j,k} - \theta'_{j,k} \Vert + 2\alpha_{j,k}\beta \Vert \theta_t - \theta'_t \Vert    \nonumber   
    \end{eqnarray}
    where the second inequality follows Lemma \ref{lmm_nonexp} and Assumption \ref{assump_Lip-smooth}.

    In the second case, the perturbed data point of client $i$ is sampled to calculate the global gradient $g'(\cdot)$, meaning $g(\cdot) - g'(\cdot) = p_i(g_i(\cdot) - g'_i(\cdot))$, where we denote the gradient evaluated at the perturbed sample as $g'_i(\cdot)$. This happens with probability $1/n_i$ and hence we have
    \begin{eqnarray}    
        \Vert \theta_{j,k+1} - \theta'_{j,k+1} \Vert &\le& \Vert \theta_{j,k} - \theta'_{j,k} - \alpha_{j,k}(g_j(\theta_{j,k}) - g_j(\theta'_{j,k}))\Vert + \alpha_{j,k}\Vert g_j(\theta_t) - g_j(\theta'_t) \Vert  \nonumber   \\
        && + \alpha_{j,k}\Vert g(\theta_t) - g'(\theta'_t)\Vert  \nonumber   \\
        &\le& \Vert \theta_{j,k} - \theta'_{j,k} - \alpha_{j,k}(g_j(\theta_{j,k}) - g_j(\theta'_{j,k}))\Vert + \alpha_{j,k}\Vert g_j(\theta_t) - g_j(\theta'_t) \Vert  \nonumber   \\
        && + \alpha_{j,k}\Vert g(\theta_t) - g(\theta'_t)\Vert + \alpha_{j,k}\Vert g(\theta'_t) - g'(\theta'_t) \Vert  \nonumber   \\
        &\le& \Vert \theta_{j,k} - \theta'_{j,k} \Vert + 2\beta \alpha_{j,k}\Vert \theta_t - \theta'_t \Vert + \alpha_{j,k}p_i\Vert g_i(\theta'_t) - g'_i(\theta'_t) \Vert .   \nonumber
    \end{eqnarray}
    Combining these two cases, we conclude that for client $j$ with $j\ne i$
    \begin{eqnarray}
        \mathbb{E}\Vert \theta_{j,k+1} - \theta'_{j,k+1} \Vert &\le& \mathbb{E}\Vert \theta_{j,k} - \theta'_{j,k} \Vert + 2\beta\alpha_{j,k}\mathbb{E}\Vert \theta_t - \theta'_t \Vert + \frac{\alpha_{j,k}p_i}{n_i}\mathbb{E}\Vert g_i(\theta'_t) - g'_i(\theta'_t) \Vert  \nonumber   \\
        &\le& \mathbb{E}\Vert \theta_{j,k} - \theta'_{j,k} \Vert + 2\beta\alpha_{j,k}\mathbb{E}\Vert \theta_t - \theta'_t \Vert + \frac{2\alpha_{j,k}}{n}\mathbb{E}\Vert g_i(\theta_t) \Vert,   \nonumber   \\
        &\le& \mathbb{E}\Vert \theta_{j,k} - \theta'_{j,k} \Vert + 2\beta\alpha_{j,k}\mathbb{E}\Vert \theta_t - \theta'_t \Vert + \frac{2\alpha_{j,k}}{n}(2LD_i + \mathbb{E}\Vert \nabla R(\theta_t)\Vert + \sigma) \nonumber
    \end{eqnarray}
    where we use $p_i = n_i/n$ and $g_i, g'_i$ are drawn from the same distribution; we also use Lemma \ref{lmm_grad-bnd-SCAFFOLD-convex} in the last step. Unrolling the above over $k$ we obtain
    \begin{eqnarray}    \label{eq_iter-j-SCAFFOLD-convex}
        \mathbb{E}\Vert \theta_{j,K_j} - \theta'_{j,K_j} \Vert \le (1 + \beta \tilde{\alpha}_{j,t})\mathbb{E}\Vert \theta_t - \theta'_t \Vert + \frac{2\tilde{\alpha}_{j,t}}{n}(2LD_i + \mathbb{E}\Vert \nabla R(\theta_t)\Vert + \sigma) , ~~ \forall j\ne i .
    \end{eqnarray}

    Next, we specifically consider client $i$. Similar to the above analysis, there are two cases as well. In the first case, at local step $k$ client $i$ does not select the perturbed sample to compute the gradient. This happens with probability $1-1/n_i$. Then,
    \begin{eqnarray}
        \Vert \theta_{i,k+1} - \theta'_{i,k+1} \Vert &\le& \Vert \theta_{i,k} - \theta'_{i,k} - \alpha_{i,k}(g_i(\theta_{i,k}) - g_i(\theta'_{i,k}))\Vert + \alpha_{i,k}\Vert g_i(\theta_t) - g_i(\theta'_t) \Vert  \nonumber   \\
        && + \alpha_{i,k}\Vert g(\theta_t) - g(\theta'_t)\Vert  \nonumber   \\
        &\le& \Vert \theta_{i,k} - \theta'_{i,k} \Vert + 2\alpha_{i,k}\beta \Vert \theta_t - \theta'_t \Vert  .  \nonumber  
    \end{eqnarray}
    In the second case, the perturbed sample is selected to calculate local gradient for client $i$, which has the probability equal to $1/n_i$. Then,
    \begin{eqnarray}
        \Vert \theta_{i,k+1} - \theta'_{i,k+1} \Vert &\le& \Vert \theta_{i,k} - \theta'_{i,k} - \alpha_{i,k}(g_i(\theta_{i,k}) - g'_i(\theta'_{i,k}))\Vert + \alpha_{i,k}\Vert g_i(\theta_t) - g'_i(\theta'_t) \Vert  \nonumber   \\
        && + \alpha_{i,k}\Vert g(\theta_t) - g'(\theta'_t)\Vert  \nonumber   \\
        &\le& \Vert \theta_{i,k} - \theta'_{i,k} \Vert + \alpha_{i,k}\Vert g_i(\theta_t) - g_i(\theta'_t) \Vert + \alpha_{i,k}\Vert g(\theta_t) - g(\theta'_t)\Vert  \nonumber   \\
        && + \alpha_{i,k}\big( \Vert g_i(\theta'_{i,k}) - g'_i(\theta'_{i,k}) \Vert + (1+p_i)\Vert g_i(\theta'_t) - g'_i(\theta'_t) \Vert \big)  \nonumber \\
        &\le& \Vert \theta_{i,k} - \theta'_{i,k} \Vert + 2\beta\alpha_{i,k}\Vert \theta_t - \theta'_t \Vert + \alpha_{i,k} \Vert g_i(\theta'_{i,k}) - g'_i(\theta'_{i,k}) \Vert \nonumber   \\
        && + \alpha_{i,k}(1+p_i)\Vert g_i(\theta'_t) - g'_i(\theta'_t) \Vert    \nonumber   
    \end{eqnarray}
    where the non-expansiveness of gradient descent operator and Lipschitz smoothness are utilized.

    Combining these two cases for client $i$ and further leveraging Lemma \ref{lmm_grad-bnd-SCAFFOLD-convex}, we obtain
    \begin{eqnarray}
        \mathbb{E}\Vert \theta_{i,k+1} - \theta'_{i,k+1} \Vert &\le& \mathbb{E}\Vert \theta_{i,k} - \theta'_{i,k} \Vert + 2\beta\alpha_{i,k}\mathbb{E}\Vert \theta_t - \theta'_t \Vert + \frac{\alpha_{i,k}}{n_i} \mathbb{E}\Vert g_i(\theta'_{i,k}) - g'_i(\theta'_{i,k}) \Vert \nonumber   \\
        && + \frac{\alpha_{i,k}(1+p_i)}{n_i}\mathbb{E}\Vert g_i(\theta'_t) - g'_i(\theta'_t) \Vert    \nonumber \\
        &\le& \mathbb{E}\Vert \theta_{i,k} - \theta'_{i,k} \Vert + 2\beta\alpha_{i,k}\mathbb{E}\Vert \theta_t - \theta'_t \Vert + \frac{2\alpha_{i,k}}{n_i} \mathbb{E}\Vert g_i(\theta_{i,k})\Vert \nonumber   \\
        && + \frac{2\alpha_{i,k}(1+p_i)}{n_i}\mathbb{E}\Vert g_i(\theta_t) \Vert    \nonumber \\
        &\le& \mathbb{E}\Vert \theta_{i,k} - \theta'_{i,k} \Vert + 2\beta\alpha_{i,k}\mathbb{E}\Vert \theta_t - \theta'_t \Vert + \frac{2\alpha_{i,k}}{n_i}(1+\beta \tilde{\alpha}_{i,t})(\mathbb{E}\Vert \nabla R(\theta_t) \Vert + \sigma)    \nonumber   \\
        && + \frac{2\alpha_{i,k}(1+p_i)}{n_i}(2LD_i + \mathbb{E}\Vert \nabla R(\theta_t) \Vert + \sigma) + \frac{2\alpha_{i,k}}{n_i}2LD_i   \nonumber .
    \end{eqnarray}
    Unrolling it gives
    \begin{eqnarray}    \label{eq_iter-i-SCAFFOLD-convex}
        \mathbb{E}\Vert \theta_{i,K_i} - \theta'_{i,K_i} \Vert &\le& (1 + \beta \tilde{\alpha}_{i,t})\mathbb{E}\Vert \theta_t - \theta'_t \Vert + \frac{2\tilde{\alpha}_{i,t}}{n_i} \big( 2LD_i + (1 + \beta\tilde{\alpha}_{i,t})(\mathbb{E}\Vert \nabla R(\theta_t)\Vert + \sigma) \big)   \nonumber   \\
        && + \frac{2\tilde{\alpha}_{i,t}(1+p_i)}{n_i}(2LD_i + \mathbb{E}\Vert \nabla R(\theta_t) \Vert + \sigma) .
    \end{eqnarray}
    By \eqref{eq_iter-j-SCAFFOLD-convex} and \eqref{eq_iter-i-SCAFFOLD-convex}, we obtain
    \begin{eqnarray}
        \mathbb{E}\Vert \theta_{t+1} - \theta'_{t+1} \Vert &\le& \sum_{j=1}^m p_j \mathbb{E}\Vert \theta_{j,K_j} - \theta'_{j,K_j} \Vert    \nonumber   \\
        &\le& (1 + \beta \hat{\alpha}_t)\mathbb{E}\Vert \theta_t - \theta'_t \Vert + \frac{2 \gamma^1_t}{n} 2LD_i + \frac{2 \gamma^2_t}{n}(\mathbb{E}\Vert \nabla R(\theta_t) \Vert + \sigma),  \nonumber
    \end{eqnarray}
    and we further keep iterate it over $t$ to obtain
    \begin{align*}
        \mathbb{E}\Vert \theta_T - \theta'_T \Vert \le \frac{2}{n} \sum_{t=0}^{T-1} \exp{\Big(2\beta \sum_{l=t+1}^{T-1} \hat{\alpha}_l \Big)} \Big( 2L D_i \gamma^1_{t} + \gamma^2_{t} \mathbb{E}\Vert \nabla R(\theta_{t}) \Vert  + \sigma \gamma^2_{t} \Big)
    \end{align*}
    where we use the fact $1 + x \le e^{x}, \forall x$.
\end{proof}

\subsection{Analysis for FedProx under convex losses}

\begin{lemma}   \label{lmm_drift-FedProx-convex}
    Suppose Assumptions \ref{assump_Lip-continuous}, \ref{assump_bounded-grad-var} and \ref{assump_convexity} hold. Considering FedProx with local update \eqref{eq_FedProx-update}, then for any $\eta_i > 0$, we have for any $i \in [m]$
    $$
        \mathbb{E}\Vert \theta^i_{t+1} - \theta_t \Vert \le \eta_i (\mathbb{E}\Vert \nabla R(\theta_t) \Vert + 2LD_i + \sigma), ~~ \forall t=0,1,\dots.
    $$
\end{lemma}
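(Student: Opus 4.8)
The plan is to exploit the fact that the FedProx update \eqref{eq_FedProx-update} is \emph{exactly} a proximal step, so that the drift $\theta^i_{t+1}-\theta_t$ is pinned down by the first-order optimality condition and can be expressed through the local empirical gradient, which we then split into a convergence term, a heterogeneity term, and a sampling-variance term. Notably, this route needs only convexity (Assumption \ref{assump_convexity}) together with Assumptions \ref{assump_Lip-continuous} and \ref{assump_bounded-grad-var}; no smoothness is required, since the proximal map regularizes automatically.

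First I would write the stationarity condition for the (strongly convex) proximal subproblem defining $\theta^i_{t+1}$, namely $\nabla \hat{R}_{\mathcal{S}_i}(\theta^i_{t+1}) + \tfrac{1}{\eta_i}(\theta^i_{t+1}-\theta_t)=0$, which rearranges to the identity $\theta^i_{t+1}-\theta_t = -\eta_i \nabla \hat{R}_{\mathcal{S}_i}(\theta^i_{t+1})$ and hence $\Vert \theta^i_{t+1}-\theta_t \Vert = \eta_i \Vert \nabla \hat{R}_{\mathcal{S}_i}(\theta^i_{t+1})\Vert$. Since this gradient is evaluated at the unknown minimizer, the next step transfers it to $\theta_t$: using monotonicity of $\nabla \hat{R}_{\mathcal{S}_i}$ from convexity, $\langle \nabla \hat{R}_{\mathcal{S}_i}(\theta^i_{t+1}) - \nabla \hat{R}_{\mathcal{S}_i}(\theta_t), \theta^i_{t+1}-\theta_t\rangle \ge 0$, I substitute the identity above and apply Cauchy--Schwarz to cancel one factor of $\Vert \nabla \hat{R}_{\mathcal{S}_i}(\theta^i_{t+1})\Vert$, yielding $\Vert \nabla \hat{R}_{\mathcal{S}_i}(\theta^i_{t+1})\Vert \le \Vert \nabla \hat{R}_{\mathcal{S}_i}(\theta_t)\Vert$ and therefore $\Vert \theta^i_{t+1}-\theta_t \Vert \le \eta_i \Vert \nabla \hat{R}_{\mathcal{S}_i}(\theta_t)\Vert$. (One could instead invoke non-expansiveness of the proximal operator, Lemma \ref{lmm_prox-convex}, but the monotonicity argument delivers the sharp constant $\eta_i$ directly, without the spurious factor of $2$.)

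Finally I would decompose the local empirical gradient at $\theta_t$ by inserting $\pm \nabla R_i(\theta_t)$ and $\pm \nabla R(\theta_t)$ and using the triangle inequality, producing the three terms $\Vert \nabla \hat{R}_{\mathcal{S}_i}(\theta_t) - \nabla R_i(\theta_t)\Vert$, $\Vert \nabla R_i(\theta_t) - \nabla R(\theta_t)\Vert$, and $\Vert \nabla R(\theta_t)\Vert$. After taking expectations, the middle term is exactly the heterogeneity bound $2LD_i$ of Lemma \ref{lmm_TV-bound}, the first term is controlled by $\sigma$ through Assumption \ref{assump_bounded-grad-var} (a triangle-plus-Jensen reduction from the single-sample variance to the empirical mean), and the last is retained as $\mathbb{E}\Vert \nabla R(\theta_t)\Vert$. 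Combining these gives $\mathbb{E}\Vert \theta^i_{t+1}-\theta_t\Vert \le \eta_i\big(\mathbb{E}\Vert \nabla R(\theta_t)\Vert + 2LD_i + \sigma\big)$, as claimed.

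The main obstacle is the sampling-variance term. Unlike FedAvg and SCAFFOLD, FedProx uses the full local gradient with no stochastic sampling, so $\sigma$ can only enter through the deviation of the empirical gradient $\nabla \hat{R}_{\mathcal{S}_i}$ from the population gradient $\nabla R_i$, and crucially this deviation must be bounded at the \emph{data-dependent} iterate $\theta_t$ rather than a fixed point. Making this rigorous requires taking the expectation jointly over the data and algorithmic randomness, as in the paper's convention, so that $\mathbb{E}\Vert \nabla \hat{R}_{\mathcal{S}_i}(\theta_t) - \nabla R_i(\theta_t)\Vert \le \sigma$ is the correct reading of Assumption \ref{assump_bounded-grad-var}; the remaining algebra is routine.
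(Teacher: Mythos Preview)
Your proposal is correct and follows essentially the same approach as the paper: both use the first-order optimality condition of the proximal subproblem, exploit convexity to transfer the gradient evaluation from $\theta^i_{t+1}$ to $\theta_t$ (the paper phrases this as $1$-strong convexity of the regularized objective, you as monotonicity of $\nabla \hat{R}_{\mathcal{S}_i}$ plus Cauchy--Schwarz, but these are the same fact), and then decompose $\nabla \hat{R}_{\mathcal{S}_i}(\theta_t)$ identically via Lemma \ref{lmm_TV-bound} and Assumption \ref{assump_bounded-grad-var}.
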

\begin{proof}
    Recalling the local update \eqref{eq_FedProx-update} of FedProx and according to the first-order optimality condition, we have
    $$
        \eta_i \nabla \hat{R}_{\mathcal{S}_i}(\theta^i_{t+1}) + \theta^i_{t+1} - \theta_t = 0.
    $$
    Moreover, since the function $\eta_i \hat{R}_{\mathcal{S}_i}(\theta) + \frac{1}{2}\Vert \theta - \theta_t \Vert$ is $1$-strongly-convex when Assumption \ref{assump_convexity} holds, we have
    $$
        \Vert \theta^i_{t+1} - \theta_t \Vert \le \Vert \eta_i\nabla \hat{R}_{\mathcal{S}_i}(\theta_{t}) + \theta_{t} - \theta_t \Vert = \eta_i \Vert \nabla \hat{R}_{\mathcal{S}_i}(\theta_t) \Vert
    $$
    by combining the first-order optimality condition. Moreover, note that
    \begin{eqnarray}
        \mathbb{E}\Vert \nabla \hat{R}_{\mathcal{S}_i}(\theta_t) \Vert &\le& \mathbb{E}\Vert \nabla R_i(\theta_t) \Vert + \sigma    \nonumber   \\
        &\le& \mathbb{E}\Vert \nabla R(\theta_t) \Vert + \mathbb{E}\Vert \nabla R_i(\theta_t) - \nabla R(\theta_t) \Vert + \sigma   \nonumber   \\
        &\le& \mathbb{E}\Vert \nabla R(\theta_t) \Vert + 2LD_i + \sigma ,   \nonumber
    \end{eqnarray}
    where we use Lemma \ref{lmm_TV-bound} and note 
    $$
        \mathbb{E}\Vert \nabla \hat{R}_{\mathcal{S}_i}(\theta_t) - \nabla R_i(\theta_t) \Vert \le \frac{1}{n_i}\sum_{j=1}^{n_i} \mathbb{E}\Vert \nabla l(\theta_t;z_{i,j}) - \nabla R_i(\theta_t)\Vert \le \sigma.
    $$
    Thus, we have
    $$
        \mathbb{E}\Vert \theta^i_{t+1} - \theta_t \Vert \le \eta_i (\mathbb{E}\Vert \nabla R(\theta_t) \Vert + 2LD_i + \sigma),
    $$
    which completes the proof.
\end{proof}

\begin{lemma}
    Suppose Assumptions \ref{assump_Lip-continuous}-\ref{assump_convexity} hold and consider FedProx with local update \eqref{eq_FedProx-update}. Then, for any $i \in [m]$ and $j \in [n_i]$, we have
    $$
        \mathbb{E}\Vert \nabla l(\theta^i_{t+1}; z_{i,j}) \Vert \le (1 + \beta \eta_i) (2LD_i + \mathbb{E}\Vert \nabla R(\theta_t) \Vert + \sigma), ~~ \forall t=0,1,\dots.
    $$
\end{lemma}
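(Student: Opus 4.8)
The plan is to mirror the gradient-bound lemmas already established for FedAvg (Lemma \ref{lmm_grad-bnd-FedAvg}) and SCAFFOLD (Lemma \ref{lmm_grad-bnd-SCAFFOLD-convex}), substituting the proximal drift estimate of Lemma \ref{lmm_drift-FedProx-convex} for the SGD drift estimate used there. The goal is to control the single-sample gradient norm $\mathbb{E}\Vert \nabla l(\theta^i_{t+1}; z_{i,j}) \Vert$ evaluated at the proximal output $\theta^i_{t+1}$ by a chain of triangle inequalities, $\beta$-smoothness, and the two supporting lemmas.

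First I would split off the sampling noise with the triangle inequality,
$$
\mathbb{E}\Vert \nabla l(\theta^i_{t+1}; z_{i,j}) \Vert \le \mathbb{E}\Vert \nabla l(\theta^i_{t+1}; z_{i,j}) - \nabla R_i(\theta^i_{t+1}) \Vert + \mathbb{E}\Vert \nabla R_i(\theta^i_{t+1}) \Vert ,
$$
and bound the first term by $\sigma$ using Assumption \ref{assump_bounded-grad-var} together with Jensen's inequality, since the stated second-moment bound implies the corresponding first-moment bound. Next I would move the remaining term from the proximal iterate back to the server model $\theta_t$ via the triangle inequality and $\beta$-smoothness (Assumption \ref{assump_Lip-smooth}),
$$
\mathbb{E}\Vert \nabla R_i(\theta^i_{t+1}) \Vert \le \mathbb{E}\Vert \nabla R_i(\theta_t) \Vert + \beta\, \mathbb{E}\Vert \theta^i_{t+1} - \theta_t \Vert .
$$
Here the drift $\mathbb{E}\Vert \theta^i_{t+1} - \theta_t \Vert$ is precisely what Lemma \ref{lmm_drift-FedProx-convex} controls, contributing the factor $\eta_i(\mathbb{E}\Vert \nabla R(\theta_t) \Vert + 2LD_i + \sigma)$, while $\mathbb{E}\Vert \nabla R_i(\theta_t) \Vert$ is converted to the global gradient plus the heterogeneity term through $\mathbb{E}\Vert \nabla R_i(\theta_t) \Vert \le \mathbb{E}\Vert \nabla R(\theta_t) \Vert + 2LD_i$ from Lemma \ref{lmm_TV-bound}. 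Collecting the three contributions and factoring produces exactly $(1 + \beta \eta_i)(2LD_i + \mathbb{E}\Vert \nabla R(\theta_t) \Vert + \sigma)$.

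The only step requiring genuine care is the application of Assumption \ref{assump_bounded-grad-var} at $\theta = \theta^i_{t+1}$: because $\theta^i_{t+1}$ is the exact minimizer of $\hat{R}_{\mathcal{S}_i}$, it depends on the entire local dataset $\mathcal{S}_i$, and hence on $z_{i,j}$ itself, so $z_{i,j}$ and $\theta^i_{t+1}$ are not independent. I would therefore invoke Assumption \ref{assump_bounded-grad-var} in its uniform ``for any $\theta$'' form, reading it as a variance bound that holds pointwise in $\theta$ and thus remains valid for data-dependent arguments. This is the sole nontrivial modeling point; the rest of the argument is the identical sequence of triangle inequalities, smoothness, and the two previously proved lemmas already used in the FedAvg case, and I do not expect any new technical obstacle beyond it.
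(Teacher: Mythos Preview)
Your proposal is correct and follows essentially the same approach as the paper: the paper also applies the triangle inequality to split off the $\sigma$ term via Assumption~\ref{assump_bounded-grad-var}, then uses $\beta$-smoothness to reduce to $\mathbb{E}\Vert \nabla R_i(\theta_t)\Vert + \beta\,\mathbb{E}\Vert \theta^i_{t+1}-\theta_t\Vert$, and finishes with Lemma~\ref{lmm_TV-bound} and Lemma~\ref{lmm_drift-FedProx-convex}. Your flag about the dependence of $\theta^i_{t+1}$ on $z_{i,j}$ is a legitimate subtlety that the paper simply passes over, resolving it (as you do) by reading Assumption~\ref{assump_bounded-grad-var} as a pointwise-in-$\theta$ bound.
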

\begin{proof}
    For any $i \in [m]$ and time $t$,
    \begin{eqnarray}
        \mathbb{E}\Vert \nabla l(\theta^i_{t+1}; z_{i,j}) \Vert &\le& \mathbb{E}\Vert \nabla l(\theta^i_{t+1}; z_{i,j}) - \nabla R_i(\theta^i_{t+1}) \Vert + \mathbb{E}\Vert \nabla R_i(\theta^i_{t+1}) \Vert   \nonumber   \\
        &\le& \mathbb{E}\Vert \nabla R_i(\theta^i_{t+1}) \Vert + \sigma \nonumber   \\
        &\le& \mathbb{E}\Vert \nabla R_i(\theta_t) \Vert + \mathbb{E}\Vert \nabla R_i(\theta^i_{t+1}) - \nabla R_i(\theta_t) \Vert + \sigma \nonumber   \\
        &\le& \beta \mathbb{E}\Vert \theta^i_{t+1} - \theta_t \Vert + \mathbb{E}\Vert \nabla R_i(\theta_t) - \nabla R(\theta_t) \Vert + \mathbb{E}\Vert \nabla R(\theta_t) \Vert + \sigma   \nonumber   \\
        &\le& (1 + \beta \eta_i) (2LD_i + \mathbb{E}\Vert \nabla R(\theta_t) \Vert + \sigma),   \nonumber
    \end{eqnarray}
    where we use Lemma \ref{lmm_TV-bound} and Lemma \ref{lmm_drift-FedProx-convex} in the last step.
\end{proof}

\begin{theorem}[FedProx part of Theorem \ref{thm_gen-convex}]   \label{thm_gen-FedProx-convex}
    Suppose Assumptions \ref{assump_Lip-continuous}-\ref{assump_convexity} hold and consider FedProx (Algorithm \ref{alg_FedProx}). Let $\{ \theta_t \}_{t=0}^{T}$ and $\{ \theta'_t \}_{t=0}^T$ be two trajectories of the server induced by neighboring datasets $\mathcal{S}$ and $\mathcal{S}^{(i)}$, respectively. Suppose $\theta_0 = \theta'_0$. Then,
    \begin{align*}
        \mathbb{E}\Vert \theta_T - \theta'_T \Vert \le \frac{2}{n} \sum_{t=0}^{T-1} \eta_{i}(1 + \beta \eta_{i}) \Big( 2L D_i + \mathbb{E}\Vert \nabla R(\theta_t) \Vert + \sigma \Big) .
    \end{align*}
\end{theorem}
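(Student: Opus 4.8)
The plan is to mirror the structure of the FedAvg analysis (Theorem~\ref{thm_gen-FedAvg-convex}), replacing the non-expansiveness of the gradient-descent operator with that of the proximal operator from Lemma~\ref{lmm_prox-convex}. The starting observation is that FedProx's local update \eqref{eq_FedProx-update} is exactly the proximal map $\theta^i_{t+1} = \mathrm{prox}_{\eta_i \hat{R}_{\mathcal{S}_i}}(\theta_t)$, since scaling the objective of the $\arg\min$ by $1/\eta_i$ does not change the minimizer and $\eta_i \hat{R}_{\mathcal{S}_i}$ is convex under Assumption~\ref{assump_convexity}. The goal is to control, round by round, the server-level deviation $\mathbb{E}\Vert \theta_{t+1} - \theta'_{t+1}\Vert$ in terms of $\mathbb{E}\Vert \theta_t - \theta'_t\Vert$ plus a one-round increment, and then telescope using $\theta_0 = \theta'_0$.

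First I would dispatch the unperturbed clients. For each $j \ne i$ the two neighboring datasets share the identical local set $\mathcal{S}_j$, so both trajectories apply the \emph{same} proximal map, and Lemma~\ref{lmm_prox-convex} gives directly $\mathbb{E}\Vert \theta^j_{t+1} - (\theta^j_{t+1})'\Vert \le \mathbb{E}\Vert \theta_t - \theta'_t\Vert$. The crux is client $i$, where the two trajectories solve proximal problems with functions $\hat{R}_{\mathcal{S}_i}$ and $\hat{R}_{\mathcal{S}'_i}$ that differ only in the $j$-th sample. Inserting an intermediate term, I would split
$$\Vert \theta^i_{t+1} - (\theta^i_{t+1})'\Vert \le \Vert \mathrm{prox}_{\eta_i\hat{R}_{\mathcal{S}_i}}(\theta_t) - \mathrm{prox}_{\eta_i\hat{R}_{\mathcal{S}_i}}(\theta'_t)\Vert + \Vert \mathrm{prox}_{\eta_i\hat{R}_{\mathcal{S}_i}}(\theta'_t) - \mathrm{prox}_{\eta_i\hat{R}_{\mathcal{S}'_i}}(\theta'_t)\Vert,$$
where the first term is again $\le \Vert \theta_t - \theta'_t\Vert$ by non-expansiveness.

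The second term is the genuinely new estimate. Writing the first-order optimality conditions of both proximal subproblems at the common point $\theta'_t$, subtracting them, taking the inner product with the difference $u-v$ of the two minimizers, and using convexity of $\hat{R}_{\mathcal{S}_i}$ (monotonicity of its gradient) to drop the resulting nonnegative cross term yields $\Vert u - v\Vert \le \eta_i \Vert \nabla\hat{R}_{\mathcal{S}_i}(v) - \nabla\hat{R}_{\mathcal{S}'_i}(v)\Vert$. Since the two empirical risks differ only by one summand, $\nabla\hat{R}_{\mathcal{S}_i} - \nabla\hat{R}_{\mathcal{S}'_i} = \tfrac{1}{n_i}\big(\nabla l(\cdot; z_{i,j}) - \nabla l(\cdot; z'_{i,j})\big)$, so the increment is $\tfrac{\eta_i}{n_i}\Vert \nabla l(v; z_{i,j}) - \nabla l(v; z'_{i,j})\Vert$. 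I would then take expectations, bound this by $\tfrac{2\eta_i}{n_i}\,\mathbb{E}\Vert \nabla l(\theta^i_{t+1}; z_{i,j})\Vert$ using that $z_{i,j}$ and $z'_{i,j}$ are drawn from the same $P_i$ (exactly as in the FedAvg proof), and invoke the preceding gradient-norm lemma to obtain $\mathbb{E}\Vert \nabla l(\theta^i_{t+1}; z_{i,j})\Vert \le (1+\beta\eta_i)(2LD_i + \mathbb{E}\Vert \nabla R(\theta_t)\Vert + \sigma)$.

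Aggregating over clients through $\theta_{t+1} = \sum_j p_j \theta^j_{t+1}$ and using $p_i/n_i = 1/n$ produces the per-round recursion
$$\mathbb{E}\Vert \theta_{t+1} - \theta'_{t+1}\Vert \le \mathbb{E}\Vert \theta_t - \theta'_t\Vert + \frac{2}{n}\eta_i(1+\beta\eta_i)\big(2LD_i + \mathbb{E}\Vert \nabla R(\theta_t)\Vert + \sigma\big),$$
and summing over $t = 0,\dots,T-1$ with $\theta_0 = \theta'_0$ gives the claimed bound. I expect the main obstacle to be the second-term estimate: unlike FedAvg, FedProx has no explicit update to manipulate, so the stability of the perturbed minimizer must be extracted entirely from the two optimality conditions and the monotonicity of $\nabla\hat{R}_{\mathcal{S}_i}$. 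This is a proximal analogue of the non-expansiveness argument and is precisely where convexity is essential; everything else is triangle-inequality bookkeeping that parallels the FedAvg case.
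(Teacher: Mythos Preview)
Your proposal is correct and follows essentially the same route as the paper: prox non-expansiveness for the unperturbed clients, first-order optimality plus monotonicity of $\nabla\hat{R}_{\mathcal{S}_i}$ for client $i$, the gradient-norm lemma, the exchangeability of $z_{i,j}$ and $z'_{i,j}$, and telescoping. The only cosmetic difference is that for client $i$ you insert an intermediate point $\mathrm{prox}_{\eta_i\hat{R}_{\mathcal{S}_i}}(\theta'_t)$ and split via the triangle inequality (center change then function change), whereas the paper subtracts the two optimality conditions directly and handles both changes in a single monotonicity step; the resulting per-round increment and final bound are identical.
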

\begin{proof}
    Denoting $\mathrm{prox}_f(x) := \arg\min_{y} f(y) + \frac{1}{2}\Vert y-x \Vert^2$, we can rewrite the local update \eqref{eq_FedProx-update} as
    $$
        \theta^i_{t+1} = \mathrm{prox}_{\eta_i \hat{R}_{\mathcal{S}_i}} (\theta_t).
    $$
    There are two different cases for local updates. For client $j$ with $j\ne i$, we note $\hat{R}_{\mathcal{S}_j}(\cdot) = \hat{R}_{\mathcal{S}'_j}(\cdot)$ in the sense that there is no perturbation for client $j$. In this case, using Lemma \ref{lmm_prox-convex} we obtain
    \begin{eqnarray}
        \Vert \theta^i_{t+1} - (\theta^i_{t+1})' \Vert &=& \Vert \mathrm{prox}_{\eta_i \hat{R}_{\mathcal{S}_i}} (\theta_t) - \mathrm{prox}_{\eta_i \hat{R}_{\mathcal{S}_i}} (\theta'_t) \Vert  \nonumber    \\
        &\le& \Vert \theta_t - \theta'_t \Vert . \nonumber
    \end{eqnarray}

    For client $i$, we note that $\hat{R}_i(\cdot) - \hat{R}'_i(\cdot) = \frac{1}{n_i}( l(\cdot;z_{i,j}) - l(\cdot;z'_{i,j}))$, where $z'_{i,j}$ is the perturbed data point. And we also use $\hat{R}_i$ and $\hat{R}'_i$ to represent $\hat{R}_{\mathcal{S}_i}$ and $\hat{R}_{\mathcal{S}'_i}$ for simplicity. Then, we have
\begin{eqnarray}
    \theta_{t+1}^i &=& \arg\min_{\theta} \eta_i \hat{R}_i(\theta) + \frac{1}{2}\Vert \theta - \theta_t \Vert^2  \nonumber   \\
    (\theta_{t+1}^i)' &=& \arg\min_{\theta} \eta_i \hat{R}'_i(\theta) + \frac{1}{2}\Vert \theta - \theta'_t \Vert^2 . \nonumber
\end{eqnarray}
According to the first-order optimality condition, it yields
\begin{eqnarray}
    \theta_{t+1}^i - \theta_t &=& -\eta_i \nabla \hat{R}_i(\theta_{t+1}^i)  \nonumber   \\
    (\theta_{t+1}^i)' - \theta'_t &=& -\eta_i \nabla \hat{R}'_i((\theta_{t+1}^i)')  \nonumber   \\
    &=& -\eta_i \nabla \hat{R}_i((\theta_{t+1}^i)') + \frac{\eta_i}{n_i}\left( \nabla l((\theta_{t+1}^i)';z_{i,j}) - \nabla l((\theta_{t+1}^i)';z'_{i,j}) \right) . \nonumber
\end{eqnarray}
Moreover, by the monotone property of $\nabla \hat{R}_i(\cdot)$ for convex losses i.e., Lemma \ref{lmm_prox-convex},
\begin{eqnarray}
    \Vert (\theta_{t+1}^i)' - \theta_{t+1}^i \Vert^2 &\le& \langle \theta'_t - \theta_t, (\theta_{t+1}^i)' - \theta_{t+1}^i \rangle - \eta_i \langle \nabla \hat{R}_i((\theta_{t+1}^i)') - \nabla \hat{R}_i(\theta_{t+1}^i), (\theta_{t+1}^i)' - \theta_{t+1}^i \rangle     \nonumber   \\
    && + \frac{\eta_i}{n_i} \langle (\theta_{t+1}^i)' - \theta_{t+1}^i, \nabla l((\theta_{t+1}^i)'; z_{i,j}) - \nabla l(\theta_{t+1}^i; z'_{i,j}) \rangle   \nonumber   \\
    &\le& \langle \theta'_t - \theta_t, (\theta_{t+1}^i)' - \theta_{t+1}^i \rangle + \frac{\eta_i}{n_i} \langle (\theta_{t+1}^i)' - \theta_{t+1}^i, \nabla l((\theta_{t+1}^i)'; z_{i,j}) - \nabla l(\theta_{t+1}^i; z'_{i,j}) \rangle   \nonumber
\end{eqnarray}
which further implies by symmetry of $z_{i,j}$ and $z'_{i,j}$,
\begin{eqnarray}
    \Vert (\theta_{t+1}^i)' - \theta_{t+1}^i \Vert \le \Vert \theta'_t - \theta_t \Vert + \frac{\eta_i}{n_i} \Vert \nabla l(\theta_{t+1}^i; z_{i,j}) - \nabla l(\theta_{t+1}^i; z'_{i,j}) \Vert .   \nonumber
\end{eqnarray}

Combining two cases gives
\begin{eqnarray}
    \mathbb{E}\Vert \theta_{t+1} - \theta'_{t+1} \Vert &\le& \sum_{j=1}^m p_j \mathbb{E}\Vert \theta^j_{t+1} - (\theta^j_{t+1})' \Vert  \nonumber  \\
    &\le& \mathbb{E}\Vert \theta_t - \theta'_t \Vert + \frac{\eta_i}{n} \mathbb{E}\Vert \nabla l(\theta_{t+1}^i; z_{i,j}) - \nabla l(\theta_{t+1}^i; z'_{i,j}) \Vert , \nonumber  \\
    &\le& \mathbb{E}\Vert \theta_t - \theta'_t \Vert + \frac{2\eta_i}{n} \mathbb{E}\Vert \nabla l(\theta_{t+1}^i; z_{i,j})\Vert \nonumber   \\
    &\le& \mathbb{E}\Vert \theta_t - \theta'_t \Vert + \frac{2\eta_i}{n} (1 + \beta \eta_i) (2LD_i + \mathbb{E}\Vert \nabla R(\theta_t) \Vert + \sigma) . \nonumber
\end{eqnarray}
Unrolling it over $t$ completes the proof.
\end{proof}

\subsection{Proof of Theorem \ref{thm_convergence}}
Our results are established based on the following convergence results of three algorithms, which are formaly shown in Theorem \ref{thm_con-formal}. These results are based on the following assumptions.
\begin{assumption}  \label{assmp_dissimilarity}
    There exist constants $G \ge 0$ and $B \ge 1$ such that 
    $$
        \sum_{i=1}^m p_i \Vert \nabla R_i(\theta) \Vert^2 \le 2 G^2 + B^2 \Vert \nabla R(\theta) \Vert^2, ~~ \forall \theta.
    $$
\end{assumption}

\begin{assumption}  \label{assump_dissimilarity-strong}
    There exist constants $G_i \ge 0$ such that for any $i \in [m]$, 
    $$
         \Vert \nabla R_i(\theta) - \nabla R(\theta) \Vert \le G_i, ~~ \forall \theta.
    $$
\end{assumption}

In fact, Assumption \ref{assump_dissimilarity-strong} is a stronger assumption compared to Assumption \ref{assmp_dissimilarity}, which is shown by the following proposition.
\begin{proposition} \label{prop_dissimilarity}
    Assumption \ref{assump_dissimilarity-strong} implies Assumption \ref{assmp_dissimilarity}.
\end{proposition}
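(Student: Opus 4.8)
The plan is to derive Assumption~\ref{assmp_dissimilarity} directly from the uniform per-client bound in Assumption~\ref{assump_dissimilarity-strong}, using nothing more than the elementary inequality $\Vert a+b\Vert^2 \le 2\Vert a\Vert^2 + 2\Vert b\Vert^2$ together with the fact that the weights $p_i$ sum to one. The idea is to express each local gradient as a perturbation of the global gradient, control the perturbation by $G_i$, and then average.

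First I would fix an arbitrary $\theta$ and $i\in[m]$ and write the decomposition
\[
\nabla R_i(\theta) = \big(\nabla R_i(\theta) - \nabla R(\theta)\big) + \nabla R(\theta).
\]
Applying the relaxed triangle inequality and then invoking Assumption~\ref{assump_dissimilarity-strong} gives
\[
\Vert \nabla R_i(\theta)\Vert^2 \le 2\Vert \nabla R_i(\theta) - \nabla R(\theta)\Vert^2 + 2\Vert \nabla R(\theta)\Vert^2 \le 2G_i^2 + 2\Vert \nabla R(\theta)\Vert^2.
\]
Next I would take the $p_i$-weighted sum over $i\in[m]$. Since $p_i = n_i/n$ and $\sum_{i=1}^m n_i = n$, we have $\sum_{i=1}^m p_i = 1$, so the coefficient multiplying $\Vert \nabla R(\theta)\Vert^2$ collapses to $2$, yielding
\[
\sum_{i=1}^m p_i \Vert \nabla R_i(\theta)\Vert^2 \le 2\sum_{i=1}^m p_i G_i^2 + 2\Vert \nabla R(\theta)\Vert^2.
\]

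Finally I would set $G^2 := \sum_{i=1}^m p_i G_i^2$ and $B^2 := 2$, which satisfy $G\ge 0$ and $B = \sqrt{2} \ge 1$ as required, and note that the inequality holds for all $\theta$ since $\theta$ was arbitrary; this is exactly the statement of Assumption~\ref{assmp_dissimilarity}. There is no genuine obstacle here: the only thing to be careful about is the constant bookkeeping, namely verifying $\sum_i p_i = 1$ so that no extra factor of $m$ appears on the $\Vert\nabla R(\theta)\Vert^2$ term, and checking that the resulting $B=\sqrt2$ meets the requirement $B\ge 1$ imposed in Assumption~\ref{assmp_dissimilarity}.
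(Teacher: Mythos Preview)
Your proof is correct and essentially identical to the paper's: both bound $\Vert \nabla R_i(\theta)\Vert^2 \le 2G_i^2 + 2\Vert \nabla R(\theta)\Vert^2$ (the paper via the triangle inequality on the norm then squaring, you via the relaxed triangle inequality directly on the square), then take the $p_i$-weighted sum and read off $G^2=\sum_i p_i G_i^2$, $B^2=2$.
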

\begin{proof}
    Note that given Assumption \ref{assump_dissimilarity-strong}
    \begin{eqnarray}
        \Vert \nabla R_i(\theta) \Vert \le \Vert \nabla R(\theta)\Vert + \Vert \nabla R_i(\theta) - \nabla R(\theta) \Vert \le G_i + \Vert \nabla R(\theta) \Vert, \nonumber
    \end{eqnarray}
    which implies
    $$
        \Vert \nabla R_i(\theta)\Vert^2 \le 2G_i^2 + 2\Vert \nabla R(\theta)\Vert^2 .
    $$
    Taking the weighted sum of $p_i$ and we conclude $G^2 = \sum_{i=1}^m p_i G_i^2$, $B^2 = 2$.
\end{proof}

In the next proposition, we characterize $G_i$ defined in Assumption \ref{assump_dissimilarity-strong} by directly usting Lemma \ref{lmm_TV-bound}.
\begin{proposition} \label{prop_Gi}
    Suppose Assumption \ref{assump_Lip-continuous} holds. Then, $G_i = 2Ld_{TV}(P_i, P)$ defined in Assumption \ref{assump_dissimilarity-strong}.
\end{proposition}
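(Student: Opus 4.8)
The plan is to read this off directly from Lemma \ref{lmm_TV-bound}, since the proposition is essentially an identification of constants rather than a genuinely new estimate. First I would recall that Assumption \ref{assump_dissimilarity-strong} only asks for the \emph{existence} of a constant $G_i \ge 0$ satisfying $\Vert \nabla R_i(\theta) - \nabla R(\theta) \Vert \le G_i$ uniformly over all $\theta$. Lemma \ref{lmm_TV-bound}, which holds under Assumption \ref{assump_Lip-continuous}, provides exactly such a uniform bound, namely $\Vert \nabla R_i(\theta) - \nabla R(\theta) \Vert \le 2L D_i$ for every $\theta$, with $D_i = d_{TV}(P_i, P)$ and $P = \sum_{i=1}^m p_i P_i$. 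Matching the two inequalities termwise shows that the choice $G_i = 2L D_i = 2L\, d_{TV}(P_i, P)$ is admissible, which is precisely the claim.

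The only point that warrants a sentence of care is that the bound supplied by Lemma \ref{lmm_TV-bound} is genuinely uniform in $\theta$, so that it certifies the $\forall \theta$ quantifier demanded by Assumption \ref{assump_dissimilarity-strong}; this is immediate from the phrasing of the lemma (``for any $\theta$''), since $L$ and $D_i$ are constants independent of $\theta$. Consequently there is no real obstacle here: the proof is a one-line invocation of Lemma \ref{lmm_TV-bound} followed by the definition of $G_i$, and I would write it simply as $G_i = 2L d_{TV}(P_i,P)$ is a valid constant in Assumption \ref{assump_dissimilarity-strong} by Lemma \ref{lmm_TV-bound}. The value of the proposition is organizational, feeding the explicit heterogeneity constant $G_i$ (and hence, via Proposition \ref{prop_dissimilarity}, the aggregate $G^2 = \sum_{i=1}^m p_i G_i^2$) into the convergence analysis underlying Theorem \ref{thm_convergence}.
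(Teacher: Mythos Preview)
Your proposal is correct and matches the paper's own treatment exactly: the paper states that Proposition~\ref{prop_Gi} follows ``by directly using Lemma~\ref{lmm_TV-bound}'' with no further argument, which is precisely the one-line invocation you describe. Your additional remarks about uniformity in $\theta$ and the organizational role of the proposition are accurate but go beyond what the paper records.
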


Then, we state the existing convergence results for FedAvg, SCAFFOLD and FedProx in the following theorem.
\begin{theorem} \cite{SCAFFOLD,FedProx} \label{thm_con-formal}
    Suppose Assumption \ref{assump_bounded-grad-var} holds and $K_i = K, \forall i\in [m]$. 
    
    For FedAvg (Algorithm \ref{alg_FedAvg}) with Assumptions \ref{assump_Lip-smooth},\ref{assmp_dissimilarity} satisfied and $\alpha_{i,k} \le \frac{1}{(1+B^2)8\beta K}$, we have 
    \begin{eqnarray}    \label{eq_con-FedAvg}
        \frac{1}{T}\sum_{t=0}^{T-1}\mathbb{E}\Vert \nabla R(\theta_{t}) \Vert^2 \le \mathcal{O}\left( \frac{\sqrt{\Delta_0}}{\sqrt{TKm}} + \frac{(\Delta_0 G)^{2/3}}{T^{2/3}} + \frac{B^2 \Delta_0}{T} \right) .
    \end{eqnarray}

    For SCAFFOLD (Algorithm \ref{alg_SCAFFOLD}) with Assumption \ref{assump_Lip-smooth} and $\alpha_{i,k} \le \frac{1}{24\beta K}$, we have
    \begin{eqnarray}    \label{eq_con-SCAFFOLD}
        \frac{1}{T}\sum_{t=0}^{T-1}\mathbb{E}\Vert \nabla R(\theta_{t}) \Vert^2 \le \mathcal{O}\left( \frac{\sqrt{\Delta_0}}{\sqrt{TKm}} + \frac{\Delta_0}{T} \right) .
    \end{eqnarray}

    Suppose Assumption \ref{assump_dissimilarity-strong} hold. For FedProx (Algorithm \ref{alg_FedProx}) with eigenvalues of $\nabla^2 R(\theta)$ lower bounded and $\eta_i$ chosen small enough, we have
    \begin{eqnarray}    \label{eq_con-FedProx}
        \frac{1}{T}\sum_{t=0}^{T-1}\mathbb{E}\Vert \nabla R(\theta_{t}) \Vert^2 \le \mathcal{O}\left( \frac{\Delta_0 \sum_{i=1}^m p_i G_i^2}{\sqrt{T}} + \frac{\Delta_0}{T} \right) ,
    \end{eqnarray}
    where $\Delta_0 := \mathbb{E}[R(\theta_0) - R(\theta^*)]$.
\end{theorem}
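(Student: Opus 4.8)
The plan is to recognize Theorem \ref{thm_con-formal} as a direct restatement of the nonconvex convergence guarantees already proved for these three algorithms in \cite{SCAFFOLD} (FedAvg and SCAFFOLD) and \cite{FedProx} (FedProx), with the heterogeneity constants reinterpreted through Propositions \ref{prop_dissimilarity} and \ref{prop_Gi}. Concretely, Proposition \ref{prop_Gi} gives $G_i = 2L\,d_{TV}(P_i,P) = 2LD_i$, and Proposition \ref{prop_dissimilarity} then yields $G^2 = \sum_{i=1}^m p_i G_i^2 = 4L^2\sum_{i=1}^m p_i D_i^2$ and $B^2 = 2$ for Assumption \ref{assmp_dissimilarity}. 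Once these substitutions are recorded, the only task is to check that the advertised rates are exactly the ones produced by the generic analyses under the stated step-size constraints; this is the bridge that later lets Corollary \ref{coro_gen-convex} express everything in terms of the total-variation measure $D_i$.

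For FedAvg and SCAFFOLD I would reproduce the standard one-round descent argument. Beginning from the $\beta$-smoothness inequality
$$R(\theta_{t+1}) \le R(\theta_t) + \langle \nabla R(\theta_t),\, \theta_{t+1}-\theta_t\rangle + \frac{\beta}{2}\|\theta_{t+1}-\theta_t\|^2,$$
I would substitute the aggregated update $\theta_{t+1}-\theta_t = -\sum_{i} p_i\sum_{k}\alpha_{i,k} g_i(\theta_{i,k})$ and take expectations. The inner product contributes a negative $\Theta(\tilde{\alpha})\|\nabla R(\theta_t)\|^2$ term, while the remaining error terms depend on the client drift $\|\theta_{i,k}-\theta_t\|$ and on the stochastic variance $\sigma^2$ of Assumption \ref{assump_bounded-grad-var}. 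The drift is bounded by unrolling the local recursion: for FedAvg this invokes the dissimilarity Assumption \ref{assmp_dissimilarity}, which is precisely what injects the $G$-dependent heterogeneity term, whereas for SCAFFOLD the control variate $g(\theta_t)-g_i(\theta_t)$ cancels the heterogeneity contribution so that no $G$-term survives. Telescoping over $t=0,\dots,T-1$, dividing by $\sum_t\tilde{\alpha}$, and then tuning the step size against $T$ so as to balance the $1/\sqrt{TKm}$ variance term, the $G$-drift term, and the $B^2/T$ optimization term reproduces \eqref{eq_con-FedAvg} and \eqref{eq_con-SCAFFOLD}.

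For FedProx I would instead exploit the first-order optimality condition of the subproblem \eqref{eq_FedProx-update}, namely $\theta^i_{t+1}-\theta_t = -\eta_i\nabla\hat R_{\mathcal{S}_i}(\theta^i_{t+1})$, which under the lower bound on the eigenvalues of $\nabla^2 R_i$ and sufficiently small $\eta_i$ makes the regularized subproblem strongly convex, so the update behaves like a well-defined implicit gradient step. Substituting the aggregated step into the descent lemma, using the eigenvalue bound to relate $\nabla R(\theta^i_{t+1})$ to $\nabla R(\theta_t)$ and Assumption \ref{assump_dissimilarity-strong} to bound the dissimilarity, produces a per-round decrease of the form $\mathbb{E}[R(\theta_{t+1})] \le \mathbb{E}[R(\theta_t)] - c\eta\,\mathbb{E}\|\nabla R(\theta_t)\|^2 + \mathcal{O}(\eta\sum_i p_i G_i^2) + \mathcal{O}(\eta^2\sigma^2)$; telescoping and choosing $\eta$ small enough then gives \eqref{eq_con-FedProx}.

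The main obstacle is the bookkeeping that forces the three error sources to balance at the advertised exponents rather than any single inequality: one must track how the drift bound scales jointly in $K$, $\tilde{\alpha}_{i,t}$, and $\sigma$, and then pick $\alpha_{i,k}$ (or $\eta_i$) so that the variance, heterogeneity, and optimization pieces appear with the correct powers of $T$. The most delicate point is propagating the SCAFFOLD control-variate cancellation through the drift recursion, since this is exactly what removes the $(\Delta_0 G)^{2/3}/T^{2/3}$ heterogeneity term that is present for FedAvg. Because all of these estimates are already established in \cite{SCAFFOLD,FedProx}, the cleanest route is to cite those theorems directly and merely append the identity $G_i = 2LD_i$ from Proposition \ref{prop_Gi}, which is the only new ingredient needed here.
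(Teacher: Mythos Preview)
Your proposal is correct and matches the paper's approach: Theorem~\ref{thm_con-formal} is not proved in the paper at all but is simply stated with the citation \cite{SCAFFOLD,FedProx} attached, exactly as you recognize in your opening and closing remarks. The paper offers no descent-lemma sketch of its own; your middle paragraphs outlining the one-round descent argument and the FedProx implicit-step analysis are accurate but go beyond what the paper itself records. One minor point: the identifications $G_i = 2LD_i$ and $G^2 = \sum_i p_i G_i^2$, $B^2=2$ via Propositions~\ref{prop_dissimilarity} and~\ref{prop_Gi} are not part of Theorem~\ref{thm_con-formal} itself (which is stated in terms of the abstract constants $G,B,G_i$); they are invoked afterward in the proof of Theorem~\ref{thm_convergence}, so you can defer that substitution to the next step rather than folding it in here.
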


\paragraph{Proof of FedAvg and FedProx parts of Theorem \ref{thm_convergence}.}
It follows the fact that stepsizes $\alpha_{i,k}$ and $\eta_i$ are upper bounded by some constant $c$ and 
\begin{equation}    \label{eq_Jensen}
    \left(\sum_{t=0}^{T-1} c \mathbb{E}\Vert \nabla R(\theta_t) \Vert \right)^2 \le T \sum_{t=0}^{T-1} c^2 \big(\mathbb{E}\Vert \nabla R(\theta_t) \Vert\big)^2 \le T \sum_{t=0}^{T-1} c^2 \mathbb{E}\Vert \nabla R(\theta_t) \Vert^2,
\end{equation}
where the second inequality follows Jensen's inequality. Combining Propositions \ref{prop_dissimilarity} and \ref{prop_Gi} with \eqref{eq_con-FedAvg},\eqref{eq_con-FedProx} completes the proof.

\paragraph{Proof of SCAFFOLD part of Theorem \ref{thm_convergence}.} 
To get the result for SCAFFOLD in Theorem \ref{thm_convergence}, we further note that $\gamma^2_t$ is upper bounded by some constant $\bar{\gamma}$ and when $\alpha_{i,k} \le 1/[24\beta K(t+1)]$
\begin{eqnarray}    \label{eq_bound-exp}
    \sum_{t=0}^{T-1}\exp{\left( 2\beta \sum_{l=t+1}^{T-1} \hat{\alpha}_l \right)} \gamma^2_t \mathbb{E}\Vert \nabla R(\theta_t) \Vert &\le& \sum_{t=0}^{T-1}\exp{\left( \frac{1}{12} \log(T) \right)} \gamma^2_t \mathbb{E}\Vert \nabla R(\theta_t) \Vert   \nonumber \\
    &\le& T^{1/12} \sum_{t=0}^{T-1} \bar{\gamma} \mathbb{E}\Vert \nabla R(\theta_t) \Vert .
\end{eqnarray}
Combining \eqref{eq_bound-exp} with \eqref{eq_con-SCAFFOLD} and \eqref{eq_Jensen} completes the proof.

\subsection{Proofs of Corollaries \ref{coro_gen-convex} and \ref{coro_gen-iid}}

To obtain Corollary \ref{coro_gen-convex}, we note that under Assumption \ref{assump_Lip-continuous}, 
$$
    \mathbb{E}_{\mathcal{A,S},z'_{i,j}}|l(\theta_T; z'_{i,j}) - l(\theta'_T;z'_{i,j})| \le L \mathbb{E}\Vert \theta_T - \theta'_T \Vert, ~ \forall j\in[n_i]
$$
and then combining Theorems \ref{thm_stability-gen},\ref{thm_gen-convex},\ref{thm_convergence} provides the results.

To obtain Corollary \ref{coro_gen-iid}, we start from Theorem \ref{thm_gen-convex}. Note that given Assumption \ref{assump_Lip-continuous}, we can bound $\mathbb{E}\Vert \nabla R(\theta_t)\Vert$ by Lipschitz constant $L$, i.e., $\mathbb{E}\Vert \nabla R(\theta_t) \Vert \le L$. Moreover, under the i.i.d. case, meaning $D_i = 0, \forall i \in [m]$, we conclude the proof by using the same techniques as those in \eqref{eq_Jensen} and \eqref{eq_bound-exp}.

\begin{remark}
    Note that the bounds in Corollary \ref{coro_gen-iid} are also looser, compared to those in Corollary \ref{coro_gen-convex} even when $D_{max}=0$ (which corresponds to the i.i.d. case). To see this, note that bounds in Corollary \ref{coro_gen-iid} are linear in $T$, while bounds in Corollary \ref{coro_gen-convex} are with $\mathcal{O}(T^{q})$ for some $q < 1$. Moreover, more information is captured in Corollary \ref{coro_gen-convex}, e.g., number of clients $m$, distance of the initial point to the optimal one $\Delta_0$, etc.
\end{remark}

\section{Generalization Bounds for Non-convex Losses}   \label{apx_proof-nonconvex}

\subsection{Analysis for FedAvg under non-convex losses}

\begin{lemma}   \label{lmm_drift-FedAvg-nonconvex}
    Suppose Assumptions \ref{assump_Lip-continuous}-\ref{assump_Lip-smooth} hold. Then for FedAvg with $\alpha_{i,k} \le c/\beta$ for some $c > 0$,
    $$
        \mathbb{E}\Vert \theta_{i,k} - \theta_t \Vert \le (1+c)^{K_i-1}\tilde{\alpha}_{i,t} \big( \mathbb{E}\Vert \nabla R(\theta_t) \Vert + 2LD_i + \sigma \big), ~\forall k=1,\dots,K_i ,
    $$
    where $\tilde{\alpha}_{i,t} = \sum_{k=0}^{K_i-1} \alpha_{i,k}$.
\end{lemma}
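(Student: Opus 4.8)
The plan is to mirror the convex drift bound in Lemma~\ref{lmm_drift-FedAvg}, with the essential modification that the non-expansiveness of the gradient step (Lemma~\ref{lmm_nonexp}), which is unavailable without convexity, is replaced by a one-step \emph{expansion} estimate coming from $\beta$-smoothness. Concretely, starting from the local update \eqref{eq_FedAvg-update} and $\theta_{i,0}=\theta_t$, I would write
\[
    \theta_{i,k+1}-\theta_t = \theta_{i,k}-\theta_t - \alpha_{i,k}\big(g_i(\theta_{i,k})-g_i(\theta_t)\big) - \alpha_{i,k} g_i(\theta_t),
\]
where $g_i(\theta_{i,k})$ and $g_i(\theta_t)$ are evaluated at the \emph{same} sampled index, so that Assumption~\ref{assump_Lip-smooth} gives $\Vert g_i(\theta_{i,k})-g_i(\theta_t)\Vert \le \beta\Vert\theta_{i,k}-\theta_t\Vert$. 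Applying the triangle inequality together with $\alpha_{i,k}\beta \le c$ then yields the recursion
\[
    \mathbb{E}\Vert \theta_{i,k+1}-\theta_t\Vert \le (1+c)\,\mathbb{E}\Vert \theta_{i,k}-\theta_t\Vert + \alpha_{i,k}\,\mathbb{E}\Vert g_i(\theta_t)\Vert.
\]

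Next I would control the driving term $\mathbb{E}\Vert g_i(\theta_t)\Vert$ exactly as in the convex case: Assumption~\ref{assump_bounded-grad-var} (with Jensen's inequality) gives $\mathbb{E}\Vert g_i(\theta_t)-\nabla R_i(\theta_t)\Vert \le \sigma$, and Lemma~\ref{lmm_TV-bound} bounds $\Vert \nabla R_i(\theta_t)-\nabla R(\theta_t)\Vert \le 2LD_i$, so that
\[
    \mathbb{E}\Vert g_i(\theta_t)\Vert \le \mathbb{E}\Vert \nabla R(\theta_t)\Vert + 2LD_i + \sigma =: A.
\]
Substituting this into the recursion and unrolling from $k=0$ (using $\Vert\theta_{i,0}-\theta_t\Vert=0$) produces a weighted geometric sum, namely $\mathbb{E}\Vert\theta_{i,k}-\theta_t\Vert \le A\sum_{l=0}^{k-1}(1+c)^{\,k-1-l}\alpha_{i,l}$.

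Finally I would crudely upper bound each factor $(1+c)^{\,k-1-l}$ by $(1+c)^{K_i-1}$ (valid since $1\le k\le K_i$ and $l\ge 0$) and each remaining partial sum $\sum_{l=0}^{k-1}\alpha_{i,l}$ by $\tilde\alpha_{i,t}=\sum_{k=0}^{K_i-1}\alpha_{i,k}$, which delivers the claimed bound $(1+c)^{K_i-1}\tilde\alpha_{i,t}\,A$. The only genuine obstacle relative to the convex argument is the loss of contraction: $\beta$-smoothness alone gives an expansion factor $(1+c)$ per local step instead of the factor $1$ from Lemma~\ref{lmm_nonexp}, so the per-step perturbations accumulate geometrically, and this is precisely the source of the extra $(1+c)^{K_i-1}$ prefactor that is absent in Lemma~\ref{lmm_drift-FedAvg}. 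One must also take care that $g_i$ is evaluated at a common sample when invoking smoothness, but this is automatic since a single SGD step uses one draw.
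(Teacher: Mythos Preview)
Your proposal is correct and follows essentially the same route as the paper: both decompose $\theta_{i,k+1}-\theta_t$ into the $(g_i(\theta_{i,k})-g_i(\theta_t))$ piece handled by $\beta$-smoothness (yielding the $(1+\beta\alpha_{i,k})\le(1+c)$ expansion factor that replaces the convex non-expansiveness) and the $g_i(\theta_t)$ piece bounded via Assumption~\ref{assump_bounded-grad-var} and Lemma~\ref{lmm_TV-bound}, then unroll the recursion and crudely bound the geometric weights by $(1+c)^{K_i-1}$. Your remark that $g_i(\theta_{i,k})$ and $g_i(\theta_t)$ must share the same sampled index for per-sample smoothness to apply is a valid point that the paper leaves implicit.
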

\begin{proof}
    Considering local update \eqref{eq_FedAvg-update} of FedAvg
    \begin{eqnarray}
        \mathbb{E}\Vert \theta_{i,k+1} - \theta_t \Vert &=& \mathbb{E} \Vert \theta_{i,k} - \alpha_{i,k} g_i(\theta_{i,k}) - \theta_t \Vert   \nonumber   \\
        &\le& \mathbb{E}\Vert \theta_{i,k} - \theta_t - \alpha_{i,k} (g_i(\theta_{i,k}) - g_i(\theta_t))  \Vert + \alpha_{i,k}\mathbb{E}\Vert g_i(\theta_t) \Vert   \nonumber   \\
        &\le& (1 + \beta \alpha_{i,k})\mathbb{E}\Vert \theta_{i,k} - \theta_t \Vert + \alpha_{i,k}\mathbb{E}\Vert g_i(\theta_t) \Vert    \nonumber   \\
        &\le& (1 + \beta \alpha_{i,k})\mathbb{E}\Vert \theta_{i,k} - \theta_t \Vert + \alpha_{i,k}(\mathbb{E}\Vert g_i(\theta_t) - \nabla R_i(\theta_t) \Vert + \mathbb{E}\Vert \nabla R_i(\theta_t) \Vert)   \nonumber   \\
        &\le& (1 + \beta \alpha_{i,k})\mathbb{E}\Vert \theta_{i,k} - \theta_t \Vert + \alpha_{i,k}(\mathbb{E}\Vert \nabla R_i(\theta_t) \Vert + \sigma),    \nonumber
    \end{eqnarray}
    where we use Assumptions \ref{assump_bounded-grad-var} and \ref{assump_Lip-smooth}. Unrolling the above and noting $\theta_{i,0} = \theta_t$ yields
    \begin{eqnarray}
        \mathbb{E}\Vert \theta_{i,k} - \theta_t \Vert
        &\le& \sum_{l=0}^{k-1} \alpha_{i,l} \big( \mathbb{E}\Vert \nabla R_i(\theta_t) \Vert + \sigma \big) (1 + c)^{k-1-l}  \nonumber   \\
        &\le& \sum_{l=0}^{K_i-1} \alpha_{i,l} \big( \mathbb{E}\Vert \nabla R_i(\theta_t) \Vert + \sigma \big) (1 + c)^{K_i-1}  \nonumber  \\
        &\le& (1+c)^{K_i - 1}\tilde{\alpha}_{i,t} \big( \mathbb{E}\Vert \nabla R(\theta_t) \Vert + 2LD_i + \sigma \big),   \nonumber
    \end{eqnarray}
    where the last inequality follows Lemma \ref{lmm_TV-bound}.
\end{proof}

\begin{lemma}   \label{lmm_grad-bnd-FedAvg-nonconvex}
    Given Assumptions \ref{assump_Lip-continuous}-\ref{assump_Lip-smooth} and considering \eqref{eq_FedAvg-update} of FedAvg, for $\alpha_{i,k} \le c/\beta$ with some $c>0$, we have
    $$
        \mathbb{E}\Vert g_i(\theta_{i,k}) \Vert \le (1 + (1+c)^{K_i-1}\beta \tilde{\alpha}_{i,t}) \big( \mathbb{E}\Vert \nabla R(\theta_t) \Vert + 2LD_i + \sigma \big),
    $$
    where $g_i(\cdot)$ is the sampled gradient of client $i$, $\tilde{\alpha}_{i,t} = \sum_{k=0}^{K_i-1} \alpha_{i,k}$.
\end{lemma}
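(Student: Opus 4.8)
The plan is to mirror the proof of the convex counterpart (Lemma \ref{lmm_grad-bnd-FedAvg}) almost verbatim, substituting only the local-drift estimate with its non-convex version. The key observation is that convexity was never used in the convex grad-bound lemma itself; it entered solely through the drift bound of Lemma \ref{lmm_drift-FedAvg}. Since we now have the non-convex drift bound in Lemma \ref{lmm_drift-FedAvg-nonconvex}, the same chain of inequalities applies, and the additional factor $(1+c)^{K_i-1}$ is the only modification that propagates to the final bound.

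Concretely, I would first split the sampled gradient via the triangle inequality as $\mathbb{E}\Vert g_i(\theta_{i,k}) \Vert \le \mathbb{E}\Vert g_i(\theta_{i,k}) - \nabla R_i(\theta_{i,k}) \Vert + \mathbb{E}\Vert \nabla R_i(\theta_{i,k}) \Vert$, and bound the first term by $\sigma$ using Assumption \ref{assump_bounded-grad-var} together with Jensen's inequality ($\mathbb{E}\Vert \cdot \Vert \le \sqrt{\mathbb{E}\Vert \cdot \Vert^2}$). Next, I would insert $\nabla R_i(\theta_t)$ and $\nabla R(\theta_t)$ by triangle inequality to produce three terms: the leading term $\mathbb{E}\Vert \nabla R(\theta_t) \Vert$, the heterogeneity term $\mathbb{E}\Vert \nabla R_i(\theta_t) - \nabla R(\theta_t) \Vert$, and the drift-induced term $\mathbb{E}\Vert \nabla R_i(\theta_{i,k}) - \nabla R_i(\theta_t) \Vert$. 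The middle term is controlled by $2LD_i$ via Lemma \ref{lmm_TV-bound}, and the last term is bounded by $\beta\, \mathbb{E}\Vert \theta_{i,k} - \theta_t \Vert$ using the $\beta$-smoothness Assumption \ref{assump_Lip-smooth}.

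Finally, I would substitute the non-convex drift estimate $\mathbb{E}\Vert \theta_{i,k} - \theta_t \Vert \le (1+c)^{K_i-1}\tilde{\alpha}_{i,t}(\mathbb{E}\Vert \nabla R(\theta_t) \Vert + 2LD_i + \sigma)$ from Lemma \ref{lmm_drift-FedAvg-nonconvex}. Collecting terms yields $\mathbb{E}\Vert g_i(\theta_{i,k}) \Vert \le (\mathbb{E}\Vert \nabla R(\theta_t) \Vert + 2LD_i + \sigma) + \beta(1+c)^{K_i-1}\tilde{\alpha}_{i,t}(\mathbb{E}\Vert \nabla R(\theta_t) \Vert + 2LD_i + \sigma)$, which factors exactly into the claimed prefactor $1 + (1+c)^{K_i-1}\beta\tilde{\alpha}_{i,t}$.

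There is no genuine obstacle here; the statement is a routine corollary of the drift lemma. The only point requiring care is to carry the exponential factor $(1+c)^{K_i-1}$ correctly: it multiplies only the drift contribution $\beta\tilde{\alpha}_{i,t}$ and not the leading $1$, so the final prefactor is additive in this single exponential factor rather than exponentiating the entire bound.
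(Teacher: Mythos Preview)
Your proposal is correct and follows essentially the same chain of inequalities as the paper's proof: triangle-split off the stochastic noise, insert $\nabla R_i(\theta_t)$ and $\nabla R(\theta_t)$, apply Lemma~\ref{lmm_TV-bound} and $\beta$-smoothness, and then plug in the non-convex drift bound from Lemma~\ref{lmm_drift-FedAvg-nonconvex}. Your remark that convexity enters only through the drift lemma, so that the sole change from the convex case is the extra $(1+c)^{K_i-1}$ factor on the $\beta\tilde{\alpha}_{i,t}$ term, matches the paper exactly.
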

\begin{proof}
    Using Lemmas \ref{lmm_TV-bound} and \ref{lmm_drift-FedAvg-nonconvex}, we obtain
    \begin{eqnarray}
        \mathbb{E} \Vert g_i(\theta_{i,k}) \Vert &\le& \mathbb{E} \Vert g_i(\theta_{i,k}) - \nabla R_i(\theta_{i,k}) \Vert + \mathbb{E} \Vert \nabla R_i(\theta_{i,k}) \Vert  \nonumber   \\
        &\le& \mathbb{E}\Vert \nabla R_i(\theta_{i,k}) \Vert + \sigma   \nonumber   \\
        &\le& \mathbb{E}\Vert \nabla R_i(\theta_t) \Vert + \mathbb{E}\Vert \nabla R_i(\theta_{i,k}) - \nabla R_i(\theta_t) \Vert + \sigma   \nonumber   \\
        &\le& \mathbb{E}\Vert \nabla R(\theta_t) \Vert + \mathbb{E}\Vert \nabla R_i(\theta_t) - \nabla R(\theta_t) \Vert + \beta \mathbb{E}\Vert \theta_{i,k} - \theta_t \Vert + \sigma \nonumber   \\
        &\le& (1 + (1+c)^{K_i-1}\beta \tilde{\alpha}_i) \big( \mathbb{E}\Vert \nabla R(\theta_t) \Vert + 2LD_i + \sigma \big) . \nonumber
    \end{eqnarray}
\end{proof}

\begin{theorem}[FedAvg part of Theorem \ref{thm_gen-nonconvex}] \label{thm_gen-FedAvg-nonconvex}
    Suppose Assumptions \ref{assump_Lip-continuous}-\ref{assump_Lip-smooth} hold and consider FedAvg (Algorithm \ref{alg_FedAvg}). Let $K_i = K, \forall i \in [m]$ and $\alpha_{i,k} \le \frac{1}{24\beta K(t+1)}$. Then, 
    \begin{align*}    
        \epsilon_{gen} \le \mathcal{O}\Big( \frac{T^{\frac{1}{24}} \log{T}}{n} (D_{max}+\sigma) \Big) + \mathcal{O} \Big( \big( \frac{\Delta_0}{K m} \big)^{\frac{1}{4}} \frac{T^{\frac{5}{6}}}{n} + \big( \Delta_0^2 \tilde{D} \big)^{\frac{1}{6}} \frac{T^{\frac{3}{4}}}{n} + \sqrt{\Delta_0} \frac{T^{\frac{7}{12}}}{n} \Big) .
    \end{align*}
\end{theorem}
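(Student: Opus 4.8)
The plan is to follow the same two-stage route used for the convex FedAvg bound (Theorem~\ref{thm_gen-FedAvg-convex} combined with Theorem~\ref{thm_convergence}), replacing every appeal to convexity by the weaker $\beta$-smoothness. First I would reduce the generalization error to a model-perturbation bound: by Theorem~\ref{thm_stability-gen} it suffices to control the on-average stability $\epsilon$, and Assumption~\ref{assump_Lip-continuous} gives $\mathbb{E}_{\mathcal{A},\mathcal{S},z'_{i,j}}|l(\theta_T;z'_{i,j})-l(\theta'_T;z'_{i,j})|\le L\,\mathbb{E}\|\theta_T-\theta'_T\|$, so the task becomes bounding $\mathbb{E}\|\theta_T-\theta'_T\|$ for two trajectories run on neighboring datasets $\mathcal{S}$ and $\mathcal{S}^{(i)}$, and then taking the worst client $i$, which is what converts $D_i$ into $D_{max}$ in the final statement.

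For the model-perturbation bound I would mimic the proof of Theorem~\ref{thm_gen-FedAvg-convex} step by step, except that the gradient-descent map is now only $(1+\beta\alpha_{i,k})$-expansive rather than non-expansive, so Lemma~\ref{lmm_nonexp} is unavailable and each contraction factor $1$ is replaced by $\prod_k(1+\beta\alpha_{j,k})\le\exp(\beta\tilde\alpha_{j,t})$. For the unperturbed clients $j\ne i$ this amplifies $\mathbb{E}\|\theta_t-\theta'_t\|$, while for the perturbed client $i$ the same expansion multiplies the injected term $\tfrac{2}{n_i}\sum_k\alpha_{i,k}\mathbb{E}\|g_i(\theta_{i,k})\|$, which I would control through the non-convex drift and gradient-norm bounds of Lemmas~\ref{lmm_drift-FedAvg-nonconvex} and \ref{lmm_grad-bnd-FedAvg-nonconvex}. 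Aggregating with the weights $p_j$ and unrolling over $t$ (using $\theta_0=\theta'_0$) yields a recursion of the form
\begin{align*}
\mathbb{E}\|\theta_T-\theta'_T\|\le\frac{2}{n}\sum_{t=0}^{T-1}\exp\!\Big(\beta\sum_{l=t+1}^{T-1}\hat\alpha_l\Big)(1+c)^{K-1}\tilde\alpha_{i,t}\big(1+(1+c)^{K-1}\beta\tilde\alpha_{i,t}\big)\big(2LD_i+\mathbb{E}\|\nabla R(\theta_t)\|+\sigma\big).
\end{align*}
The choice $\alpha_{i,k}\le 1/(24\beta K(t+1))$ is precisely what tames the two new factors: it makes the within-round amplification $(1+c)^{K-1}\le e^{1/24}=\mathcal{O}(1)$, and since then $\beta\hat\alpha_l\le 1/(24(l+1))$ it bounds $\exp(\beta\sum_{l=t+1}^{T-1}\hat\alpha_l)\le\exp(\tfrac{1}{24}\log T)=T^{1/24}$, which is the origin of the $T^{1/24}$ prefactor in the theorem.

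Finally I would split the bound into the three advertised pieces. The heterogeneity and variance terms $\tfrac{2}{n}\sum_t T^{1/24}\tilde\alpha_{i,t}(1+\beta\tilde\alpha_{i,t})(2LD_i+\sigma)$ collapse via $\sum_t\tilde\alpha_{i,t}=\mathcal{O}(\sum_t\tfrac{1}{t+1})=\mathcal{O}(\log T)$ into $\mathcal{O}(T^{1/24}\log T\,(D_{max}+\sigma)/n)$ after maximizing over $i$. For the convergence term I would reuse the Cauchy--Schwarz/Jensen device of \eqref{eq_Jensen}--\eqref{eq_bound-exp}: factor out $T^{1/24}$, bound the coefficient by a constant, convert $\sum_t\mathbb{E}\|\nabla R(\theta_t)\|\le T\sqrt{\tfrac{1}{T}\sum_t\mathbb{E}\|\nabla R(\theta_t)\|^2}$, and insert the FedAvg convergence rate \eqref{eq_con-FedAvg} of Theorem~\ref{thm_con-formal} (which is already a non-convex guarantee), with the dissimilarity constant identified as $G_i=2LD_i$ through Proposition~\ref{prop_Gi} so that $G^2=\sum_i p_iG_i^2\propto\tilde D$. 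This produces the $(\Delta_0/Km)^{1/4}T^{5/6}$, $(\Delta_0^2\tilde D)^{1/6}T^{3/4}$ and $\sqrt{\Delta_0}T^{7/12}$ contributions.

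The hard part will be the bookkeeping around the diminishing stepsize in the convergence term: unlike the heterogeneity and variance terms, matching the exponents $5/6,3/4,7/12$ requires tracking carefully how the time-varying schedule $\alpha_{i,k}\sim 1/(t+1)$ interacts both with the amplification factor $\exp(\beta\sum_{l>t}\hat\alpha_l)$ and with the convergence rate of Theorem~\ref{thm_con-formal}, whose clean form is stated in the constant-stepsize regime and whose effective speed is degraded by the diminishing schedule. That is exactly the delicate step where one must avoid losing or double-counting the $T^{1/24}$-type factors, and it is the main obstacle of the proof.
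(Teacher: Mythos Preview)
Your proposal is correct and follows essentially the same route as the paper: replace the non-expansiveness in the convex FedAvg stability argument by the $(1+\beta\alpha_{i,k})$-expansiveness coming from $\beta$-smoothness, invoke Lemmas~\ref{lmm_drift-FedAvg-nonconvex} and~\ref{lmm_grad-bnd-FedAvg-nonconvex} to control the injected gradient term, unroll, and then combine Theorem~\ref{thm_stability-gen} with the Jensen/Cauchy--Schwarz device \eqref{eq_Jensen}--\eqref{eq_bound-exp} and the FedAvg convergence rate \eqref{eq_con-FedAvg}. The paper's proof is in fact less detailed at the very end (it simply invokes ``the same techniques used in Theorem~\ref{thm_convergence}''), so your identification of the diminishing-stepsize bookkeeping as the delicate point is well placed; the paper does not spell out that accounting either.
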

\begin{proof}
    The proof is similar to that of Theorem \ref{thm_gen-FedAvg-convex}. Given time index $t$ and for client $j$ with $j \ne i$, we have
    \begin{eqnarray}
        \mathbb{E}\Vert \theta_{j,k+1} - \theta'_{j,k+1}\Vert &=& \mathbb{E}\Vert \theta_{j,k} - \theta'_{j,k} - \alpha_{j,k}(g_j(\theta_{j,k}) - g_j(\theta'_{j,k})) \Vert   \nonumber   \\
        &\le& (1 + \beta \alpha_{j,k})\mathbb{E}\Vert \theta_{j,k} - \theta'_{j,k} \Vert .   \nonumber
    \end{eqnarray}
    And unrolling it gives
    \begin{eqnarray}    \label{eq_iter-j-FedAvg-nonconvex}
        \mathbb{E}\Vert \theta_{j,K_j} - \theta'_{j,K_j} \Vert &\le& \prod_{k=0}^{K_j-1} (1 + \beta\alpha_{j,k})\mathbb{E}\Vert \theta_t - \theta'_t\Vert  \nonumber \\
        &\le& e^{\beta \tilde{\alpha}_{j,t}} \mathbb{E}\Vert \theta_t - \theta'_t \Vert, ~~ \forall j \ne i,
    \end{eqnarray}
    where we use $1 + x \le e^x, \forall x$.

    For client $i$, there are two cases to consider. In the first case, SGD selects non-perturbed samples in $\mathcal{S}$ and $\mathcal{S}^{(i)}$, which happens with probability $1 - 1/n_i$. Then, we have
    $$
        \Vert \theta_{i,k+1} - \theta'_{i,k+1} \Vert \le (1 + \beta \alpha_{i,k}) \Vert \theta_{i,k} - \theta'_{i,k} \Vert.
    $$

    In the second case, SGD encounters the perturbed sample at time step $k$, which happens with probability $1/n_i$. Then, we have
    \begin{eqnarray}
        \Vert \theta_{i,k+1} - \theta'_{i,k+1} \Vert &=& \Vert \theta_{i,k} - \theta'_{i,k} - \alpha_{i,k}(g_i(\theta_{i,k}) - g'_i(\theta'_{i,k})) \Vert   \nonumber   \\
        &\le& \Vert \theta_{i,k} - \theta'_{i,k} - \alpha_{i,k}(g_i(\theta_{i,k}) - g_i(\theta'_{i,k})) \Vert + \alpha_{i,k}\Vert g_i(\theta'_{i,k}) - g'_i(\theta'_{i,k}) \Vert    \nonumber   \\
        &\le& (1 + \beta \alpha_{i,k})\Vert \theta_{i,k} - \theta'_{i,k} \Vert + \alpha_{i,k}\Vert g_i(\theta'_{i,k}) - g'_i(\theta'_{i,k}) \Vert .  \nonumber
    \end{eqnarray}
    Combining these two cases for client $i$ we have
    \begin{eqnarray}
        \mathbb{E}\Vert \theta_{i,k+1} - \theta'_{i,k+1} \Vert &\le& (1 + \beta \alpha_{i,k})\mathbb{E}\Vert \theta_{i,k} - \theta'_{i,k} \Vert + \frac{\alpha_{i,k}}{n_i}\mathbb{E}\Vert g_i(\theta'_{i,k}) - g'_i(\theta'_{i,k}) \Vert    \nonumber   \\
        &\le& (1 + \beta \alpha_{i,k})\mathbb{E}\Vert \theta_{i,k} - \theta'_{i,k} \Vert + \frac{\alpha_{i,k}}{n_i}\mathbb{E}\Vert g_i(\theta_{i,k}) \Vert,  \nonumber  \\
        &\le& (1 + \beta \alpha_{i,k})\mathbb{E}\Vert \theta_{i,k} - \theta'_{i,k} \Vert + \frac{2\alpha_{i,k}}{n_i}(1 + (1+c)^{K_i - 1}\beta\tilde{\alpha}_{i,t})\big(\sigma   \nonumber   \\
        && + \mathbb{E}\Vert \nabla R(\theta_t) \Vert + 2LD_i\big)  \nonumber   \\
        &\le& (1 + \beta \alpha_{i,k})\mathbb{E}\Vert \theta_{i,k} - \theta'_{i,k} \Vert + \frac{2\alpha_{i,k} \tilde{c}}{n_i}(\mathbb{E}\Vert \nabla R(\theta_t) \Vert + 2LD_i + \sigma)  \nonumber
    \end{eqnarray}
    where we use Lemma \ref{lmm_grad-bnd-FedAvg-nonconvex} and we let $\tilde{c}$ be an upper bound of $1 + (1+c)^{K_i - 1}\beta\tilde{\alpha}_{i,t}$ since $\tilde{\alpha}_{i,t}$ is bounded above.
    Then unrolling it gives
    \begin{eqnarray}    \label{eq_iter-i-FedAvg-nonconvex}
        \mathbb{E}\Vert \theta_{i,K_i} - \theta'_{i,K_i} \Vert &\le& \prod_{k=0}^{K_i-1}(1+\beta\alpha_{i,k})\mathbb{E}\Vert \theta_t - \theta'_t \Vert + \bigg(\frac{2}{n_i}\sum_{k=0}^{K_i-1}\alpha_{i,k} \tilde{c} \prod_{l=k+1}^{K_i-1} (1+\beta\alpha_{i,l})  \nonumber \\
        && \cdot (\mathbb{E}\Vert \nabla R(\theta_t) \Vert + 2LD_i + \sigma) \bigg) \nonumber   \\
        &\le& e^{\beta \tilde{\alpha}_{i,t}}\mathbb{E}\Vert \theta_t - \theta'_t \Vert + \frac{2}{n_i} \tilde{c} \tilde{\alpha}_{i,t} e^{\beta \tilde{\alpha}_{i,t}}(\mathbb{E}\Vert \nabla R(\theta_t) \Vert + 2LD_i + \sigma) .
    \end{eqnarray}
    By \eqref{eq_iter-j-FedAvg-nonconvex} and \eqref{eq_iter-i-FedAvg-nonconvex} we have
    \begin{eqnarray}
        \mathbb{E}\Vert \theta_{t+1} - \theta'_{t+1} \Vert &\le& \sum_{i=1}^m p_i \mathbb{E}\Vert \theta_{i,K_i} - \theta'_{i,K_i} \Vert  \nonumber \\
        &\le& e^{\beta \tilde{\alpha}_{i,t}}\mathbb{E}\Vert \theta_t - \theta'_t \Vert + \frac{2}{n} \tilde{c} \tilde{\alpha}_{i,t} e^{\beta \tilde{\alpha}_{i,t}}(\mathbb{E}\Vert \nabla R(\theta_t) \Vert + 2LD_i + \sigma)  \nonumber
    \end{eqnarray}
    where we also use $p_i = n_i/n$ in the last step. Further, unrolling the above over $t$ and noting $\theta_0 = \theta'_0$, we obtain
    \begin{eqnarray}
        \mathbb{E}\Vert \theta_T - \theta'_T \Vert \le \frac{2\tilde{c}}{n}\sum_{t=0}^{T-1} \exp\left( \beta \sum_{l=t+1}^{T-1} \tilde{\alpha}_{i,t} \right) \tilde{\alpha}_{i,t} e^{\beta \tilde{\alpha}_{i,t}}(\mathbb{E}\Vert \nabla R(\theta_t) \Vert + 2LD_i + \sigma)  \nonumber
    \end{eqnarray}
    When the diminishing stepsizes are chosen in the statement of the theorem, we further combine Theorem \ref{thm_stability-gen} and the same techniques used in Theorem \ref{thm_convergence}, we conclude the proof.
\end{proof}

\subsection{Analysis for SCAFFOLD under non-convex losses}

\begin{lemma}   \label{lmm_drift-SCAFFOD-nonconvex}
    Suppose Assumptions \ref{assump_Lip-continuous}-\ref{assump_Lip-smooth} hold. Running SCAFFOLD with $\alpha_{i,k} \le c/\beta$ for some $c>0$, then for any $i \in [m]$
    $$
        \mathbb{E}\Vert \theta_{i,k} - \theta_t \Vert \le (1+c)^{K_i-1}\tilde{\alpha}_{i,t} (\mathbb{E}\Vert R(\theta_t)\Vert + \sigma), ~~ \forall k=1,\dots,K_i 
    $$
    where $\tilde{\alpha}_{i,t}=\sum_{k=0}^{K_i-1}\alpha_{i,k}$.
\end{lemma}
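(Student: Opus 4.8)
The plan is to mirror the convex drift bound (Lemma \ref{lmm_drift-SCAFFOD-convex}), but to replace the non-expansiveness of the gradient step, which is no longer available without convexity, by the weaker smoothness estimate that survives in the non-convex setting. Starting from the SCAFFOLD local update \eqref{eq_SCAFFOLD-update}, I would write $\theta_{i,k+1} - \theta_t = \theta_{i,k} - \theta_t - \alpha_{i,k}\big(g_i(\theta_{i,k}) - g_i(\theta_t)\big) - \alpha_{i,k} g(\theta_t)$ and peel off the control-variate term $g(\theta_t)$ with the triangle inequality, obtaining $\mathbb{E}\Vert \theta_{i,k+1} - \theta_t\Vert \le \mathbb{E}\Vert \theta_{i,k} - \theta_t - \alpha_{i,k}(g_i(\theta_{i,k}) - g_i(\theta_t))\Vert + \alpha_{i,k}\,\mathbb{E}\Vert g(\theta_t)\Vert$.

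For the first term, since each per-sample loss is $\beta$-smooth (Assumption \ref{assump_Lip-smooth}) the sampled gradient $g_i$ is $\beta$-Lipschitz, so $\Vert g_i(\theta_{i,k}) - g_i(\theta_t)\Vert \le \beta\Vert \theta_{i,k} - \theta_t\Vert$ and the whole term is at most $(1+\beta\alpha_{i,k})\,\mathbb{E}\Vert \theta_{i,k} - \theta_t\Vert$; this is precisely the point where convexity is traded for the growth factor $(1+\beta\alpha_{i,k})$ rather than a contraction. For the second term, I would bound $\mathbb{E}\Vert g(\theta_t)\Vert \le \mathbb{E}\Vert \nabla R(\theta_t)\Vert + \sigma$ exactly as in the convex case, using that $g(\theta_t) = \sum_{j} p_j g_j(\theta_t)$ is an unbiased gradient estimate together with the variance bound of Assumption \ref{assump_bounded-grad-var} and Jensen's inequality. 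These two estimates give the affine recursion $\mathbb{E}\Vert \theta_{i,k+1} - \theta_t\Vert \le (1+\beta\alpha_{i,k})\,\mathbb{E}\Vert \theta_{i,k} - \theta_t\Vert + \alpha_{i,k}(\mathbb{E}\Vert \nabla R(\theta_t)\Vert + \sigma)$.

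Finally I would unroll this recursion from the initialization $\theta_{i,0} = \theta_t$, whose solution is $\mathbb{E}\Vert \theta_{i,k} - \theta_t\Vert \le \sum_{l=0}^{k-1}\alpha_{i,l}(\mathbb{E}\Vert \nabla R(\theta_t)\Vert + \sigma)\prod_{h=l+1}^{k-1}(1+\beta\alpha_{i,h})$ with the empty product equal to $1$. Under the stepsize condition $\alpha_{i,k}\le c/\beta$ each factor obeys $1+\beta\alpha_{i,h}\le 1+c$, so every product involves at most $K_i-1$ terms and is bounded by $(1+c)^{K_i-1}$. Pulling this constant out of the sum and using $\sum_{l=0}^{k-1}\alpha_{i,l}\le \tilde{\alpha}_{i,t}$ yields the claimed bound for every $k=1,\dots,K_i$. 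The only genuine obstacle is bookkeeping rather than conceptual: keeping the compounded factor $(1+c)^{K_i-1}$ clean through the unrolling (it is the unavoidable price of losing the contraction present in the convex proof) and checking that the weighted control-variate aggregation $\sum_j p_j g_j(\theta_t)$ contributes only $\sigma$ in the variance step. Both are routine once the recursion is established, and the argument runs in complete parallel with Lemma \ref{lmm_drift-FedAvg-nonconvex} for FedAvg.
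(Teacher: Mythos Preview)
Your proposal is correct and matches the paper's proof essentially step for step: the same decomposition of the SCAFFOLD update via the triangle inequality, the same replacement of the convex non-expansiveness by the $(1+\beta\alpha_{i,k})$ growth factor from $\beta$-smoothness, the same bound $\mathbb{E}\Vert g(\theta_t)\Vert \le \mathbb{E}\Vert \nabla R(\theta_t)\Vert + \sigma$, and the same unrolling with the product of $(1+\beta\alpha_{i,h})$ factors bounded by $(1+c)^{K_i-1}$. Your write-up of the unrolling is in fact slightly more careful than the paper's, which jumps directly to the final sum over all $K_i$ steps.
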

\begin{proof}
    Considering local update \eqref{eq_SCAFFOLD-update} of SCAFFOLD
    \begin{eqnarray}
        \mathbb{E}\Vert \theta_{i,k+1} - \theta_t \Vert &=& \mathbb{E}\Vert \theta_{i,k} - \alpha_{i,k}(g_i(\theta_{i,k}) - g_i(\theta_t) + g(\theta_t)) - \theta_t \Vert \nonumber   \\
        &\le& \mathbb{E}\Vert \theta_{i,k} - \theta_t - \alpha_{i,k}(g_i(\theta_{i,k}) - g_i(\theta_t)) \Vert + \alpha_{i,k}\mathbb{E}\Vert g(\theta_t) \Vert   \nonumber   \\
        &\le& (1 + \beta \alpha_{i,k})\mathbb{E}\Vert \theta_{i,k} - \theta_t \Vert + \alpha_{i,k}\mathbb{E}\Vert g(\theta_t) \Vert \nonumber   \\
        &\le& (1 + \beta \alpha_{i,k})\mathbb{E}\Vert \theta_{i,k} - \theta_t \Vert + \alpha_{i,k}(\mathbb{E}\Vert R(\theta_t) \Vert + \sigma ) \nonumber
    \end{eqnarray}
    where we use Assumptions \ref{assump_bounded-grad-var} and \ref{assump_Lip-smooth}. Therefore, for any $k=1,\dots,K_i-1$,
    \begin{eqnarray}
        \mathbb{E}\Vert \theta_{i,k} - \theta_t \Vert &\le& \sum_{k=0}^{K_i-1} \alpha_{i,k} (\mathbb{E}\Vert R(\theta_t) \Vert + \sigma ) (1 + c)^{K_i - 1}  \nonumber  \\
        &=& \tilde{\alpha}_{i,t}(1+c)^{K_i-1}(\mathbb{E}\Vert R(\theta_t) \Vert + \sigma ) \nonumber
    \end{eqnarray}
    which completes the proof.
\end{proof}

\begin{lemma}   \label{lmm_grad-bnd-SCAFFOLD-nonconvex}
    Given Assumptions \ref{assump_Lip-continuous}-\ref{assump_Lip-smooth} and considering SCAFFOLD (Algorithm \ref{alg_SCAFFOLD}), with $\alpha_{i,k}\le c/\beta$ for some $c>0$ we have the following inequalities
    \begin{eqnarray}
        \mathbb{E}\Vert g_i(\theta_{i,k}) \Vert &\le& (1+\beta \tilde{\alpha}_{i,t}(1+c)^{K_i-1})(\mathbb{E}\Vert \nabla R(\theta_t) \Vert + \sigma) + 2L D_i,   \nonumber  \\
        \mathbb{E}\Vert g_i(\theta_t) \Vert &\le& 2LD_i + \mathbb{E}\Vert \nabla R(\theta_t) \Vert + \sigma   \nonumber
    \end{eqnarray}
    for any $i \in [m]$, $k=0,\dots,K_i-1$ and $t=0,1,\dots$.
\end{lemma}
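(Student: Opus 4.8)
The plan is to mirror the convex argument in Lemma \ref{lmm_grad-bnd-SCAFFOLD-convex}, replacing the convex drift estimate by its non-convex counterpart (Lemma \ref{lmm_drift-SCAFFOD-nonconvex}). Both inequalities follow from a short chain of triangle inequalities that isolates three sources of error: the sampling noise (controlled by Assumption \ref{assump_bounded-grad-var}), the local drift $\Vert\theta_{i,k}-\theta_t\Vert$ (controlled by smoothness together with the drift lemma), and the heterogeneity gap $\Vert\nabla R_i(\theta_t)-\nabla R(\theta_t)\Vert$ (controlled by Lemma \ref{lmm_TV-bound}).

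For the second (simpler) inequality, I would bound $\mathbb{E}\Vert g_i(\theta_t)\Vert$ by first peeling off the sampling noise via $\mathbb{E}\Vert g_i(\theta_t)-\nabla R_i(\theta_t)\Vert \le \sigma$ from Assumption \ref{assump_bounded-grad-var}, then splitting $\nabla R_i(\theta_t) = (\nabla R_i(\theta_t)-\nabla R(\theta_t)) + \nabla R(\theta_t)$ and invoking Lemma \ref{lmm_TV-bound} to convert the heterogeneity difference into $2LD_i$. This yields $\mathbb{E}\Vert g_i(\theta_t)\Vert \le 2LD_i + \mathbb{E}\Vert\nabla R(\theta_t)\Vert + \sigma$ directly, with no drift term, since the gradient is evaluated at the server iterate $\theta_t$ itself.

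For the first inequality, the same peeling applies but now at the local iterate $\theta_{i,k}$, so I would additionally insert $\nabla R_i(\theta_{i,k}) = (\nabla R_i(\theta_{i,k})-\nabla R_i(\theta_t)) + \nabla R_i(\theta_t)$ and bound the first difference by $\beta\,\mathbb{E}\Vert\theta_{i,k}-\theta_t\Vert$ using Assumption \ref{assump_Lip-smooth}. Substituting the non-convex drift estimate $\mathbb{E}\Vert\theta_{i,k}-\theta_t\Vert \le (1+c)^{K_i-1}\tilde{\alpha}_{i,t}(\mathbb{E}\Vert\nabla R(\theta_t)\Vert+\sigma)$ from Lemma \ref{lmm_drift-SCAFFOD-nonconvex} and grouping the common $(\mathbb{E}\Vert\nabla R(\theta_t)\Vert+\sigma)$ factors produces the stated coefficient $1+\beta\tilde{\alpha}_{i,t}(1+c)^{K_i-1}$, while the static heterogeneity gap contributes the separate additive $2LD_i$.

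The argument is essentially mechanical, so the only real subtlety -- and the step I would double-check -- is the bookkeeping of where $D_i$ enters. In SCAFFOLD the variance-reduction correction removes heterogeneity from the local-drift bound, which is why Lemma \ref{lmm_drift-SCAFFOD-nonconvex} carries no $D_i$; yet the gradient-norm bound at $\theta_{i,k}$ still inherits a single $2LD_i$ through the fixed gap $\nabla R_i(\theta_t)-\nabla R(\theta_t)$. Keeping these two roles of heterogeneity distinct -- absent in the drift, present additively in the gradient magnitude -- is the one place where a careless transcription of the convex template would introduce a spurious extra $D_i$ factor multiplying the drift coefficient.
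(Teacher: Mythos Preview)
Your proposal is correct and matches the paper's proof essentially line for line: both inequalities are obtained by the same chain of triangle inequalities (sampling noise via Assumption~\ref{assump_bounded-grad-var}, drift via smoothness and Lemma~\ref{lmm_drift-SCAFFOD-nonconvex}, heterogeneity via Lemma~\ref{lmm_TV-bound}), and your observation that $D_i$ is absent from the SCAFFOLD drift bound but enters only additively through $\nabla R_i(\theta_t)-\nabla R(\theta_t)$ is exactly the bookkeeping the paper uses.
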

\begin{proof}
    Note that based on Assumption \ref{assump_bounded-grad-var},
    \begin{eqnarray}
        \mathbb{E}\Vert g_i(\theta_{i,k}) \Vert &\le& \mathbb{E}\Vert \nabla R_i(\theta_{i,k}) \Vert + \sigma   \nonumber   \\
        &\le& \mathbb{E}\Vert \nabla R_i(\theta_{i,k}) - \nabla R_i(\theta_t) \Vert + \mathbb{E}\Vert \nabla R_i(\theta_t) \Vert + \sigma   \nonumber   \\
        &\le& \beta \mathbb{E}\Vert \theta_{i,k} - \theta_t \Vert + \mathbb{E}\Vert \nabla R_i(\theta_t) - \nabla R(\theta_t) \Vert + \mathbb{E}\Vert \nabla R(\theta_t) \Vert + \sigma \nonumber   \\
        &\le& (1+\beta \tilde{\alpha}_{i,t}(1+c)^{K_i-1})(\mathbb{E}\Vert \nabla R(\theta_t) \Vert + \sigma) + 2L D_i,   \nonumber
    \end{eqnarray}
    where we use Lemmas \ref{lmm_TV-bound} and \ref{lmm_drift-SCAFFOD-nonconvex}.

    Similarly, using same techniques we have
    \begin{eqnarray}
        \mathbb{E}\Vert g_i(\theta_t) \Vert &\le& \mathbb{E}\Vert \nabla R_i(\theta_t) \Vert + \sigma   \nonumber   \\
        &\le& \mathbb{E}\Vert \nabla R_i(\theta_t) - \nabla R(\theta_t) \Vert + \mathbb{E}\Vert \nabla R(\theta_t) \Vert + \sigma   \nonumber   \\
        &\le& 2LD_i + \mathbb{E}\Vert \nabla R(\theta_t) \Vert + \sigma .  \nonumber
    \end{eqnarray}
\end{proof}

\begin{theorem}[SCAFFOLD part of Theorem \ref{thm_gen-nonconvex}]
    Suppose Assumptions \ref{assump_Lip-continuous}-\ref{assump_Lip-smooth} hold and consider SCAFFOLD (Algorithm \ref{alg_SCAFFOLD}). Let $K_i = K$ and $\alpha_{i,k} \le \frac{1}{24\beta K(t+1)}$, $ \forall i \in [m]$
    \begin{align*}    
        \epsilon_{gen} \le \mathcal{O}\Big( \frac{T^{\frac{1}{8}} \log{T}}{n}  D_{max} \Big) + \mathcal{O} \Big( \big( \frac{\Delta_0}{K m} \Big)^{\frac{1}{4}} \frac{T^{\frac{7}{8}}}{n} + \sqrt{\Delta_0} \frac{T^{\frac{5}{8}}}{n} \big) + \mathcal{O} \Big( \frac{T^{\frac{1}{8}} (\log{T} + 1)}{n} \sigma \Big),
    \end{align*}
    where $\Delta_0 = \mathbb{E}[R(\theta_0) - R(\theta^*)]$.
\end{theorem}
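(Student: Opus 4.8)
The plan is to mirror the convex SCAFFOLD argument of Theorem \ref{thm_gen-SCAFFOLD-convex} while replacing every use of the non-expansiveness of the gradient step (Lemma \ref{lmm_nonexp}) by the $\beta$-smooth expansion inequality $\Vert x-y-\alpha(\nabla f(x)-\nabla f(y))\Vert \le (1+\beta\alpha)\Vert x-y\Vert$, which needs only Assumption \ref{assump_Lip-smooth} and hence holds without convexity. Concretely, I would track $\mathbb{E}\Vert\theta_T-\theta'_T\Vert$ for neighboring datasets $\mathcal{S}$ and $\mathcal{S}^{(i)}$ by analyzing the local recursion of the SCAFFOLD update \eqref{eq_SCAFFOLD-update}. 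As in the convex case I split into client $j\ne i$ and the perturbed client $i$, and within each I distinguish the event (probability $1/n_i$) that the perturbed sample is drawn when forming the global correction $g(\cdot)$, and for client $i$ the local gradient as well. The only change from the convex proof is that the homogeneous term $\Vert\theta_{j,k}-\theta'_{j,k}\Vert$ now carries the factor $(1+\beta\alpha_{j,k})$ at each local step, while the sampling-induced gradient gaps are controlled exactly as before via Lemma \ref{lmm_grad-bnd-SCAFFOLD-nonconvex} and the fact that $g_i,g'_i$ are drawn from the same distribution.

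Second, I unroll the local recursion over $k=0,\dots,K-1$. The product of the per-step expansion factors is bounded by $\prod_k(1+\beta\alpha_{i,k})\le e^{\beta\tilde\alpha_{i,t}}$ through $1+x\le e^x$, and the accumulated inhomogeneous terms pick up the same exponential prefactor together with the drift constants from Lemma \ref{lmm_drift-SCAFFOD-nonconvex}. Aggregating across clients through $\theta_{t+1}=\sum_j p_j\theta_{j,K}$ and using $p_i=n_i/n$ then yields a one-round recursion of the form $\mathbb{E}\Vert\theta_{t+1}-\theta'_{t+1}\Vert \le e^{c\beta\hat\alpha_t}\mathbb{E}\Vert\theta_t-\theta'_t\Vert + \frac{2}{n}(2LD_i\gamma^1_t + \gamma^2_t(\mathbb{E}\Vert\nabla R(\theta_t)\Vert+\sigma))$, with $\gamma^1_t,\gamma^2_t$ as in Theorem \ref{thm_gen-SCAFFOLD-convex}; the within-round expansion enlarges the multiplicative factor relative to the convex recursion, so iterating over $t$ from the common initialization $\theta_0=\theta'_0$ produces $\prod_{l=t+1}^{T-1}e^{c\beta\hat\alpha_l}=\exp(c\beta\sum_{l=t+1}^{T-1}\hat\alpha_l)$.

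Third, I specialize the diminishing stepsize $\alpha_{i,k}\le 1/(24\beta K(t+1))$, so that $\tilde\alpha_{i,t},\hat\alpha_t=\mathcal{O}(1/(t+1))$ and $\sum_{l=t+1}^{T-1}\hat\alpha_l\le\frac{1}{24\beta}\log T$. This collapses the exponential prefactor into a polynomial $T^{c/24}$ with $c$ one unit larger than in the convex case (reflecting the extra within-round expansion), giving $T^{1/8}$ in place of the convex $T^{1/12}$. Converting to a generalization bound via Assumption \ref{assump_Lip-continuous} and Theorem \ref{thm_stability-gen} gives $\epsilon_{gen}\le L\,\mathbb{E}\Vert\theta_T-\theta'_T\Vert$; the heterogeneity term becomes $\mathcal{O}(T^{1/8}\log T\,D_{max}/n)$ after summing $\sum_t T^{1/8}\gamma^1_t=\mathcal{O}(T^{1/8}\log T)$ and using $D_i\le D_{max}$, and the $\sigma$ term is treated identically. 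For the convergence-dependent term I reuse the SCAFFOLD rate \eqref{eq_con-SCAFFOLD} with the Jensen step \eqref{eq_Jensen} and the exponential-to-polynomial reduction \eqref{eq_bound-exp}: Jensen turns $\frac1T\sum_t\mathbb{E}\Vert\nabla R(\theta_t)\Vert^2=\mathcal{O}(\sqrt{\Delta_0}/\sqrt{TKm}+\Delta_0/T)$ into $\sum_t\mathbb{E}\Vert\nabla R(\theta_t)\Vert=\mathcal{O}((\Delta_0/Km)^{1/4}T^{3/4}+\sqrt{\Delta_0}T^{1/2})$, and multiplying by the $T^{1/8}$ prefactor yields the advertised $(\Delta_0/Km)^{1/4}T^{7/8}/n+\sqrt{\Delta_0}T^{5/8}/n$.

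The main obstacle I anticipate is the careful bookkeeping of the compounding expansion factors: the within-round factor $e^{\beta\tilde\alpha_{i,t}}$ and the cross-round correction terms $2\beta\alpha_{j,k}\Vert\theta_t-\theta'_t\Vert$ both feed into the one-round multiplier, and I must verify that their combined contribution gives exactly $\exp(c\beta\sum_l\hat\alpha_l)$ with the constant $c$ producing $T^{1/8}$, rather than a faster-growing power that would break the $\mathcal{O}(1/n)$ rate. Getting the constant right also requires tracking how $\gamma^2_t$, which contains the extra $\beta\tilde\alpha_{i,t}^2$ term, interacts with the diminishing stepsize, so that $\sum_t T^{1/8}\gamma^2_t$ contributes only the $(\log T+1)$ factor on the $\sigma$ term and nothing worse on the convergence term.
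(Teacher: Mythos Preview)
Your proposal is correct and follows essentially the same route as the paper's proof: replace the non-expansiveness step by the $(1+\beta\alpha)$-expansion from smoothness, run the same two-case analysis for $j\ne i$ and for client $i$, unroll locally via $\prod_k(1+\beta\alpha_{j,k})\le e^{\beta\tilde\alpha_{j,t}}$, aggregate to a one-round recursion whose homogeneous multiplier is now $(1+2\beta\tilde\alpha_{i,t})e^{\beta\tilde\alpha_{i,t}}\le e^{3\beta\tilde\alpha_{i,t}}$, and then iterate with the diminishing stepsize so that $\exp(3\beta\sum_l\hat\alpha_l)\le T^{1/8}$ replaces the convex $T^{1/12}$. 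The paper's proof uses exactly Lemmas \ref{lmm_drift-SCAFFOD-nonconvex} and \ref{lmm_grad-bnd-SCAFFOLD-nonconvex} where you invoke them, and closes in the same way with Theorem \ref{thm_stability-gen}, \eqref{eq_con-SCAFFOLD}, \eqref{eq_Jensen}, and \eqref{eq_bound-exp}; the bookkeeping concern you flag is precisely the point where the extra factor of $\beta\tilde\alpha_{i,t}$ per round (from the within-round expansion combined with the $2\beta\alpha_{j,k}\Vert\theta_t-\theta'_t\Vert$ correction terms) lifts the exponent from $2$ to $3$ in $\exp(c\beta\sum_l\hat\alpha_l)$, yielding $T^{1/8}$.
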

\begin{proof}
    Similar to the proof of Theorem \ref{thm_gen-SCAFFOLD-convex}, considering client $j$ with $j\ne i$, there are two cases. In the first case, SCAFFOLD does not select the perturbed sample from client $i$'s dataset at local step $k$. Then, with probability equal to $1 - 1/n_i$, 
    \begin{eqnarray}
        \Vert \theta_{j,k+1} - \theta'_{j,k+1} \Vert &\le& \Vert \theta_{j,k} - \theta'_{j,k} - \alpha_{j,k}(g_j(\theta_{j,k}) - g_j(\theta'_{j,k}))\Vert + \alpha_{j,k}\Vert g_j(\theta_t) - g_j(\theta'_t) \Vert  \nonumber   \\
        && + \alpha_{j,k}\Vert g(\theta_t) - g(\theta'_t)\Vert  \nonumber   \\
        &\le& (1+\beta\alpha_{j,k})\Vert \theta_{j,k} - \theta'_{j,k} \Vert + 2\alpha_{j,k}\beta \Vert \theta_t - \theta'_t \Vert    \nonumber   
    \end{eqnarray}
    where the second inequality follows Assumption \ref{assump_Lip-smooth}.

    In the second case, there is with probability $1/n_i$ that the perturbed sample is selected during the local update of step $k$. Then,
    \begin{eqnarray}    
        \Vert \theta_{j,k+1} - \theta'_{j,k+1} \Vert &\le& \Vert \theta_{j,k} - \theta'_{j,k} - \alpha_{j,k}(g_j(\theta_{j,k}) - g_j(\theta'_{j,k}))\Vert + \alpha_{j,k}\Vert g_j(\theta_t) - g_j(\theta'_t) \Vert  \nonumber   \\
        && + \alpha_{j,k}\Vert g(\theta_t) - g'(\theta'_t)\Vert  \nonumber   \\
        &\le& \Vert \theta_{j,k} - \theta'_{j,k} - \alpha_{j,k}(g_j(\theta_{j,k}) - g_j(\theta'_{j,k}))\Vert + \alpha_{j,k}\Vert g_j(\theta_t) - g_j(\theta'_t) \Vert  \nonumber   \\
        && + \alpha_{j,k}\Vert g(\theta_t) - g(\theta'_t)\Vert + \alpha_{j,k}\Vert g(\theta'_t) - g'(\theta'_t) \Vert  \nonumber   \\
        &\le& (1+\beta\alpha_{i,k})\Vert \theta_{j,k} - \theta'_{j,k} \Vert + 2\beta \alpha_{j,k}\Vert \theta_t - \theta'_t \Vert + \alpha_{j,k}p_i\Vert g_i(\theta'_t) - g'_i(\theta'_t) \Vert .   \nonumber
    \end{eqnarray}
    We again use Assumption \ref{assump_Lip-smooth} in the last step. Combining two cases, we have for client $j$ with $j\ne i$
    \begin{eqnarray}
        \mathbb{E}\Vert \theta_{j,k+1} - \theta'_{j,k+1} \Vert 
        &\le& (1+\beta\alpha_{j,k})\mathbb{E}\Vert \theta_{j,k} - \theta'_{j,k} \Vert + 2\beta\alpha_{j,k}\mathbb{E}\Vert \theta_t - \theta'_t \Vert + \frac{2\alpha_{j,k}}{n}\mathbb{E}\Vert g_i(\theta_t) \Vert.   \nonumber  
    \end{eqnarray}
    Unrolling it over $k$ we obtain
    \begin{eqnarray}    \label{eq_iter-j-SCAFFOLD-nonconvex}
        \mathbb{E}\Vert \theta_{j,K_j} - \theta'_{j,K_j} \Vert &\le& \prod_{k=0}^{K_j-1}(1+\beta\alpha_{j,k})\mathbb{E}\Vert \theta_t - \theta'_t \Vert + \bigg(\sum_{k=0}^{K_j-1} \big(\prod_{l=k+1}^{K_j-1}(1 + \beta\alpha_{j,l}) \big) \nonumber  \\
        && \cdot (2\beta\alpha_{j,k}\mathbb{E}\Vert \theta_t - \theta'_t \Vert + \frac{2\alpha_{j,k}}{n}\mathbb{E}\Vert g_i(\theta_t) \Vert) \bigg) \nonumber   \\
        &\le& (1+\beta\tilde{\alpha}_{j,t})e^{\beta\tilde{\alpha}_{j,t}} \mathbb{E}\Vert \theta_t - \theta'_t \Vert + \frac{2\tilde{\alpha}_{j,t}}{n}e^{\beta\tilde{\alpha}_{j,t}}\mathbb{E}\Vert g_i(\theta_t) \Vert
    \end{eqnarray}
    where we use the fact $1 + x \le e^x$ and Lemma \ref{lmm_grad-bnd-SCAFFOLD-nonconvex} in the last step.

    For client $i$, there are two cases as well. In the first case, the perturbed sample is not selected at step $k$, which happens with probability $1 - 1/n_i$. Then,
    \begin{eqnarray}
        \Vert \theta_{i,k+1} - \theta'_{i,k+1} \Vert &\le& \Vert \theta_{i,k} - \theta'_{i,k} - \alpha_{i,k}(g_i(\theta_{i,k}) - g_i(\theta'_{i,k}))\Vert + \alpha_{i,k}\Vert g_i(\theta_t) - g_i(\theta'_t) \Vert  \nonumber   \\
        && + \alpha_{i,k}\Vert g(\theta_t) - g(\theta'_t)\Vert  \nonumber   \\
        &\le& (1 + \beta \alpha_{i,k})\Vert \theta_{i,k} - \theta'_{i,k} \Vert + 2\alpha_{i,k}\beta \Vert \theta_t - \theta'_t \Vert  .  \nonumber  
    \end{eqnarray}

    In the second case, the perturbed sample is selected at local step $k$ with probability $1/n_i$. Then,
    \begin{eqnarray}
        \Vert \theta_{i,k+1} - \theta'_{i,k+1} \Vert &\le& \Vert \theta_{i,k} - \theta'_{i,k} - \alpha_{i,k}(g_i(\theta_{i,k}) - g'_i(\theta'_{i,k}))\Vert + \alpha_{i,k}\Vert g_i(\theta_t) - g'_i(\theta'_t) \Vert  \nonumber   \\
        && + \alpha_{i,k}\Vert g(\theta_t) - g'(\theta'_t)\Vert  \nonumber   \\
        &\le& (1+\beta\alpha_{i,k})\Vert \theta_{i,k} - \theta'_{i,k} \Vert + \alpha_{i,k}\Vert g_i(\theta_t) - g_i(\theta'_t) \Vert + \alpha_{i,k}\Vert g(\theta_t) - g(\theta'_t)\Vert  \nonumber   \\
        && + \alpha_{i,k}\big( \Vert g_i(\theta'_{i,k}) - g'_i(\theta'_{i,k}) \Vert + (1+p_i)\Vert g_i(\theta'_t) - g'_i(\theta'_t) \Vert \big)  \nonumber \\
        &\le& (1 + \beta \alpha_{i,k})\Vert \theta_{i,k} - \theta'_{i,k} \Vert + 2\beta\alpha_{i,k}\Vert \theta_t - \theta'_t \Vert + \alpha_{i,k} \Vert g_i(\theta'_{i,k}) - g'_i(\theta'_{i,k}) \Vert \nonumber   \\
        && + \alpha_{i,k}(1+p_i)\Vert g_i(\theta'_t) - g'_i(\theta'_t) \Vert .   \nonumber   
    \end{eqnarray}
    Combining these two case renders
    \begin{eqnarray}
        \mathbb{E}\Vert \theta_{i,k+1} - \theta'_{i,k+1} \Vert &\le& (1 + \beta \alpha_{i,k})\mathbb{E}\Vert \theta_{i,k} - \theta'_{i,k} \Vert + 2\beta\alpha_{i,k}\mathbb{E}\Vert \theta_t - \theta'_t \Vert + \frac{2}{\alpha_{i,k}}{n_i}\mathbb{E}\Vert g_i(\theta_{i,k}) \Vert \nonumber \\
        && + \frac{2\alpha_{i,k}(1+p_i)}{n_i}\mathbb{E}\Vert g_i(\theta_t)\Vert \nonumber
    \end{eqnarray}
    and unrolling it and using Lemma \ref{lmm_grad-bnd-SCAFFOLD-nonconvex} gives
    \begin{eqnarray}    \label{eq_iter-i-SCAFFOLD-nonconvex}
        \mathbb{E}\Vert \theta_{i,K_i} - \theta'_{i,K_i} \Vert &\le& (1+2\beta\tilde{\alpha}_{i,t})e^{\beta \tilde{\alpha}_{i,t}} \mathbb{E}\Vert \theta_t - \theta'_t \Vert + \frac{2(1+p_i)}{n_i}\tilde{\alpha}_{i,t} e^{\beta \tilde{\alpha}_{i,t}} \mathbb{E}\Vert g_i(\theta_t) \Vert  \nonumber \\
        && + \frac{2\tilde{\alpha}_{i,t}}{n_i}e^{\beta \tilde{\alpha}_{i,t}} \big(\tilde{c}( \mathbb{E}\Vert \nabla R(\theta_t) \Vert + \sigma) + 2LD_i\big)
    \end{eqnarray}
    where $\tilde{c}$ is an upper bound of $1 + \beta\tilde{\alpha}_{i,t}(1 + c)^{K_i + 1}$, which is a constant and we use Lemma \ref{lmm_grad-bnd-SCAFFOLD-nonconvex}.

    Combining \eqref{eq_iter-j-SCAFFOLD-nonconvex} and \eqref{eq_iter-i-SCAFFOLD-nonconvex} we have
    \begin{eqnarray}    \label{eq_theta-iter-SCAFFOLD-nonconvex}
        \mathbb{E}\Vert \theta_{t+1} - \theta'_{t+1} \Vert &\le& \sum_{i=1}^m p_i (1+2\beta\tilde{\alpha}_{i,t}) e^{\beta \tilde{\alpha}_{i,t}} \mathbb{E}\Vert \theta_t - \theta'_t \Vert + \frac{2}{n}\sum_{i=1}^m p_i \beta\tilde{\alpha}_{i,t} e^{\beta \tilde{\alpha}_{i,t}}\mathcal{E}_t  \nonumber \\
        &&+ \frac{2}{n} \tilde{\alpha}_{i,t}e^{\beta \tilde{\alpha}_{i,t}} \mathcal{E}_t + \frac{2}{n}\tilde{\alpha}_{i,t}e^{\beta \tilde{\alpha}_{i,t}}\big(\tilde{c}(\mathbb{E}\Vert \nabla R(\theta_t) \Vert + \sigma) + 2LD_i\big) .
    \end{eqnarray}
    Finally, under the choice of stepsize stated in the theorem, unrolling \eqref{eq_theta-iter-SCAFFOLD-nonconvex} over $t$ and further using Theorem \ref{thm_stability-gen} together with the same techniques in the proof of \ref{thm_convergence}, we complete the proof.
\end{proof}

\subsection{Analysis for FedProx under non-convex losses}

\begin{lemma}   \label{lmm_drift-FedProx-nonconvex}
    Suppose Assumptions \ref{assump_Lip-continuous},\ref{assump_bounded-grad-var} hold and assume that $\nabla^2_{\theta} l(\theta;z) \succ -\mu I$ with $\mu > 0$. Considering FedProx with local update \eqref{eq_FedProx-update}, then for any $\eta_i \le \frac{1}{\mu}$, we have for any $i \in [m]$
    $$
        \mathbb{E}\Vert \theta^i_{t+1} - \theta_t \Vert \le \frac{\eta_i}{1 - \eta_i \mu} (\mathbb{E}\Vert \nabla R(\theta_t) \Vert + 2LD_i + \sigma), ~~ \forall t=0,1,\dots.
    $$
\end{lemma}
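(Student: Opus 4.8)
The plan is to mirror the convex FedProx drift bound (Lemma \ref{lmm_drift-FedProx-convex}), with the sole change being the strong-convexity modulus of the proximal subproblem. The only place convexity of the loss entered in the convex proof was in asserting that the regularized objective $\phi_i(\theta) := \eta_i \hat{R}_{\mathcal{S}_i}(\theta) + \frac{1}{2}\Vert \theta - \theta_t \Vert^2$ is $1$-strongly convex; here I must recompute this modulus under the weaker curvature hypothesis $\nabla^2_\theta l(\theta;z) \succ -\mu I$. First I would write the first-order optimality condition for the update \eqref{eq_FedProx-update},
$$\eta_i \nabla \hat{R}_{\mathcal{S}_i}(\theta^i_{t+1}) + \theta^i_{t+1} - \theta_t = 0,$$
that is, $\nabla \phi_i(\theta^i_{t+1}) = 0$, so $\theta^i_{t+1}$ is the unique minimizer of $\phi_i$.

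The key step is to establish that $\phi_i$ is $(1-\eta_i\mu)$-strongly convex whenever $\eta_i < 1/\mu$. Since $\hat{R}_{\mathcal{S}_i}$ is an average of the per-sample losses $l(\cdot;z_{i,j})$, the assumption $\nabla^2_\theta l(\theta;z) \succ -\mu I$ gives $\nabla^2 \hat{R}_{\mathcal{S}_i}(\theta) \succeq -\mu I$, hence
$$\nabla^2 \phi_i(\theta) = \eta_i \nabla^2 \hat{R}_{\mathcal{S}_i}(\theta) + I \succeq (1 - \eta_i\mu) I,$$
which is positive definite exactly on the range $\eta_i < 1/\mu$. With this modulus in hand I would invoke the standard consequence of $m$-strong convexity that the minimizer obeys $\Vert \theta^i_{t+1} - \theta_t \Vert \le \frac{1}{m}\Vert \nabla\phi_i(\theta_t)\Vert$: from $m\Vert \theta^i_{t+1} - \theta_t \Vert^2 \le \langle \nabla\phi_i(\theta_t) - \nabla\phi_i(\theta^i_{t+1}), \theta_t - \theta^i_{t+1}\rangle = \langle \nabla\phi_i(\theta_t), \theta_t - \theta^i_{t+1}\rangle$ together with Cauchy--Schwarz. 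Since $\nabla\phi_i(\theta_t) = \eta_i \nabla\hat{R}_{\mathcal{S}_i}(\theta_t)$, this yields
$$\Vert \theta^i_{t+1} - \theta_t \Vert \le \frac{\eta_i}{1 - \eta_i\mu}\Vert \nabla\hat{R}_{\mathcal{S}_i}(\theta_t)\Vert.$$

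Finally I would take expectations and reuse the heterogeneity/variance decomposition from the convex proof verbatim: by Assumption \ref{assump_bounded-grad-var}, $\mathbb{E}\Vert \nabla\hat{R}_{\mathcal{S}_i}(\theta_t) - \nabla R_i(\theta_t)\Vert \le \sigma$, and by Lemma \ref{lmm_TV-bound}, $\Vert \nabla R_i(\theta_t) - \nabla R(\theta_t)\Vert \le 2LD_i$, so that $\mathbb{E}\Vert \nabla\hat{R}_{\mathcal{S}_i}(\theta_t)\Vert \le \mathbb{E}\Vert \nabla R(\theta_t)\Vert + 2LD_i + \sigma$; substituting gives the claim.

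The main obstacle is the curvature bookkeeping in the middle step: correctly extracting the $(1-\eta_i\mu)$ modulus from the weak-convexity hypothesis and pinning down the range $\eta_i < 1/\mu$ on which the subproblem stays strongly convex, so that the minimizer is well-defined and the factor $\frac{\eta_i}{1-\eta_i\mu}$ is finite. I would also remark that the bound degenerates as $\eta_i \uparrow 1/\mu$, precisely where the regularized objective loses strong convexity, which explains the inflated constant relative to the convex case (where it was simply $\eta_i$). Everything downstream of the gradient bound is identical to the convex argument.
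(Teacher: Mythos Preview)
Your proposal is correct and follows essentially the same approach as the paper's own proof: both write the first-order optimality condition for the proximal subproblem, observe that $\eta_i\hat{R}_{\mathcal{S}_i}(\theta)+\tfrac{1}{2}\Vert\theta-\theta_t\Vert^2$ is $(1-\eta_i\mu)$-strongly convex under the curvature hypothesis, deduce $\Vert\theta^i_{t+1}-\theta_t\Vert\le\frac{\eta_i}{1-\eta_i\mu}\Vert\nabla\hat{R}_{\mathcal{S}_i}(\theta_t)\Vert$, and then bound the gradient in expectation via Assumption~\ref{assump_bounded-grad-var} and Lemma~\ref{lmm_TV-bound}. If anything, your write-up is slightly more explicit about the Hessian computation and the strong-convexity-to-distance step than the paper's version.
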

\begin{proof}
    Recalling the local update \eqref{eq_FedProx-update} of FedProx and according to the first-order optimality condition, we have
    $$
        \eta_i \nabla \hat{R}_{\mathcal{S}_i}(\theta^i_{t+1}) + \theta^i_{t+1} - \theta_t = 0.
    $$
    Moreover, since the function $\eta_i \hat{R}_{\mathcal{S}_i}(\theta) + \frac{1}{2}\Vert \theta - \theta_t \Vert$ is $1 - \eta_i \mu$-strongly-convex, we have
    $$
        \Vert \theta^i_{t+1} - \theta_t \Vert \le  \frac{\eta_i}{1 - \eta_i \mu} \Vert \nabla \hat{R}_{\mathcal{S}_i}(\theta_t) \Vert
    $$
    by combining the first-order optimality condition. Moreover, note that
    \begin{eqnarray}
        \mathbb{E}\Vert \nabla \hat{R}_{\mathcal{S}_i}(\theta_t) \Vert &\le& \mathbb{E}\Vert \nabla R_i(\theta_t) \Vert + \sigma    \nonumber   \\
        &\le& \mathbb{E}\Vert \nabla R(\theta_t) \Vert + \mathbb{E}\Vert \nabla R_i(\theta_t) - \nabla R(\theta_t) \Vert + \sigma   \nonumber   \\
        &\le& \mathbb{E}\Vert \nabla R(\theta_t) \Vert + 2LD_i + \sigma ,   \nonumber
    \end{eqnarray}
    where we use Lemma \ref{lmm_TV-bound} and note 
    $$
        \mathbb{E}\Vert \nabla \hat{R}_{\mathcal{S}_i}(\theta_t) - \nabla R_i(\theta_t) \Vert \le \frac{1}{n_i}\sum_{j=1}^{n_i} \mathbb{E}\Vert \nabla l(\theta_t;z_{i,j}) - \nabla R_i(\theta_t)\Vert \le \sigma.
    $$
    Thus, we have
    $$
        \mathbb{E}\Vert \theta^i_{t+1} - \theta_t \Vert \le \frac{\eta_i}{1-\eta_i \mu} (\mathbb{E}\Vert \nabla R(\theta_t) \Vert + 2LD_i + \sigma),
    $$
    which completes the proof.
\end{proof}

\begin{lemma}   \label{lmm_grad-bnd-FedProx-nonconvex}
    Suppose the assumptions stated in Lemma \ref{lmm_drift-FedProx-nonconvex} hold and consider FedProx with local update \eqref{eq_FedProx-update}. Then, for any $i \in [m]$ and $j \in [n_i]$, we have
    $$
        \mathbb{E}\Vert \nabla l(\theta^i_{t+1}; z_{i,j}) \Vert \le (1 + \frac{\beta \eta_i}{1 - \eta_i \mu}) (2LD_i + \mathbb{E}\Vert \nabla R(\theta_t) \Vert + \sigma), ~~ \forall t=0,1,\dots.
    $$
\end{lemma}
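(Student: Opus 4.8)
The plan is to reproduce, almost line for line, the argument used for the convex gradient-bound lemma that precedes Theorem~\ref{thm_gen-FedProx-convex}, replacing only the proximal drift estimate with its non-convex analogue. First I would peel off the stochastic error by the triangle inequality, writing $\mathbb{E}\Vert \nabla l(\theta^i_{t+1}; z_{i,j}) \Vert \le \mathbb{E}\Vert \nabla l(\theta^i_{t+1}; z_{i,j}) - \nabla R_i(\theta^i_{t+1}) \Vert + \mathbb{E}\Vert \nabla R_i(\theta^i_{t+1}) \Vert$ and bounding the first summand by $\sigma$ via Assumption~\ref{assump_bounded-grad-var}. This isolates the deterministic local-gradient term $\mathbb{E}\Vert \nabla R_i(\theta^i_{t+1}) \Vert$, which is where all remaining work happens.

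Second, I would insert $\nabla R_i(\theta_t)$ into that term and use $\beta$-smoothness (Assumption~\ref{assump_Lip-smooth}) to control the increment, giving $\mathbb{E}\Vert \nabla R_i(\theta^i_{t+1}) \Vert \le \mathbb{E}\Vert \nabla R_i(\theta_t) \Vert + \beta\,\mathbb{E}\Vert \theta^i_{t+1} - \theta_t \Vert$. Third, I would insert the global gradient $\nabla R(\theta_t)$ into $\mathbb{E}\Vert \nabla R_i(\theta_t) \Vert$ and invoke Lemma~\ref{lmm_TV-bound} to bound the heterogeneity gap $\mathbb{E}\Vert \nabla R_i(\theta_t) - \nabla R(\theta_t) \Vert$ by $2LD_i$. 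At this stage every term is expressed through $\sigma$, $2LD_i$, $\mathbb{E}\Vert \nabla R(\theta_t) \Vert$, and the single remaining drift quantity $\mathbb{E}\Vert \theta^i_{t+1} - \theta_t \Vert$.

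The one genuinely non-convex step is the last one: instead of the convex drift bound of Lemma~\ref{lmm_drift-FedProx-convex}, I would apply Lemma~\ref{lmm_drift-FedProx-nonconvex}, which under $\nabla^2_\theta l \succ -\mu I$ and $\eta_i \le 1/\mu$ yields $\mathbb{E}\Vert \theta^i_{t+1} - \theta_t \Vert \le \frac{\eta_i}{1-\eta_i\mu}\bigl(2LD_i + \mathbb{E}\Vert \nabla R(\theta_t) \Vert + \sigma\bigr)$. Substituting this and collecting the common factor $\bigl(2LD_i + \mathbb{E}\Vert \nabla R(\theta_t) \Vert + \sigma\bigr)$ produces the coefficient $1 + \frac{\beta\eta_i}{1-\eta_i\mu}$, which is exactly the claimed bound.

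I do not expect any substantive obstacle here; the computation is a direct recalculation of the convex case with the drift constant $\eta_i$ upgraded to $\tfrac{\eta_i}{1-\eta_i\mu}$. The only point requiring care is well-definedness: since we are outside the convex regime, the proximal subproblem is only strongly convex after accounting for the weak-convexity constant $\mu$, so one must retain the standing requirement $\eta_i < 1/\mu$ (carried over from Lemma~\ref{lmm_drift-FedProx-nonconvex}) to guarantee $1 - \eta_i\mu > 0$, keeping the drift bound and the resulting coefficient finite and positive.
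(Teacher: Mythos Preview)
Your proposal is correct and matches the paper's proof essentially step for step: triangle inequality with Assumption~\ref{assump_bounded-grad-var} to peel off $\sigma$, smoothness to pass from $\theta^i_{t+1}$ to $\theta_t$, Lemma~\ref{lmm_TV-bound} for the $2LD_i$ heterogeneity term, and finally Lemma~\ref{lmm_drift-FedProx-nonconvex} for the drift with the $\tfrac{\eta_i}{1-\eta_i\mu}$ factor. Your remark about needing $\eta_i\mu<1$ for well-definedness is the one extra bit of care, and it is appropriate.
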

\begin{proof}
    For any $i \in [m]$ and time $t$,
    \begin{eqnarray}
        \mathbb{E}\Vert \nabla l(\theta^i_{t+1}; z_{i,j}) \Vert &\le& \mathbb{E}\Vert \nabla l(\theta^i_{t+1}; z_{i,j}) - \nabla R_i(\theta^i_{t+1}) \Vert + \mathbb{E}\Vert \nabla R_i(\theta^i_{t+1}) \Vert   \nonumber   \\
        &\le& \mathbb{E}\Vert \nabla R_i(\theta^i_{t+1}) \Vert + \sigma \nonumber   \\
        &\le& \mathbb{E}\Vert \nabla R_i(\theta_t) \Vert + \mathbb{E}\Vert \nabla R_i(\theta^i_{t+1}) - \nabla R_i(\theta_t) \Vert + \sigma \nonumber   \\
        &\le& \beta \mathbb{E}\Vert \theta^i_{t+1} - \theta_t \Vert + \mathbb{E}\Vert \nabla R_i(\theta_t) - \nabla R(\theta_t) \Vert + \mathbb{E}\Vert \nabla R(\theta_t) \Vert + \sigma   \nonumber   \\
        &\le& (1 + \frac{\beta \eta_i}{1 - \eta_i \mu}) (2LD_i + \mathbb{E}\Vert \nabla R(\theta_t) \Vert + \sigma),   \nonumber
    \end{eqnarray}
    where we use Lemma \ref{lmm_TV-bound} and Lemma \ref{lmm_drift-FedProx-nonconvex} in the last step.
\end{proof}

\begin{lemma}   \label{lmm_prox-nonconvex}
    Suppose $f$ is non-convex, whose eigenvalues of its Hessian are lower bounded by $-\mu$ with $0<\mu < 1$. Define the proximal operator by 
    $$
        \mathrm{prox}_{f}(x) := \arg\min_{y} f(y) + \frac{1}{2}\Vert y - x \Vert^2.
    $$
    Then, for any $x_1$, $x_2$, we have
    $$
        \Vert \mathrm{prox}_{f}(x_1) - \mathrm{prox}_{f}(x_2) \Vert \le \frac{1}{1 - \mu}\Vert x_1 - x_2 \Vert.
    $$
\end{lemma}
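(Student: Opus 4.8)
The plan is to follow the same two-point argument used for the convex proximal operator in Lemma~\ref{lmm_prox-convex}, replacing the monotonicity of $\nabla f$ by the weaker ``almost monotonicity'' that survives when the Hessian is only bounded below by $-\mu$. First I would check that the proximal map is even well defined: since the eigenvalues of $\nabla^2 f(y)$ are at least $-\mu$ and $\mu < 1$, the objective $y \mapsto f(y) + \frac{1}{2}\Vert y - x \Vert^2$ has Hessian at least $(1-\mu)I \succ 0$, so it is $(1-\mu)$-strongly convex and admits a unique minimizer; this is what makes $\mathrm{prox}_f$ single-valued and the statement meaningful. Writing $u_1 = \mathrm{prox}_f(x_1)$ and $u_2 = \mathrm{prox}_f(x_2)$, the first-order optimality conditions read $\nabla f(u_1) + u_1 - x_1 = 0$ and $\nabla f(u_2) + u_2 - x_2 = 0$, equivalently $\nabla f(u_i) = x_i - u_i$.

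The key ingredient is the inequality
\begin{equation*}
\langle \nabla f(u_1) - \nabla f(u_2), u_1 - u_2 \rangle \ge -\mu \Vert u_1 - u_2 \Vert^2 ,
\end{equation*}
which I would obtain by writing the gradient difference as the integral $\int_0^1 \nabla^2 f(u_2 + t(u_1-u_2))(u_1-u_2)\,dt$ and using the curvature lower bound $\nabla^2 f \succeq -\mu I$ inside the integrand. (Equivalently, this is just the statement that $f + \frac{\mu}{2}\Vert\cdot\Vert^2$ is convex, so $\nabla f$ is $(-\mu)$-monotone.) Substituting $\nabla f(u_i) = x_i - u_i$ into the left-hand side turns it into $\langle x_1 - x_2, u_1 - u_2 \rangle - \Vert u_1 - u_2 \Vert^2$, so the inequality becomes $\langle x_1 - x_2, u_1 - u_2 \rangle - \Vert u_1 - u_2 \Vert^2 \ge -\mu \Vert u_1 - u_2 \Vert^2$.

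Rearranging gives $(1-\mu)\Vert u_1 - u_2 \Vert^2 \le \langle x_1 - x_2, u_1 - u_2 \rangle \le \Vert x_1 - x_2 \Vert\,\Vert u_1 - u_2 \Vert$, where the last step is Cauchy--Schwarz. Since $0 < \mu < 1$ ensures $1 - \mu > 0$, dividing through by $(1-\mu)\Vert u_1 - u_2 \Vert$ (the case $u_1 = u_2$ being trivial) yields the claimed bound $\Vert u_1 - u_2 \Vert \le \frac{1}{1-\mu}\Vert x_1 - x_2 \Vert$. The argument is almost entirely parallel to the convex case, so I do not expect a genuine obstacle; the one place that requires care is the curvature step, namely justifying the weak-monotonicity inequality from the Hessian lower bound and tracking the role of the hypothesis $\mu < 1$, which is precisely what guarantees both the well-posedness of the prox and the positivity of the factor $1-\mu$ that we divide by at the end.
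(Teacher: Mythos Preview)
Your proposal is correct and follows essentially the same approach as the paper: set $u_i=\mathrm{prox}_f(x_i)$, use the first-order optimality conditions $\nabla f(u_i)+u_i-x_i=0$, apply the weak monotonicity $\langle \nabla f(u_1)-\nabla f(u_2),u_1-u_2\rangle\ge -\mu\Vert u_1-u_2\Vert^2$ from the Hessian lower bound, and finish with Cauchy--Schwarz. Your added remarks on well-posedness of the prox and the trivial case $u_1=u_2$ are extra care the paper omits, but the core argument is the same.
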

\begin{proof}
    Let $u_1 = \mathrm{prox}_f(x_1)$ and $u_2 = \mathrm{prox}_f(x_2)$. According to the first-order optimality condition, we have
    \begin{eqnarray}
        \nabla f(u_1) +  u_1 - x_1  &=& 0 \nonumber   \\
        \nabla f(u_2) +  u_2 - x_2  &=& 0   \nonumber
    \end{eqnarray}
    Since $\nabla^2 f$ has eigenvalues greater than $-\mu$, we further have
    \begin{eqnarray*}
        -\mu \Vert u_1 - u_2 \Vert^2 &\le& \langle \nabla f(u_1) - \nabla f(u_2), u_1 - u_2 \rangle    \\
        &=& \langle x_1 - u_2 - (x_2 - u_2), u_1 - u_2 \rangle  \\
        &=& \langle x_1 - x_2, u_1 - u_2 \rangle - \Vert u_1 - u_2 \Vert^2
    \end{eqnarray*}
    and hence
    $$
        (1-\mu)\Vert u_1 - u_2 \Vert^2 \le \langle x_1 - x_2, u_1 - u_2 \rangle \le \Vert x_1 - x_2 \Vert \Vert u_1 - u_2 \Vert
    $$
    which means
    $$
        \Vert u_1 - u_2 \Vert \le \frac{1}{1 - \mu}\Vert x_1 - x_2 \Vert .
    $$
\end{proof}

\begin{theorem}[FedProx part of Theorem \ref{thm_gen-nonconvex}]
    Suppose Assumptions \ref{assump_Lip-continuous}-\ref{assump_Lip-smooth} hold and consider FedProx (Algorithm \ref{alg_FedProx}). Assume that all eigenvalues of the Hessian of $l(\cdot;z)$ are strictly greater than $-\mu$ with $\mu > 0$ for any $z$. With $\eta_i \le \frac{\delta_t}{\mu}$ for $0<\delta < 1$ being diminishing at the order of $\mathcal{O}(c/t)$ (where $c>0$). Then, 
    \begin{align*}
        \epsilon_{gen} \le \tilde{\mathcal{O}}\Big( \frac{T^c}{n} D_{max} \Big) + \mathcal{O}\Big( \big( \Delta_0 \tilde{D} \big)^{\frac{1}{2}} \frac{T^{\frac{3}{4} + c}}{n} + \sqrt{\Delta_0} \frac{T^{\frac{1}{2} + c}}{n} \Big) + \tilde{\mathcal{O}}\Big( \frac{T^c}{n}\sigma \Big),
    \end{align*}
    where $\Delta_0 := \mathbb{E}[R(\theta_0) - R(\theta^*)]$.
\end{theorem}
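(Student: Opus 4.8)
The plan is to mirror the convex FedProx stability analysis (Theorem \ref{thm_gen-FedProx-convex}), but to replace the \emph{exact} non-expansiveness of the proximal operator (Lemma \ref{lmm_prox-convex}) with its weakly-expansive non-convex counterpart (Lemma \ref{lmm_prox-nonconvex}), and then feed the resulting model-difference bound into the stability-to-generalization reduction (Theorem \ref{thm_stability-gen}) together with the convergence estimates of Theorem \ref{thm_convergence}. First I would rewrite each local step as $\theta^i_{t+1} = \mathrm{prox}_{\eta_i \hat{R}_{\mathcal{S}_i}}(\theta_t)$. Since every $\nabla^2 l(\cdot;z) \succ -\mu I$, the map $\eta_i \hat{R}_{\mathcal{S}_i}$ has Hessian bounded below by $-\eta_i\mu$, so Lemma \ref{lmm_prox-nonconvex} applies with contraction factor $\frac{1}{1-\eta_i\mu}$, which is well-defined and finite because $\eta_i\mu \le \delta_t < 1$.

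Next I would split into the two familiar cases. For an unperturbed client $j\ne i$, the weak non-expansiveness gives directly $\mathbb{E}\Vert \theta^j_{t+1}-(\theta^j_{t+1})'\Vert \le \frac{1}{1-\eta_j\mu}\,\mathbb{E}\Vert \theta_t-\theta'_t\Vert$. For the perturbed client $i$, I would repeat the first-order-optimality computation of Theorem \ref{thm_gen-FedProx-convex}, but replace the monotonicity inequality $\langle \nabla \hat{R}_i(u_1)-\nabla \hat{R}_i(u_2), u_1-u_2\rangle \ge 0$ with the relaxed bound $\ge -\mu\Vert u_1-u_2\Vert^2$ coming from the Hessian lower bound; this picks up the same factor $\frac{1}{1-\eta_i\mu}$ multiplying both $\Vert\theta_t-\theta'_t\Vert$ and the single-sample term $\frac{\eta_i}{n_i}\Vert\nabla l(\theta^i_{t+1};z_{i,j})-\nabla l(\theta^i_{t+1};z'_{i,j})\Vert$, the latter controlled by Lemma \ref{lmm_grad-bnd-FedProx-nonconvex}. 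Aggregating with $p_j=n_j/n$ yields the one-round recursion $\mathbb{E}\Vert\theta_{t+1}-\theta'_{t+1}\Vert \le \frac{1}{1-\delta_t}\,\mathbb{E}\Vert\theta_t-\theta'_t\Vert + \frac{2}{n}\,\frac{\eta_i(1+\frac{\beta\eta_i}{1-\eta_i\mu})}{1-\eta_i\mu}\bigl(2LD_i+\mathbb{E}\Vert\nabla R(\theta_t)\Vert+\sigma\bigr)$.

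I would then unroll this recursion from $\theta_0=\theta'_0$, which produces the accumulated amplification $\prod_{s=t+1}^{T-1}\frac{1}{1-\delta_s}$ in front of the round-$t$ term. Using $-\log(1-x)\le \frac{x}{1-x}$ and $\delta_s = \mathcal{O}(c/s)$, this product is bounded as $\log\prod_{s=t+1}^{T-1}\frac{1}{1-\delta_s}\le \sum_{s=t+1}^{T-1}\frac{\delta_s}{1-\delta_s}\lesssim c\log\frac{T}{t+1}$, so the amplification is $\tilde{\mathcal{O}}(T^c)$. Finally, applying Theorem \ref{thm_stability-gen} to pass from $L\,\mathbb{E}\Vert\theta_T-\theta'_T\Vert$ to $\epsilon_{gen}$, I would separate the three resulting sums: the heterogeneity sum $\frac{T^c}{n}\sum_t \eta_i (2LD_i)$ and the variance sum $\frac{T^c}{n}\sum_t \eta_i\sigma$ each inherit a $\log T$ from the harmonic sum $\sum_t\eta_i$, giving the $\tilde{\mathcal{O}}(T^c D_{max}/n)$ and $\tilde{\mathcal{O}}(T^c\sigma/n)$ terms; the convergence sum $\frac{T^c}{n}\sum_t \eta_i(1+\beta\eta_i)\mathbb{E}\Vert\nabla R(\theta_t)\Vert$ is handled exactly as in the FedProx part of Theorem \ref{thm_convergence} (Cauchy--Schwarz plus the convergence rate \eqref{eq_con-FedProx}), yielding $\mathcal{O}\bigl((\Delta_0\tilde{D})^{1/2}T^{3/4}+\sqrt{\Delta_0}T^{1/2}\bigr)$, which when multiplied by $T^c$ gives the stated $T^{3/4+c}$ and $T^{1/2+c}$ exponents.

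The main obstacle is controlling the product of expansion factors $\prod_s \frac{1}{1-\delta_s}$. Unlike the convex case, where the prox is genuinely non-expansive and a single perturbation neither grows nor shrinks across rounds, here each round can \emph{amplify} the model difference, so without a carefully diminishing stepsize the product would blow up exponentially in $T$. The choice $\eta_i\mu\le\delta_t=\mathcal{O}(c/t)$ is precisely what forces the telescoped logarithm into a harmonic sum $c\log(T/(t+1))$, keeping the amplification polynomial at $\tilde{\mathcal{O}}(T^c)$; verifying that this amplification then composes cleanly with the already $T$-dependent convergence bound—so the exponents add to $3/4+c$ and $1/2+c$ rather than producing a larger power—is the delicate bookkeeping step that must be checked against the stepsize constraints.
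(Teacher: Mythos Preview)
Your proposal is correct and follows essentially the same route as the paper: rewrite the local step as a prox, use the weakly-expansive bound of Lemma \ref{lmm_prox-nonconvex} for unperturbed clients, redo the first-order-optimality argument with the relaxed monotonicity $\langle\nabla\hat R_i(u_1)-\nabla\hat R_i(u_2),u_1-u_2\rangle\ge -\mu\|u_1-u_2\|^2$ for the perturbed client, aggregate to the one-round recursion with growth factor $\frac{1}{1-\delta_t}$, unroll so that $\delta_t=\mathcal{O}(c/t)$ turns the product of growth factors into $\tilde{\mathcal{O}}(T^c)$, and finish via Theorems \ref{thm_stability-gen} and \ref{thm_convergence}. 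If anything, your bookkeeping is slightly cleaner than the paper's: you correctly note that the $\frac{1}{1-\eta_i\mu}$ factor multiplies \emph{both} the drift term and the single-sample perturbation term (the paper's display drops it from the latter, which is harmless at the $\mathcal{O}$ level), and you make explicit the harmonic-sum argument $\sum_s\frac{\delta_s}{1-\delta_s}\lesssim c\log T$ that the paper compresses into a single line.
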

\begin{proof}
    Denoting $\mathrm{prox}_f(x) := \arg\min_{y} f(y) + \frac{1}{2}\Vert y-x \Vert^2$, we can rewrite the local update \eqref{eq_FedProx-update} as
    $$
        \theta^i_{t+1} = \mathrm{prox}_{\eta_i \hat{R}_{\mathcal{S}_i}} (\theta_t).
    $$
    There are two different cases for local updates. For client $j$ with $j\ne i$, we note $\hat{R}_{\mathcal{S}_j}(\cdot) = \hat{R}_{\mathcal{S}'_j}(\cdot)$ in the sense that there is no perturbation for client $j$. In this case, using Lemma \ref{lmm_prox-nonconvex} we obtain
    \begin{eqnarray}
        \Vert \theta^i_{t+1} - (\theta^i_{t+1})' \Vert &=& \Vert \mathrm{prox}_{\eta_i \hat{R}_{\mathcal{S}_i}} (\theta_t) - \mathrm{prox}_{\eta_i \hat{R}_{\mathcal{S}_i}} (\theta'_t) \Vert  \nonumber    \\
        &\le& \frac{1}{1 - \eta_i \mu}\Vert \theta_t - \theta'_t \Vert  \nonumber  \\
        &\le& \frac{1}{1 - \delta}\Vert \theta_t - \theta'_t \Vert  \nonumber
    \end{eqnarray}

    For client $i$, we note that $\hat{R}_i(\cdot) - \hat{R}'_i(\cdot) = \frac{1}{n_i}( l(\cdot;z_{i,j}) - l(\cdot;z'_{i,j}))$, where $z'_{i,j}$ is the perturbed data point. And we also use $\hat{R}_i$ and $\hat{R}'_i$ to represent $\hat{R}_{\mathcal{S}_i}$ and $\hat{R}_{\mathcal{S}'_i}$ for simplicity. Then, we have
\begin{eqnarray}
    \theta_{t+1}^i &=& \arg\min_{\theta} \eta_i \hat{R}_i(\theta) + \frac{1}{2}\Vert \theta - \theta_t \Vert^2  \nonumber   \\
    (\theta_{t+1}^i)' &=& \arg\min_{\theta} \eta_i \hat{R}'_i(\theta) + \frac{1}{2}\Vert \theta - \theta'_t \Vert^2 . \nonumber
\end{eqnarray}
According to the first-order optimality condition, it yields
\begin{eqnarray}
    \theta_{t+1}^i - \theta_t &=& -\eta_i \nabla \hat{R}_i(\theta_{t+1}^i)  \nonumber   \\
    (\theta_{t+1}^i)' - \theta'_t &=& -\eta_i \nabla \hat{R}'_i((\theta_{t+1}^i)')  \nonumber   \\
    &=& -\eta_i \nabla \hat{R}_i((\theta_{t+1}^i)') + \frac{\eta_i}{n_i}\left( \nabla l((\theta_{t+1}^i)';z_{i,j}) - \nabla l((\theta_{t+1}^i)';z'_{i,j}) \right) . \nonumber
\end{eqnarray}
Moreover, by the techniques used in Lemma \ref{lmm_prox-nonconvex},
\begin{eqnarray}
    \Vert (\theta_{t+1}^i)' - \theta_{t+1}^i \Vert^2 &\le& \langle \theta'_t - \theta_t, (\theta_{t+1}^i)' - \theta_{t+1}^i \rangle - \eta_i \langle \nabla \hat{R}_i((\theta_{t+1}^i)') - \nabla \hat{R}_i(\theta_{t+1}^i), (\theta_{t+1}^i)' - \theta_{t+1}^i \rangle     \nonumber   \\
    && + \frac{\eta_i}{n_i} \langle (\theta_{t+1}^i)' - \theta_{t+1}^i, \nabla l((\theta_{t+1}^i)'; z_{i,j}) - \nabla l(\theta_{t+1}^i; z'_{i,j}) \rangle   \nonumber   \\
    &\le& \langle \theta'_t - \theta_t, (\theta_{t+1}^i)' - \theta_{t+1}^i \rangle + \frac{\eta_i}{n_i} \langle (\theta_{t+1}^i)' - \theta_{t+1}^i, \nabla l((\theta_{t+1}^i)'; z_{i,j}) - \nabla l(\theta_{t+1}^i; z'_{i,j}) \rangle   \nonumber  \\
    && + \eta_i \mu \Vert (\theta_{t+1}^i)' - \theta_{t+1}^i \Vert
\end{eqnarray}
which further implies by symmetry of $z_{i,j}$ and $z'_{i,j}$,
\begin{eqnarray}
    \Vert (\theta_{t+1}^i)' - \theta_{t+1}^i \Vert &\le& \frac{1}{1 - \eta_i \mu}\Vert \theta'_t - \theta_t \Vert + \frac{\eta_i}{n_i} \Vert \nabla l(\theta_{t+1}^i; z_{i,j}) - \nabla l(\theta_{t+1}^i; z'_{i,j}) \Vert    \nonumber  \\
    &\le& \frac{1}{1 - \delta}\Vert \theta'_t - \theta_t \Vert + \frac{\eta_i}{n_i} \Vert \nabla l(\theta_{t+1}^i; z_{i,j}) - \nabla l(\theta_{t+1}^i; z'_{i,j}) \Vert  \nonumber
\end{eqnarray}
where we also use Cauchy-Schwatz inequality.

Combining two cases gives
\begin{eqnarray}
    \mathbb{E}\Vert \theta_{t+1} - \theta'_{t+1} \Vert &\le& \sum_{j=1}^m p_j \mathbb{E}\Vert \theta^j_{t+1} - (\theta^j_{t+1})' \Vert  \nonumber  \\
    &\le& \frac{1}{1 - \delta}\mathbb{E}\Vert \theta_t - \theta'_t \Vert + \frac{\eta_i}{n} \mathbb{E}\Vert \nabla l(\theta_{t+1}^i; z_{i,j}) - \nabla l(\theta_{t+1}^i; z'_{i,j}) \Vert , \nonumber  \\
    &\le& \frac{1}{1 - \delta}\mathbb{E}\Vert \theta_t - \theta'_t \Vert + \frac{2\eta_i}{n} \mathbb{E}\Vert \nabla l(\theta_{t+1}^i; z_{i,j})\Vert \nonumber   \\
    &\le& \frac{1}{1 - \delta}\mathbb{E}\Vert \theta_t - \theta'_t \Vert + \frac{2\eta_i}{n} (1 + \frac{\beta \delta}{(1-\delta)\mu}) (2LD_i + \mathbb{E}\Vert \nabla R(\theta_t) \Vert + \sigma) , \nonumber
\end{eqnarray}
where we use Lemma \ref{lmm_grad-bnd-FedProx-nonconvex} in the last step. Define $\tau := \frac{\delta}{1 - \delta}$. Then,
unrolling it over $t$ we obtain
\begin{eqnarray}    \label{eq_theta-iter-FedProx-nonconvex}
    \mathbb{E}\Vert \theta_T - \theta'_T \Vert \le T^c \frac{2}{n}\sum_{t=0}^{T-1}\eta_i (1 + \beta\tau/\mu)(2LD_i + \mathbb{E}\Vert \nabla R(\theta_t) \Vert + \sigma) .
\end{eqnarray}
Finally, based on \eqref{eq_theta-iter-FedProx-nonconvex}, combining Theorem \ref{thm_stability-gen} and using the proof techniques in Theorem \ref{thm_convergence}, we complete the proof.
\end{proof}

\section{Code of the experiments}
The implementation of the experiments in Section \ref{sec_experiments} are based on \cite{platform} and can be found through the following link: \href{https://github.com/fedcodexx/Generalization-of-Federated-Learning}{https://github.com/fedcodexx/Generalization-of-Federated-Learning}.

\end{document}